\newtheorem{theorem}{Theorem}[section]
\newtheorem{corollary}{Corollary}[section]
\theoremstyle{definition}
\newtheorem{definition}{Definition}[section]
\newtheorem{lemma}{Lemma}[section]
\newcommand{\CA}{\mathcal{A}}
\newcommand{\R}{\mathcal{R}}
\newcommand{\sign}{\mathrm{sign}}
\newcommand{\maj}{\mathrm{maj}}
\newcommand{\Bin}{\mathrm{Bin}}
\newcommand{\E}{\mathrm{E}}
\newcommand{\Pb}{\mathrm{P}}
\newcommand{\Cov}{\mathrm{Cov}}
\newcommand{\Var}{\mathrm{Var}}
\newcommand{\cov}{\mathrm{Cov}}
\newcommand{\I}{\mathrm{I}}
\title{Noise Sensitivity and Stability of Deep Neural Networks for Binary Classification}
\author{Johan Jonasson
        \texttt{jonasson@chalmers.se} \\ 
        Chalmers University of Technology and University of Gothenburg \\
        Gothenburg, Sweden \\
        \\
        Jeffrey E. Steif
        \texttt{steif@chalmers.se} \\ 
        Chalmers University of Technology and University of Gothenburg \\
        Gothenburg, Sweden \\
        \\
        Olof Zetterqvist 
        \texttt{olofze@chalmers.se} \\ 
        Chalmers University of Technology and University of Gothenburg \\
        Gothenburg, Sweden
        }
\author{Johan Jonasson \thanks{jonasson@chalmers.se}
        \and
        Jeffrey E. Steif \thanks{steif@chalmers.se}
        \and
        Olof Zetterqvist \thanks{olofze@chalmers.se} \and \\ Chalmers University of Technology and University of Gothenburg,
        Gothenburg, Sweden
        }
\date{}
\begin{document}

\maketitle

\begin{abstract}
A first step is taken towards understanding often observed non-robustness phenomena of deep neural net (DNN) classifiers. This is done from the perspective of Boolean functions by asking if certain sequences of Boolean functions represented by common DNN models are noise sensitive or noise stable, concepts defined in the Boolean function literature. Due to the natural randomness in DNN models, these concepts are extended to annealed and quenched versions. Here we sort out the relation between these definitions and investigate the properties of two standard DNN architectures, the fully connected and convolutional models, when initiated with Gaussian weights. \\ \\
{\bf Keywords:} Boolean functions, Noise stability, Noise sensitivity, Deep neural networks, Feed forward neural networks 
\end{abstract}

\section{Introduction} \label{sec:introduction}

The driving question of this paper is how robust a typical binary neural net classifier is to input noise, i.e.\ for a typical neural net classifier and a typical input, will tiny changes to that input make the classifier change its mind? When asking this, we take inspiration from phenomena observed for deep neural networks (DNN) used in practice and use that inspiration to give mathematically rigorous answers for some simple DNN models under one (of several possible) reasonable interpretations of the question. It is not a prerequisite for the reader to be familiar with DNNs to find the topic interesting and any Machine Learning lingo will be explained shortly. 

DNNs have shown results that range from good to staggering in many different data-driven areas, e.g.\ for prediction and classification. One of many reasons for this is that with sufficiently large models, neural networks can approximate any function \cite{cybenko1989approximation}. 
However, there is much to be discovered about these black box models, two concerns being about robustness and optimal model design. Studies have shown that DNNs are vulnerable to various attacks, where adding small noise to inputs can lead to significant differences in the output \cite{goodfellow2014explaining,koh2017understanding}. For example, an image that is clearly of a fish which a DNN classifier also strongly believes is a fish can be such that only changing it by a tiny amount of random noise suddenly makes the classifier assign high probability to that it is now a dog.
This raises the question of how stable DNN models tend to be under small perturbations such as these.
Obviously, any non-trivial classification function must have the property that for some input, only a tiny amount of change leads to a different output. However, how typical is an input $\omega$ such that the output $f(\omega)$ changes from tiny amounts of change in $\omega$? In this formulation, one can clearly interpret the word ``typical'' in many different ways and also consider many different ways of defining what a tiny change is. 

\smallskip

To take some small but rigorous steps towards answers, we will in this paper focus on the setting where the input into the DNN is a vector $\omega$ of binary bits: $\omega \in \{-1,1\}^n$, and the output is binary classification, $f(\omega) \in \{-1,1\}$. 

This point of view is not new and can be seen in Boolean networks. Here some research has been done for different noise settings, but to the best of our knowledge, it seems that none of it is close to what we propose, and little is strictly rigorous. To mention a few, \citep{peixoto2009noise} and \citep{xiao2007impact} work in a Boolean network setting. In terms of DNNs, one is then considering a recurrent neural network model with the exact same weight matrix at each layer. The first of these papers considers small changes in the input, and the second paper considers small changes in the network structure. Both conclude that the final output (i.e.\ the fixed point) is robust to these changes.

The perhaps most natural way to talk about a ``typical input'', and the one that we are going to adopt, is to consider an input generated at random from a given probability distribution on $\{-1,1\}^n$ (e.g.\ if we are considering one of the standard benchmark problems of classifying handwritten digits, we would e.g.\ consider inputting handwritten digits drawn from a probability distribution reflecting how people actually write digits). Then one asks what the probability is that the input is such that the DNN model changes its classification by changing the input in a tiny way.

\smallskip

\smallskip
 
We prove results for a fully connected DNN architecture with input noise which are valid for arbitrary probability distributions over the input and the noise (as long as the noise with high probability actually produces a change of at least one input bit). However, since the concepts studied are usually understood to assume uniform input distribution and pure noise, i.e.\ each input bit changes with some tiny predetermined probability independently over the bits, the presentation will be made under these conditions. However, in Sections \ref{sec:uncorrelated} and \ref{sec:correlated}, it becomes apparent that the sensitivity properties for fully connected DNN models hold under the most general conditions possible on the input distribution and the noise as will be commented on there. The later section's results rely however on uniform input and pure noise.

In summary, we intend to analyse robustness of DNNs to noise from the perspective of Boolean functions, i.e.\ to consider those feed-forward DNNs that represent functions with input in $\{-1,1\}^n$ and output in $\{-1,1\}$ and analyse how sensitive these are to small random noise to random input. 
Doing this, we find ourselves in, or at least very close to, the setting of the research field of noise sensitivity and noise stability of Boolean functions, concepts introduced in \citep{benjamini1999noise}. The standard references nowadays to these concepts are the textbooks \cite{garban2014noise,o2014analysis}. We will return with exact definitions and extensions of the concepts shortly, but in short, noise sensitivity means what we already said: a Boolean function $f$ is noise sensitive if for large $n$, $\omega \in \{-1,1\}^n$ uniformly random and $\eta$ that differs from $\omega$ by changing a tiny random amount of randomly chosen bits, then $f(\omega)$ and $f(\eta)$ are virtually uncorrelated. One says that $f$ is noise stable if such tiny changes are very unlikely to change the output of $f$. (Clearly, a rigorous definition must be in terms of asymptotics as $n \rightarrow \infty$).
The question in focus now becomes
\[\mbox{Is a Boolean function represented by a given DNN noise sensitive or noise stable?}\]

\smallskip

To further restrict the setting, the activation function will at all layers be assumed to be the sign function, and the linear transformation at a given node will always be without a bias term. Neither of these restrictions is common in practice, of course. Still, one can at least arguably claim that since the idea of feeding the output of a neuron into an activation function is to decide if that neuron fires or not, the sign function is ``the ideal'' activation (but, of course, not used in practice because of the difficulty in training). 

All in all, precisely and in a way that explains the DNN lingo, each model we consider will be such that there is a given Boolean function $h$, a given so called depth $T$ and given so called layer sizes $n_1,n_2,\ldots,n_T$. With those given, we are considering Boolean functions $f$ on Boolean input strings $\omega \in \{-1,1\}^n = \{-1,1\}^{n_0}$ that can be expressed by a choice of matrices $W_1,\ldots,W_T$, $W_t \in \mathbf{R}^{n_t \times n_{t-1}}$ for $t=1,\ldots,T$ and for $\omega=\omega_0 \in \{-1,1\}^n$ taking
\begin{equation} \label{eq:def_neural_net_part1}
\omega_t=\sign(W_t\omega_{t-1}), \, t=1,\ldots,T
\end{equation}
and
\begin{equation} \label{eq:def_neural_net_part2}
    f(\omega)=h(\omega_T)
\end{equation}
where the sign is taken point-wise. The most common $h$ is of the form $\sign(\mathbf{w}\,\omega_T)$ for some row vector $\mathbf{w}$, i.e.\ a prediction made from standard logistic regression on $\omega_T$. 

\smallskip

As already stated and from what is apparent from the restrictions made, i.e.\ Boolean input, sign activation functions, no bias terms and, in particular, uniform probability measure over inputs and noise, makes the setting fairly far removed from practical settings. Moreover, we will not consider models that have been fit to data in any way other than a loose motivation for one of our model choices when it comes to modelling randomness in the coefficients of the $W_t$-matrices. 

To consider the matrices as random is natural when taking inspiration from DNNs in practice, since when one trains the model to fit with data, i.e.\ minimise the loss function at hand, one usually starts the optimisation algorithm by taking the initial coefficients to be random, often i.i.d.\ normal. Also during training, further randomness is often brought into the picture by the use of stochastic gradient descent. In the end of course, the training algorithm converges, but since there are usually many local minima for the loss function, the randomness in the coefficients at the start makes it random which local maximum one converges to. Furthermore, convergence is almost never reached and this is on purpose, since it is common practice to use early stopping, i.e.\ stop training well before convergence, to avoid overfitting. As already declared though, we will not consider any data and model fitting, only observe that any training of course produces correlation between the random weight matrix components and suggest and analyse a tractable model of such correlation. 

Hence, in summary, we regard this paper as mostly a contribution to the field of noise sensitivity/stability of Boolean functions inspired by an interesting and important  phenomenon of DNN prediction models, rather than to applied machine learning. Nevertheless, it is a first step towards an understanding of the non-robustness phenomena of DNNs and, to our knowledge, the first strictly rigorous contribution.

\smallskip 

Observe that when considering the $W_t$'s as random, the precise predictor function $f$ that comes out of it is in itself a random object. This is not the case in the field of noise sensitivity/stability and one can now ask two different things: (i) will the predictor be noise sensitive when taking both the randomness in the predictor itself and the randomness in the input and the noise into account?, (ii) will the random predictor after the weights have been drawn, with high probability become noise sensitive in the usual sense? This leads us to extend the standard definitions of noise sensitivity and noise stability to also encompass these aspects.

\smallskip
 
The paper is structured as follows.
Section \ref{sec:sens/stab} focuses on the relevant concepts and states the relations between them. The remaining sections each focus on selected examples of models of the family of DNN architectures given by (\ref{eq:def_neural_net_part1}) and (\ref{eq:def_neural_net_part2}) with natural assumptions on the randomness of the weights.
\begin{itemize}
    \item[] {\bf Section \ref{sec:uncorrelated}:}  Fully connected DNN with $T_n$ layers of equal width $n$. All weights are assumed to be standard normal and mutually independent. This is a standard configuration of the DNN at the start of training. We prove that as soon as $\lim_{n \rightarrow \infty} T_n = \infty$, the weights will with very high probability be such that the resulting classifier is very strongly sensitive to perturbations of the input no matter the input distribution and noise distribution. If $T_n$ is bounded, the resulting DNN will produce a noise stable classifier.
    
    \item[] {\bf Section \ref{sec:correlated}:}  All weights are once again standard normal, but some of them are now correlated: the columns of each $W_t$ are multivariate normal with all correlation being $\rho_n$. The columns are mutually independent within and across the $W_t$'s. We show that with $\rho_n$ converging to $1$ sufficiently fast, the resulting DNN becomes noise stable with high probability. If $\rho_n$ converges to $1$ slowly enough, the resulting classifier is with high probability strongly sensitive to perturbations.
    
    \item[] {\bf Section \ref{sec:conv}:} $2k+1$-majority on ``$2k+1$-trees with overlaps'', i.e.\ the graph where the vertices/neurons in each generation share some children, see Figures \ref{fig:stride1}, \ref{fig:stride2}. 
    It is proved that if the number of children shared by two parents next to each other is $2k$ (corresponding to stride $s=1$), then the resulting Boolean function is noise stable, whereas if the number of children shared is less than $2k$ (stride $s \geq 2$), then we get a noise sensitive Boolean function. 

    These models are convolutional neural nets where, in machine learning language, each filter represents a regular majority of the input bits. It will be observed that the results easily extend to the case where the weights of the filters are random under the only condition that there is at least some chance that a filter represents regular majority. 
\end{itemize}

In the sections on the fully connected models, analysis of a certain Markov chain on $\{0,1,\ldots,n-1,n\}$, which is symmetric around $n/2$ and absorbs in $0$ and $n$, plays a central role. We believe that the structure of this Markov chain makes it interesting in its own right.

\section{Different notions of noise sensitivity and noise stability} \label{sec:sens/stab}

Let $\omega \in \{\pm 1\}^n$ be an i.i.d.\ ($1/2$,$1/2)$ Boolean row vector and let $\{f_n\}$ be a sequence of Boolean functions from $\{\pm 1\}^n $ to $\{\pm 1\}$. Additionally let $\omega^\epsilon$ be a Boolean row vector such that $\omega^\epsilon(i) = \omega(i)$ with probability $1 - \epsilon$ and $\omega^\epsilon(i) = -\omega(i)$ with probability $\epsilon$ independently for different $i$. View $\omega^\epsilon$ as a small perturbation of $\omega$. In this context, we can now define noise sensitivity and noise stability of a sequence of Boolean functions $f_n$. In \cite{benjamini1999noise} these are defined as

\begin{definition}
    \label{noise_sensitive}
    The sequence $\{f_n\}$ is \textbf{noise sensitive} if for every $0 < \epsilon \leq 1/2$, \[ \lim_{n \rightarrow \infty} \Cov\left(f_n(\omega),f_n(\omega^\epsilon)\right) = 0\]
\end{definition}
\begin{definition}
    \label{noise_stable}
    The sequence $\{f_n\}$ is \textbf{noise stable} if \[ \lim_{\epsilon \rightarrow 0} \sup_n \Pb(f_n(\omega) \neq f_n(\omega^\epsilon) ) = 0.\]
\end{definition}

The definition of noise stability is easily seen to be equivalent with the condition $\lim_{\epsilon \rightarrow 0} \limsup_n \Pb(f_n(\omega) \\ \neq f_n(\omega^\epsilon) ) = 0$. An example of a noise stable sequence is the weighted majority functions 

\[\maj_{\theta^{(n)}}(\omega) = \sign \left(\sum_{j=1}^n \theta_j^{(n)} \omega(j)\right),\]

where $\theta_1^{(n)},\hdots, \theta_n^{(n)}$ are arbitrary given constants
\cite{peres2021noise}. An example of a noise sensitive sequence is the parity functions \citep{garban2014noise}, 

\[\mathrm{par}(\omega) = \prod_{j=1}^n \omega_j.\]

For noise sensitivity, there is a more general and often much stronger concept, which will turn out to be interesting here since most fully connected and sufficiently deep neural nets will turn out to be noise sensitive in a very strong way. If a sequence of functions $\{f_n\}$ is noise sensitive as defined above, then one can always find a sequence $\epsilon_n \leq 1/2$ tending to $0$ with $n$ slowly enough such that
\[\lim_{n \rightarrow \infty} \Cov(f_n(\omega),f_n(\omega^{\epsilon_n})) = 0.\]
Since $\Cov(f_n(\omega),f_n(\omega^\epsilon))$ is well known to be decreasing in $\epsilon$ on $[0,1/2]$, the faster $\epsilon_n$ can be taken to decrease, the stronger the statement. This leads to the following definition,

\begin{definition}
    \label{QNS}
    Let $\epsilon_n \leq 1/2$ be non-increasing in $n$ with $\epsilon_n \rightarrow 0$. The sequence $\{f_n\}$ is \textbf{quantitatively noise sensitive (QNS) at level $\{\epsilon_n\}$} if, \[ \lim_{n \rightarrow \infty} \Cov\left(f_n(\omega),f_n(\omega^{\epsilon_n})\right) = 0.\]
\end{definition}

The definitions of noise sensitivity and stability are by now standard when describing properties of deterministic sequences of Boolean functions. However, when dealing with randomness within the functions themselves we need a more general definition. This more general setup occurs in  Boolean neural networks, where the network parameters $\Theta$ can be seen as random elements (usually depending on randomness in what training data is presented to the network and the initial values of the parameters before training). 
Let $\mathcal{F}_n$ be the set of all Boolean function from $\{\pm 1\}^n$ to $\{\pm 1\}$ and let $\pi_n$ be a arbitrary probability measure on $\mathcal{F}_n$. Recall that for $0 \leq \epsilon \leq 1/2$ and each function $f$, we have $\Cov_{\omega,\omega^\epsilon}(f(\omega),f(\omega^\epsilon)) \geq 0$, and hence $\Cov_{f,\omega,\omega^\epsilon}(f(\omega),f(\omega^\epsilon)) \geq 0$. We can then define both quenched and annealed versions of noise sensitivity and noise stability as follows.

\begin{definition}
    \label{quenched_noise_sensitive}
    $\pi_n$ is \textbf{quenched QNS at level $\{\epsilon_n\}$} if for every $\delta > 0$ and $0 < \epsilon_n \leq 1/2$, there is an $N$ such that for all $n \geq N$ 
    \[\pi_n\{f_n \: : \: \Cov_{\omega,\omega^\epsilon}(f_n(\omega),f_n(\omega^{\epsilon_n})) \leq \delta\} \geq 1 - \delta.\]
\end{definition}    

\begin{definition}
    \label{annealed_noise_sensitive}
    $\pi_n$ is \textbf{annealed noise QNS at level $\{\epsilon_n\}$} if for every $0 <\epsilon_n \leq 1/2$
    \[\lim_{n \rightarrow \infty} \Cov_{f_n,\omega,\omega^{\epsilon_n}}(f_n(\omega),f_n(\omega^{\epsilon_n})) = 0.\] 
\end{definition}

\begin{definition}
    \label{quenched_noise_stable}
    $\pi_n$ is \textbf{quenched noise stable} if for every $\delta$ there is an $\epsilon > 0$ such that for all $n$,
    \[\pi_n\{ f_n : \Pb_{\omega,\omega^\epsilon} (f_n(\omega) \neq f_n(\omega^\epsilon)) < \delta \} \geq 1 - \delta\]
\end{definition}    

\begin{definition}
    \label{annealed_noise_stable} 
    $\pi_n$ is \textbf{annealed noise stable} if
    \[\lim_{\epsilon \rightarrow 0} \sup_{n} \Pb_{f_n,\omega,\omega^\epsilon}(f_n(\omega)\neq f_n(\omega^\epsilon)) = 0\]
\end{definition}

This notion of quenched noise sensitivity has arisen elsewhere;
see \citep{ahlberg_2014,AHLBERG2016889,vanneuville2018quantitative,10.1214/21-EJP712}. Notice that if $\pi_n$ has support on only one Boolean function, then these definitions are equivalent to the usual ones in Definition \ref{noise_sensitive} and \ref{noise_stable}. In Theorem \ref{th:relations_between_annealed_and_quenched} we show which relations there are between these definitions.

\begin{theorem}

    \label{th:relations_between_annealed_and_quenched}
    Let $\mathcal{F}_n$ be the set of all Boolean functions on $\{-1,1\}^n \rightarrow \{-1,1\}$ and let $\pi_n$ be a probability measure on $\mathcal{F}_n$. Then the following are true
    
    \begin{enumerate}
        \item[(i)] $\{\pi_n\}$ is annealed QNS at level $\{\epsilon_n\}$ iff $\{\pi_n\}$ is quenched QNS at level $\{\epsilon_n\}$ and $\Var_{f_n}(\E_\omega[f_n(\omega)]) \rightarrow 0$ as $n \rightarrow \infty$.
        \item[(ii)] $\{\pi_n\}$ is annealed noise stable iff $\{\pi_n\}$ is quenched noise stable. 
    \end{enumerate}

\end{theorem}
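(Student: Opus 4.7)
The plan is to deduce both parts from a single identity: the law of total covariance applied to the joint distribution of $(f_n,\omega,\omega^{\epsilon_n})$. Conditioning first on $f_n$ and using that $\omega$ and $\omega^{\epsilon_n}$ have the same (uniform) marginal, so $\E_\omega[f_n(\omega)] = \E_{\omega^{\epsilon_n}}[f_n(\omega^{\epsilon_n})]$, I get
\[
\Cov_{f_n,\omega,\omega^{\epsilon_n}}\!\bigl(f_n(\omega),f_n(\omega^{\epsilon_n})\bigr)
= \E_{f_n}\!\bigl[\Cov_{\omega,\omega^{\epsilon_n}}\!\bigl(f_n(\omega),f_n(\omega^{\epsilon_n})\bigr)\bigr]
+ \Var_{f_n}\!\bigl(\E_\omega[f_n(\omega)]\bigr).
\]
Both summands on the right are nonnegative (the first because the quenched covariance at fixed $f_n$ is always $\ge 0$, the second as a variance), so the left side tends to $0$ if and only if both terms do. This already gives (i) once one observes that $\E_{f_n}[X_n]\to 0$ for a $[0,1]$-valued $X_n$ is equivalent to $X_n\to 0$ in $\pi_n$-probability, which in turn is the content of Definition \ref{quenched_noise_sensitive}. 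The ``$\delta$ serves double duty'' formulation there is just a compact rewrite of convergence in probability, as one sees by feeding a $\delta'<\delta$ into $\pi_n\{f_n:\Cov_{\omega,\omega^{\epsilon_n}}(f_n(\omega),f_n(\omega^{\epsilon_n}))>\delta'\}$.

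For (ii) the analogous, simpler decomposition is
\[
\Pb_{f_n,\omega,\omega^\epsilon}\!\bigl(f_n(\omega)\neq f_n(\omega^\epsilon)\bigr)
= \E_{f_n}\!\bigl[\Pb_{\omega,\omega^\epsilon}\!\bigl(f_n(\omega)\neq f_n(\omega^\epsilon)\bigr)\bigr].
\]
Writing $X_n(\epsilon)$ for the inner quenched probability, annealed stability says $\lim_{\epsilon\to 0}\sup_n \E_{f_n}[X_n(\epsilon)]=0$, while quenched stability says that for each $\delta>0$ there is $\epsilon>0$ with $\sup_n \pi_n\{X_n(\epsilon)\ge \delta\}<\delta$. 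For annealed $\Rightarrow$ quenched I would apply Markov's inequality: choose $\epsilon$ with $\sup_n \E_{f_n}[X_n(\epsilon)]<\delta^2$, whence $\pi_n\{X_n(\epsilon)\ge\delta\}<\delta$. For the reverse direction, I would split $\E_{f_n}[X_n(\epsilon)]\le \delta + \pi_n\{X_n(\epsilon)\ge\delta\}\le 2\delta$ after choosing $\epsilon$ small enough via quenched stability; taking $\delta\to 0$ gives annealed stability.

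The proof is largely formal and I do not expect a real obstacle; the only subtle point is reconciling the single-parameter formulation of the quenched definitions (where $\delta$ simultaneously bounds a probability and a level) with the cleaner statements of convergence in probability and of uniform-in-$n$ smallness of expectation. Once that translation is carried out once, both (i) and (ii) drop out from the identity above together with Markov's inequality in the one direction and the boundedness of quenched covariances/quenched mismatch probabilities by $1$ in the other.
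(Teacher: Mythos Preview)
Your proposal is correct and follows essentially the same route as the paper: the law of total covariance decomposition together with the observation that both summands are nonnegative for (i), and the tower property plus Markov's inequality/splitting for (ii). The paper carries out the $\delta$-bookkeeping explicitly where you invoke the equivalence $\E_{f_n}[X_n]\to 0 \Leftrightarrow X_n\to 0$ in probability for $[0,1]$-valued $X_n$, but this is the same argument.
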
   
\begin{proof}
To prove the first statement we use the conditional covariance formula that for any random variables $X,Y$ and $Z$
\[\cov(X,Y) = \E [\cov(X,Y|Z)] + {\cov(\E[X|Z],\E[Y|Z])}\]
to observe that 
\begin{equation}
\label{eq:covariance_formula}
\cov_{f,\omega,\omega^\epsilon}(f(\omega),f(\omega^{\epsilon_n})) = \E_f \left[\cov_{\omega,\omega^{\epsilon_n}}(f(\omega),f(\omega^{\epsilon_n})) \right]  + \cov_{f}(\E_\omega[f(\omega)],\E_{\omega^{\epsilon_n}}[f(\omega^{\epsilon_n})]).
\end{equation}
Now observe the following. First, the first term on the right is always non-negative. Secondly, since $\omega$ and $\omega^{\epsilon_n}$ are equal in distribution, the second term on the right is equal to $\Var_f(\E_\omega[f(\omega)])$. 

We now prove (i) starting with that quenched QNS at level $\{\epsilon_n\}$ and $\Var_f(\E_\omega[f_n(\omega)]) \rightarrow 0$ as $n \rightarrow \infty$ leads to annealed QNS at level $\{\epsilon_n\}$. Fix $\delta \in (0,1]$.
We know that there exists an $N$ such that for all $n > N$
\[\pi_n\{f : \Cov_{\omega,\omega^{\epsilon_n}}(f(\omega),f(\omega^{\epsilon_n})) \leq \frac{\delta}{4}\} \geq 1 - \frac{\delta}{4}\]
and
\[\Var_f(\E_\omega[f(\omega)]) < \frac{\delta}{2}.\]

Using (\ref{eq:covariance_formula}) this leads to
\[\cov_{f,\omega,\omega^{\epsilon_n}}(f(\omega),f(\omega^{\epsilon_n})) < \E_f\left[\frac{\delta}{4} \I_{\Cov_{\omega,\omega^{\epsilon_n}}(f(\omega),f(\omega^{\epsilon_n})) \leq \frac{\delta}{4}} + \I_{\Cov_{\omega,\omega^{\epsilon_n}}(f(\omega),f(\omega^{\epsilon_n})) > \frac{\delta}{4}}\right] + \frac{\delta}{2} < \delta\]
This proves the first direction of (i). For the other direction,
fix $\delta \in (0,1]$. 
Since $\pi_n$ is annealed QNS at level $\{\epsilon_n\}$ 
there exists an $N$ such that $ \forall n > N$
\[\cov_{f,\omega,\omega^{\epsilon_n}}(f(\omega),f(\omega^{\epsilon_n})) < \delta^2.\]
Now, using (\ref{eq:covariance_formula}) and the fact that $\cov_{\omega,\omega^{\epsilon_n}}(f(\omega),f(\omega^{\epsilon_n})) \geq 0$ for all $f$ it must be that 
\[\Var_f(\E_\omega[f(\omega)]) < \cov_{f,\omega,\omega^{\epsilon_n}}(f(\omega),f(\omega^{\epsilon_n})) < \delta^2 < \delta.\] 
Hence the variances converge to zero. Additionally, for such $n$ we have that $0 \leq \cov_{f,\omega,\omega^{\epsilon_n}}(f(\omega),f(\omega^{\epsilon_n})) \\ \leq \delta^2$. Now using Markov's inequality and (\ref{eq:covariance_formula}) we get 
\[\pi_n\{f : \cov_{\omega,\omega^{\epsilon_n}}(f(\omega),f(\omega^{\epsilon_n})) \geq \delta \} \leq \delta\]
which concludes (i). 

Next we prove (ii). We  start by showing that quenched noise stability leads to annealed noise stability.
    \noindent Fix $\delta > 0$. $\pi_n$ being quenched noise stable means that there exist an $\epsilon > 0$ such that for all $n$
    
    \[\pi_n\{ f : \Pb (f(\omega) \neq f(\omega^\epsilon)) < \frac{\delta}{2} \} \geq 1 - \frac{\delta}{2}.\]
    
    \noindent Hence 
    
    \begin{align*}
        &\Pb_{f,\omega,\omega^\epsilon}\left(f(\omega) \neq f(\omega^\epsilon)\right) = \E_f\left[\E_{\omega,\omega^\epsilon}[\I_{f(\omega) \neq f(\omega^\epsilon)}]\right] \leq \\
        &\E_f\left[\frac{\delta}{2} \I_{\E_{\omega,\omega^\epsilon}[I_{f(\omega) \neq f(\omega^\epsilon)}] \leq \frac{\delta}{2}} + \I_{\E_{\omega,\omega^\epsilon}[\I_{f(\omega) \neq f(\omega^\epsilon)}]> \frac{\delta}{2}}\right] < \delta.
    \end{align*}

    \noindent This proves the first part. Now we prove that annealed noise stable implies quenched noise stable.
    
    Fix $\delta > 0$ and pick an $\epsilon > 0$ sufficiently small such that $\Pb_{f,\omega,\omega^\epsilon}(f(\omega) \neq f(\omega^\epsilon)) < \delta^2$. Such an $\epsilon$ are guaranteed to exist since $\pi_n$ are annealed noise stable. Then due to Markov´s inequality \[\pi_n (f : \Pb_{\omega,\omega^\epsilon}(f(\omega) \neq f(\omega^\epsilon)) > \delta) < \delta\] which gives us quenched noise stable. This proves (ii).
\end{proof}

As seen from statement (ii), $\pi_n$ being annealed noise stable is equivalent to $\pi_n$ being quenched noise stable. Therefore we will further on only refer it as $\pi_n$ being noise stable.

\section{Random Boolean feed forward neural networks} \label{sec:FFNN}

In this section, we investigate a Boolean function structure $f(\omega,\Theta)$ inspired by feed-forward neural networks. Let $\omega_0 \in \{-1,1\}^n$ be the input bits as a column vector. Then we can recursively define $\omega_t = \sign(\theta_t \omega_{t-1})$ where $\theta_t \in R^{n \times n}$ and $\sign$ acts pointwise. In the deep learning literature, the $\theta$'s and $\mathrm{sign}$ would be referred to as the weights of $f$ and the activation function respectively. In the Neural Network literature, each $t$ corresponds to a layer where $\omega_t$ are seen as the bits, or nodes, at layer $t$. The iteration is done for $t=1,\ldots,T$ for some predetermined number $T=T_n$ giving us the final output $h(\omega_T)$ where $h=h_n$ is some Boolean function $\{-1,1\}^n \rightarrow \{-1,1\}$. A known fact is that a typical Neural Network can approximate any function to arbitrary accuracy as long as the amount of tunable parameters (a.k.a.\ weights), $\Theta = \{\theta_1, \hdots, \theta_T\}$, is large enough. In the Boolean setting, these models can still represent a huge number of Boolean functions. However, there are some limitations since no bias term is present in our Boolean Network, which typically is in Neural Networks in practice. Typically $\Theta$ is determined by some training algorithm based on the observation data, which maximises the likelihood of the model. This means that $\Theta$ is not deterministic since there is both randomness in the observed data and in the optimisation algorithm. Therefore we can consider a probability measure $\pi_n$ on all $f$ where the randomness comes from $\Theta$.

Here we consider cases where the $\theta_t$'s are independent and, for each $t$, the columns of $\theta_t$ are independent and jointly normal $N(0,\Sigma)$ with two different versions of $\Sigma$. These cases, which thus induce their respective measures $\{\pi_n\}$ on the set of Boolean function, are

\begin{enumerate}
    \item $\Sigma = \mathbf{I}_{n \times n}$.
    
    \item $\Sigma = \rho+(1-\rho)\mathbf{I}_{n \times n}$. A useful way to construct such $\theta_t$'s is to define $\theta_t(i,j) = \sqrt{\rho}\nu_t(j) + \sqrt{1 - \rho} \psi_t(i,j)$ where $\nu_t(i)$, $\psi_t(i,j) \sim N(0,1)$ all being independent. 
\end{enumerate}
The case $\rho =0$ represents a typical starting state for the network
before training. After some training, we expect the parameters of the network
to be dependent. The cases $\rho >0$ are examples of such dependence.
Of course, we do not expect that the true dependence after training
is represented in this way. Our assumptions should therefore be viewed
as a simplifying mathematical framework under which our theorems can
be proved. Somewhat related to this question of the behaviour of the parameters, in \cite{gripon2018improving}, a particular statistical structure for the values of intermediate layers in some convolution codes was discovered.
 
The following lemma is crucial.

\begin{lemma}
\label{th:tan_prop}
Let $x,y \in \{-1,1\}^n$ be column vectors such that $|i : x(i) \neq y(i)| = nv$ for some $v \in [0,1]$ and let $\theta$ be a random row vector such that $\theta \sim N(0,I_{n \times n})$. Then
\[\Pb\left( \sign(\theta x) \neq \sign(\theta y)\right) = \frac{2}{\pi} \arctan\left( \sqrt{\frac{v}{1-v}}\right)\]

\end{lemma}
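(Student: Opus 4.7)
The plan is to exploit the fact that $x$ and $y$ decompose into two orthogonal $\{-1,0,1\}$-vectors, and then use rotational invariance of the standard Gaussian. Define $a\in\mathbf{R}^n$ by $a(i)=x(i)$ when $x(i)=y(i)$ and $a(i)=0$ otherwise, and define $b\in\mathbf{R}^n$ by $b(i)=x(i)$ when $x(i)\neq y(i)$ and $b(i)=0$ otherwise. Then $x=a+b$, $y=a-b$, $\langle a,b\rangle=0$, $\|a\|^2=n(1-v)$ and $\|b\|^2=nv$.

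First I would set $U:=\theta a$ and $V:=\theta b$. Since the entries of $\theta$ are i.i.d.\ $N(0,1)$ and $a\perp b$, the random variables $U$ and $V$ are jointly Gaussian and uncorrelated, hence independent centred Gaussians with variances $n(1-v)$ and $nv$ respectively. Because $\theta x=U+V$ and $\theta y=U-V$, one has $\sign(\theta x)\neq\sign(\theta y)$ iff $(U+V)(U-V)<0$ iff $V^2>U^2$ iff $|V|>|U|$.

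Next I would normalise by setting $U':=U/\sqrt{n(1-v)}$ and $V':=V/\sqrt{nv}$, so that $U'$ and $V'$ are i.i.d.\ standard normal. The event $|V|>|U|$ becomes $|V'|/|U'|>\sqrt{(1-v)/v}$. A routine computation shows that the ratio of two independent standard normals is standard Cauchy with CDF $\tfrac{1}{2}+\tfrac{1}{\pi}\arctan(t)$, so $\Pb(|V'|/|U'|>t)=1-\tfrac{2}{\pi}\arctan(t)$. Applying the identity $\arctan(s)+\arctan(1/s)=\pi/2$ for $s>0$ with $s=\sqrt{v/(1-v)}$ converts this expression to $\tfrac{2}{\pi}\arctan(\sqrt{v/(1-v)})$, which is the claimed value.

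The only mildly nontrivial step is spotting the orthogonal decomposition at the outset; once it is in place the rest is a routine bivariate-Gaussian calculation and there is no real obstacle. An alternative, equally short presentation would instead note that $(\theta x,\theta y)$ is jointly centred Gaussian with both variances equal to $n$ and covariance $\langle x,y\rangle = n(1-2v)$, hence correlation $\rho=1-2v$, and then invoke Sheppard's formula $\Pb(\sign(X)\neq\sign(Y))=\tfrac{1}{\pi}\arccos(\rho)$; the half-angle identity $\arccos(1-2v)=2\arctan(\sqrt{v/(1-v)})$ then yields the same conclusion.
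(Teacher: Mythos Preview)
Your proof is correct and follows essentially the same route as the paper: both decompose $\theta x$ and $\theta y$ into independent Gaussians over the agreement and disagreement coordinates (your $U,V$ are exactly the paper's $\sqrt{n(1-v)}\,Y$ and $\sqrt{nv}\,X$) and reduce the question to $\Pb(|V|>|U|)$. The only cosmetic differences are that the paper motivates the inequality via a line-segment crossing and evaluates the resulting wedge probability by polar integration, whereas you go directly through the product $(U+V)(U-V)$ and invoke the Cauchy distribution (or Sheppard's formula); these are routine variations on the same computation.
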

\begin{proof}
    Let $C$ be the set of indices where $x$ and $y$ differ. Notice that $|C| = nv$. Consider the line segment between $x$ and $y$ defined as $\frac{y + x}{2} + \tau \frac{y - x}{2}$, $\tau \in [-1,1]$. Then $\sign(\theta x) \neq \sign(\theta y)$ if there is a solution to 
    \begin{equation}
    \theta^T \left(\frac{y + x}{2} + \tau \frac{y - x}{2} \right) = 0
    \end{equation}
    for some $\tau \in [-1,1]$.
    
    Now, let $A_{\Lambda,k} = \{j \: : \: j \in \Lambda, \: x(j) = k\}$ for some set $\Lambda$. Since both $x$ and $y$ are in the hypercube $\{-1,1\}^n$, the condition can be rewritten as 
    \begin{equation}
    \label{eq:tan_event}
    \left|\sum_{j \in A_{C^c,1}} \theta(j) - \sum_{j \in A_{C^c,-1}} \theta(j) \right| \leq \left|\sum_{j \in A_{C,1}} \theta(j) - \sum_{j \in A_{C,-1}} \theta(j)\right|.
    \end{equation}
    Define $X$ and $Y$ from the following equations \[\sqrt{nv}X = \sum_{j \in A_{C,1}}\theta(j) - \sum_{j \in A_{C,-1}}\theta(j)\] and \[\sqrt{n - nv}Y = \sum_{j \in A_{C^c,1}}\theta(j) - \sum_{j \in A_{C^c,-1}}\theta(j).\]
    Then it is easy to check that $X$ and $Y$ are standard independent normal and that the event in (\ref{eq:tan_event})
    can be rewritten as
    \[ -\sqrt{\frac{v}{1-v}}|X| \leq Y \leq \sqrt{\frac{v}{1-v}}|X|.\]
    Due to symmetry around $Y = 0$, this results in \[\Pb\left(-\sqrt{\frac{v}{1-v}}|X| \leq Y \leq \sqrt{\frac{v}{1-v}}|X| \right) = \int_{-\infty}^{\infty} \int_{-\sqrt{\frac{v}{1-v}}|x|}^{\sqrt{\frac{v}{1-v}}|x|}\frac{1}{2 \pi} e^{-(x^2 + y^2)/2} dydx \]\[
    = 2 \int_{-\arctan\left(\sqrt{\frac{v}{1-v}}\right)}^{\arctan\left(\sqrt{\frac{v}{1-v}}\right)} \int_{0}^{\infty}  \frac{r}{2 \pi} e^{-r^2/2} d\varphi dr = \int_{-\arctan\left(\sqrt{\frac{v}{1-v}}\right)}^{\arctan\left(\sqrt{\frac{v}{1-v}}\right)} \frac{1}{\pi} d\varphi= \frac{2}{\pi} \arctan\left(\sqrt{\frac{v}{1-v}}\right)\] where the second equality is due to the substitution $x = r \cos(\varphi)$ and $y = r \sin(\varphi)$. This concludes the proof.
\end{proof}

In both noise parameter settings $1$ and $2$, we have independence between the weights leading into a node. This means, according to Lemma \ref{th:tan_prop}, that given $\omega_{t-1}$ and $(\omega^\epsilon)_{t-1}$ differ at $nv$ bits the probability that $\omega_t(i)$ and $(\omega^\epsilon)_t(i)$ differ is $\frac{2}{\pi} \arctan(\sqrt{\frac{v}{1-v}})$ for all $i$. The difference between the two parameter settings is that in $2$ the output bits of a layer are usually correlated. 
Also, due to symmetry, the probability of a disagreement at a fixed point, i.e. $\omega_t(i) \neq (\omega^\epsilon)_t(i)$, at layer $t$ depends only on the number of disagreements between $\omega_{t-1}$ and $(\omega^\epsilon)_{t-1}$ and not where they disagree. This means that the number of disagreements at layer $t$, which we denote by $D_t=D_t^\epsilon = D_t^{\epsilon_n}$, can in both cases be seen as a Markov chain with $n+1$ states, where $D_t = 0$ and $D_t = n$ are absorbing states. Notice that $D_t$ depends on the initial noise $\epsilon$ since $D_0$ corresponds to the number of bit disagreements created by the initial noise. However, for the sake of lighter notation we will not have a specific suffix showing this dependence.  

\subsection{Uncorrelated networks} \label{sec:uncorrelated}
In the uncorrelated case, $D_t$ is binomially distributed according to $(D_t|D_{t-1} = nv) \sim \Bin(n,g(v))$ where $g(v) :=  \frac{2}{\pi}\arctan\left(\sqrt{\frac{v}{1-v}}\right)$. Note that if $\epsilon_n$ is of order $1/n$ or lower, then $\Pb(D_1=0)$ does not get to $0$ at the same time as $P(D_1>n/2) \rightarrow 0$, which by symmetry implies that there cannot be QNS at that level. Hence one must have at least $n\epsilon_n \rightarrow \infty$ for QNS to be possible. 

The following theorem almost entirely considers the sensitivity properties of $f_{n,T_n}$ and shows that under very mild assumptions for $T_n$ larger than a specified function of $n$, $\{f_{n,T_n}\}$ is QNS in this the strongest possible sense (in particular that $f_{n,T_n}$ is noise sensitive in the original sense as soon as $T_n \rightarrow \infty$). Indeed, since the distribution of $\theta_t$ is such that even if we know $\omega$ and $\omega^\epsilon$, any trace of that knowledge is forgotten after the first layer, $f_{n,T_n}$ has very strong sensitivity properties even for fixed input and fixed noise. What we mean precisely with this is formulated separately in Theorem \ref{th:stronger_noise_sensitivity_deep_networks}. Note in particular part (ii) implies that the input and noise distribution do not need to be uniform but can indeed be taken to reflect what is realistic in the application at hand, e.g.\ a distribution over images of real handwritten digits.

Parts of Theorem \ref{th:stronger_noise_sensitivity_deep_networks} are clearly stronger than their counterparts in Theorem \ref{th:noise_sensitivity_deep_networks}, but we find it natural to state and prove the weaker statements first and then extend them by pointing out the fairly minor extra observations that need to made in the proof.   

\begin{theorem}
    \label{th:noise_sensitivity_deep_networks}
    Consider the above fully connected network with i.i.d.\ normal entries in each $\theta_t$ and i.i.d.\ $\theta_t$'s. Let $1/2 \geq \epsilon_n \downarrow 0$ be such that $n\epsilon_n \rightarrow \infty$ and let $\lim_{n \rightarrow \infty}K_n/\log(1/\epsilon_n) = \infty$. Then 
    \begin{itemize}
        \item[(i)] if $\lim_{n \rightarrow \infty} b_n =0$, $\lim_{n \rightarrow \infty} nb_n=\infty$ and $T_n \in [K_n,e^{b_n n}]$, then for any Boolean functions $\{h_n\}$, the resulting $\{f_{n,T_n}\}$ is annealed QNS at level $\{\epsilon_n\}$ with respect to $\{\pi_n\}$,
        \item[(ii)] if the $h_n$'s are odd and $T_n \geq K_n$, then $\{f_{n,T_n}\}$ is annealed QNS at level $\{\epsilon_n\}$,
        \item[(iii)] if $T_n \geq K_n$ then for any $\{h_n\}$, $\{f_{n,T_n}\}$ is quenched QNS at level $\{\epsilon_n\}$,
        \item[(iv)] there are Boolean functions $\{h_n\}$ such that for $T_n$ growing sufficiently fast with $n$, $\{f_{n,T_n}\}$ is not annealed noise sensitive.
    \end{itemize}
    In addition,
    \begin{itemize}
    \item[(v)] if $h_n$ is noise stable and $T_n$ is bounded, then $\{f_{n,T_n}\}$ is annealed (and hence quenched) noise stable.
    \end{itemize}
\end{theorem}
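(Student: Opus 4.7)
The plan is to prove annealed noise stability; quenched noise stability then follows from Theorem \ref{th:relations_between_annealed_and_quenched}(ii). Fix $T^{*}$ with $T_n \leq T^{*}$ for all $n$. The first key observation is a conditional-independence statement: by induction over layers, since the rows of each $\theta_t$ are i.i.d.\ $N(0,I)$, given $(\omega_{t-1},\omega_{t-1}^\epsilon)$ each coordinate at layer $t$ produces an independent pair whose flip bit is Bernoulli$(g(D_{t-1}/n))$ independently of $\omega_t(i)$. Propagating this through layers shows that, conditional on $D_T$, the vector $\omega_T$ is uniform on $\{-1,1\}^n$ and independent of the flip set $A=\{i:\omega_T(i)\neq\omega_T^\epsilon(i)\}$, which is itself uniformly distributed over subsets of size $D_T$. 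Combining this with the Fourier expansion $h_n=\sum_S \hat h_n(S)\chi_S$ and the identity $\chi_S(\omega_T^\epsilon) = (-1)^{|A\cap S|}\chi_S(\omega_T)$ gives
\[
\Pb\bigl(f_{n,T_n}(\omega)\neq f_{n,T_n}(\omega^\epsilon)\bigr) = \frac{1}{2}\sum_{S\subseteq[n]}\hat h_n(S)^2 \bigl(1-\E[K_{D_{T_n}}(|S|)]\bigr),
\]
where $K_d(s)=\E[(-1)^{|A\cap S|}]$ for a uniformly random size-$d$ subset $A$ and any fixed $S$ with $|S|=s$.

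Next I would bound $1-K_d(s) = 2\Pb(|A\cap S|\text{ odd}) \leq 2sd/n$ by Markov's inequality applied to the non-negative integer $|A\cap S|$, alongside the trivial bound $1-K_d(s)\leq 2$. Splitting the sum at a degree cutoff $K$ yields
\[
\Pb\bigl(f_{n,T_n}(\omega)\neq f_{n,T_n}(\omega^\epsilon)\bigr) \leq \frac{K\,\E[D_{T_n}]}{n} + \sum_{|S|>K}\hat h_n(S)^2.
\]
For the first term, $g(v)\leq C\sqrt{v}$ holds on $[0,1]$ for an absolute constant $C$, so Jensen's inequality gives $\E[D_t]/n \leq C\sqrt{\E[D_{t-1}]/n}$; iterating at most $T^{*}$ times yields $\E[D_{T_n}]/n \leq C^2\epsilon^{1/2^{T^{*}}}$, which tends to $0$ as $\epsilon\to 0$, uniformly in $n$. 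For the second term, writing $\gamma_{h_n}(\epsilon')=\Pb(h_n(\omega)\neq h_n(\omega^{\epsilon'}))$, the standard Fourier identity gives $2\gamma_{h_n}(\epsilon')\geq(1-(1-2\epsilon')^K)\sum_{|S|>K}\hat h_n(S)^2$; setting $\epsilon'=1/K$ and noting $(1-2/K)^K\leq e^{-2}$ yields $\sum_{|S|>K}\hat h_n(S)^2\leq 3\gamma_{h_n}(1/K)$, and noise stability of $\{h_n\}$ forces $\sup_n\gamma_{h_n}(1/K)\to 0$ as $K\to\infty$. Given any $\delta>0$, one first picks $K$ so that the Fourier tail is at most $\delta/2$ uniformly in $n$, then $\epsilon$ so small that $K\E[D_{T_n}]/n<\delta/2$ uniformly in $n$.

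The main conceptual obstacle is justifying the conditional independence of the flip set $A$ and the values of $\omega_T$ given $D_T$; once this is in place, the remainder is a clean Fourier degree-split argument combined with an elementary iteration of the recursion for $\E[D_t]$.
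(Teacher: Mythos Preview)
Your proposal addresses only part (v); parts (i)--(iv) concern noise \emph{sensitivity} of deep networks and call for entirely different arguments (showing that the Markov chain $D_t$ climbs to and remains near $n/2$, a coupling of $D_t$ with a copy started from $\Bin(n,1/2)$ to handle arbitrarily large $T_n$, and an explicit construction with even $h_n$ for (iv)). If you intend this as a proof of the full theorem, those parts are missing.

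For part (v) itself your argument is correct and follows a genuinely different route from the paper's. The paper proceeds probabilistically: it fixes $\rho>0$ small, takes $\epsilon=\rho^{2^{T}}$, and uses Chernoff bounds layer by layer to obtain $\Pb(D_{T-1}\ge n\rho^{2})<e^{-\kappa n}$; on the complementary event it invokes the fact that $(\omega_T,\omega_T^{\epsilon})$ given $D_{T-1}=nv$ is distributed as the standard noise pair $(\omega,\omega^{g(v)})$ and applies noise stability of $h_n$ directly at noise level $g(v)<\rho$. Your approach is Fourier--analytic: you use the same conditional-distribution fact (further conditioned on $D_T$, which indeed makes $\omega_T$ uniform and the flip set uniform of size $D_T$, independent of $\omega_T$) to derive the exact identity $\Pb(f\neq f^{\epsilon})=\tfrac12\sum_S\hat h_n(S)^{2}\bigl(1-\E[K_{D_T}(|S|)]\bigr)$, then split by degree. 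The low-degree part is controlled via the elementary bound $1-K_d(s)\le 2sd/n$ and a Jensen iteration of $g(v)\le C\sqrt{v}$ giving $\E[D_{T_n}]/n\le C^{2}\epsilon^{1/2^{T^{*}}}$; the high-degree tail is controlled by the standard equivalence between noise stability of $\{h_n\}$ and decay of $\sum_{|S|>K}\hat h_n(S)^{2}$. Your version is cleaner about where each hypothesis enters and avoids Chernoff bounds altogether, working only with first moments; the paper's version is more hands-on and delivers the sharper quantitative statement that the exceptional event has probability $e^{-\kappa n}$ rather than merely $o_{\epsilon}(1)$. Either suffices for the stated conclusion.
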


\noindent {\bf Remark.} If one randomly chooses a sequence of Boolean functions uniformly among {\it all} Boolean functions, it is known that the sequence will asymptotically almost surely
be noise sensitive; see Exercise 1.14 in \cite{garban2014noise}. While the Boolean functions arising in Theorem \ref{th:noise_sensitivity_deep_networks} here are also random, they have a very specific form.

\begin{theorem}
    \label{th:stronger_noise_sensitivity_deep_networks}
    Consider the above fully connected network with i.i.d.\ normal entries in each $\theta_t$ and i.i.d.\ $\theta_t$'s. Let $1/2 \geq \epsilon_n \downarrow 0$ be such that $n\epsilon_n \rightarrow \infty$ and let $\lim_{n \rightarrow \infty}K_n/\log(1/\epsilon_n) = \infty$. Then 
    \begin{itemize}
        \item[(i)] under the assumptions of either (i) or (ii) in Theorem \ref{th:noise_sensitivity_deep_networks}, for any fixed $\omega,\eta \in \{-1,1\}^n$ with $\omega \not \in \{\eta,-\eta\}$,
        \[\lim_{n \rightarrow \infty} \Cov_{\Theta}(f_{n,T_n}(\omega),f_{n,T_n}(\eta)) = 0.\]
        \item[(ii)] under the assumptions of either (i) or (ii) in Theorem \ref{th:noise_sensitivity_deep_networks}, for any probability measure $\mathbf{Q}_n$ on $\{-1,1\}^n \times \{-1,1\}^n$ such that $\lim_{n \rightarrow \infty} \mathbf{Q_n}(\{(\omega,\eta):\eta \in \{\omega,-\omega\}\})=0$.
        \[\lim_{n \rightarrow \infty} \Cov_{\mathbf{Q_n},\Theta}(f_{n,T_n}(\omega),f_{n,T_n}(\eta)) = 0.\]
        \item[(iii)] Assume that $h_n$ is odd and fix any $k \in \{1,2,\ldots,n-1\}$ and $\delta>0$. Fix also $\omega \in \{-1,1\}^n$ and let $M_k=M^{(n)}_k(\omega)$ be the number of $\eta$ with $\eta(i) \neq \omega(i)$ for exactly $k$ indexes $i$, such that $f_{n,T_n}(\eta) \neq f_{n,T_n}(\omega)$. Then for $T_n \geq K_n$,
        \[\lim_{n \rightarrow \infty}\Pb\left(\frac{M_k}{\binom{n}{k}} \not \in \left(\frac{1-\delta}{2},\frac{1+\delta}{2}\right)\right) = 0.\]
    \end{itemize}
\end{theorem}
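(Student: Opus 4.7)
The plan is to reduce the claim to a second-moment estimate on the random variable $G := \sum_{\eta \,:\, d(\omega,\eta)=k} f_{n,T_n}(\eta)$ by an algebraic identity, and then to obtain that estimate by applying part (ii) of the present theorem with the auxiliary measure $\mathbf{Q}_n^{*}$ taken to be uniform on pairs of points in the Hamming sphere of radius $k$ around $\omega$.

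First, rewriting the disagreement indicator as $\I(f_{n,T_n}(\eta) \neq f_{n,T_n}(\omega)) = (1 - f_{n,T_n}(\eta)f_{n,T_n}(\omega))/2$ yields
\[
M_k = \frac{1}{2}\left(\binom{n}{k} - f_{n,T_n}(\omega)\,G\right),
\]
and since $|f_{n,T_n}(\omega)|=1$, the bad event $\{M_k/\binom{n}{k} \notin ((1-\delta)/2,(1+\delta)/2)\}$ coincides with $\{|G| \ge \delta\binom{n}{k}\}$. By Chebyshev's inequality it therefore suffices to establish $\E_{\Theta}[G^2] = o(\binom{n}{k}^2)$. Expanding the square,
\[
\E_{\Theta}[G^2] \;=\; \sum_{\eta,\eta'} \E_{\Theta}\bigl[f_{n,T_n}(\eta)f_{n,T_n}(\eta')\bigr] \;=\; \binom{n}{k}^{2}\, \E_{\mathbf{Q}_n^{*},\Theta}\bigl[f_{n,T_n}(\eta)f_{n,T_n}(\eta')\bigr].
\]

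Next, because $h_n$ is odd and the law of $\theta_1$ is invariant under $\theta_1 \mapsto -\theta_1$, a simple induction through the layers (using that $\sign$ is odd) gives $\E_{\Theta}[f_{n,T_n}(\eta)]=0$ for every fixed $\eta$, and in particular the marginal expectations under $\mathbf{Q}_n^{*}$ vanish, so the previous display equals $\binom{n}{k}^{2}\,\Cov_{\mathbf{Q}_n^{*},\Theta}(f_{n,T_n}(\eta),f_{n,T_n}(\eta'))$. I would now invoke part (ii) of the present theorem with $\mathbf{Q}_n=\mathbf{Q}_n^{*}$. Its only nontrivial hypothesis, $\mathbf{Q}_n^{*}(\{(\eta,\eta'):\eta' \in \{\eta,-\eta\}\}) \to 0$, is immediate: the diagonal carries mass $1/\binom{n}{k}$, antipodal pairs are possible only in the exceptional case $k=n/2$ and then contribute at most $1/\binom{n}{k}$, and $\binom{n}{k} \ge n \to \infty$ for all $k \in \{1,\ldots,n-1\}$. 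The conclusion of part (ii) then gives $\E_{\Theta}[G^2]/\binom{n}{k}^{2} \to 0$, as required.

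The main obstacle, in my view, is not the second-moment reduction itself, which is routine, but recognising that part (ii) of the present theorem --- although stated with a fixed centre $\omega$ and a single random perturbation $\eta$ --- can be applied with the role of ``centre'' played by a random $\eta$ drawn uniformly from the sphere around the actual $\omega$ and the role of ``perturbation'' played by an independent random $\eta'$ from the same sphere. Once this relabelling is made, there is nothing left to do beyond checking the vanishing of the marginal expectation, which is forced by oddness of $h_n$ together with the symmetry of the Gaussian weights.
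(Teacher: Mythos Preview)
Your argument for part (iii) is correct and is essentially the paper's own proof, rephrased. The paper works with $\bar{M}_k = X_k^+X_k^-$, the number of disagreeing pairs on the Hamming sphere, observes $\E[\bar M_k]\approx\tfrac14\binom{n}{k}^2$ via part (i), and applies Markov to the nonnegative variable $\tfrac14\binom{n}{k}^2-\bar M_k$; since $G^2=\binom{n}{k}^2-4\bar M_k$ deterministically, this is exactly your Chebyshev bound on $G$, and your appeal to part (ii) with the uniform pair measure $\mathbf{Q}_n^*$ is just the averaged form of the paper's pairwise appeal to part (i). Your packaging is arguably a bit cleaner, but the underlying second-moment idea is identical.
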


\begin{proof}[Proof of Theorem \ref{th:noise_sensitivity_deep_networks}]
Recall $D_t=D_t^{\epsilon_n}$ from above. In the sequel in (i)-(iii), to not burden the notation, write just $\epsilon$ for $\epsilon_n$ with the understanding that $\epsilon=\epsilon_n$.

The conditional distribution of $(\omega_T,(\omega^{\epsilon})_T)$ given $\mathcal{F}_{T-1}:=\sigma(\omega_0,(\omega^\epsilon)_0,\theta_1,\ldots,\theta_{T-1})$ equals that of $(\omega_0,(\omega^{g(D_{T-1}/n)})_0)$. In other words; to determine the distribution of $(\omega_T,\omega^\epsilon_T)$ given $\mathcal{F}_{T-1}$, we only need to know $D_{T-1}$. 
Consequently, $\E[f(\omega_0)f((\omega^\epsilon)_0)|D_{T-1}=d] = \E[h(\omega_T)h((\omega^\epsilon)_T)|D_{T-1}=d] = \E[h(\omega_0)h((\omega^{g(d/n)})_0)]$. Let us study how $D_t$ behaves, started from $D_0 \sim \Bin(n,\epsilon)$.
First observe that $g(v)/v$ is decreasing on $(0,1/2)$. This holds as $g(0)=0$ and an easy computation shows $g''(v)<0$ for $v \in [0,1/2]$. Since $g'(1/2)=2/\pi$, it follows from Taylor's formula that
\[g\left(\frac12-\delta\right) > \frac12-\frac23 \delta\]
for sufficiently small $\delta>0$. Fixing such a $\delta$, it then follows that $g(d/n) > (1+2\delta/3)d/n$ whenever $1 \leq d<n/2-\delta n$. This means by Chernoff bounds and the fact that $g$ is increasing that there is $\kappa=\kappa(\delta,\epsilon)>0$ such that
\begin{itemize}
    \item for $\epsilon n/2 < d < (1/2-\delta)n/2$, $\Pb(D_{t+1}<(1+2\delta/3)d|D_t=d) < e^{-\kappa \sqrt{n}}$,
    \item for $d > (1/2-\delta)n$, $\Pb(D_{t+1}<(1/2-\delta)n|D_t =d) < e^{-\kappa n}$.
\end{itemize}
(Here the $\sqrt{n}$ in the exponent in the first point follows from that $g(1/n)$ is of order $1/\sqrt{n}$ and $\epsilon_n > 1/n$ for $n$ large.)
Combining these two points and the symmetry of $g$ around $1/2$,
\[\Pb\left(\exists t \in \left[\frac{\log\frac{1-2\delta}{\epsilon}}{\log\left(1+\frac{2\delta}{3}\right)}, T\right] : D_t \not\in \left[\left(\frac12-\delta \right)n,\left(\frac12+\delta \right)n \right]\right) < Te^{-\kappa n}.\]

To prove (i), let $L_n=e^{b_n n}$. Take now $T \in \left[1+\frac{\log\frac{1-2\delta}{\epsilon_n}}{\log\left(1+\frac{2\delta}{3}\right)},L_n\right]$. Conditionally on $D_{T-1} = d \in [(1/2-\delta)n,(1/2+\delta)n]$, we have that $(\omega_T,(\omega^\epsilon)_T)$ equals in distribution $(\omega_0,(\omega^\alpha)_0)$ for $\alpha=g(d/n) \in [1/2-\delta,1/2+\delta]$.
Equation $(4.2)$ in Section $4.3$ in \citep{garban2014noise} implies that for $\rho > 0$, there exists $\delta >0$ so that for all Boolean functions $h$ and for all $\alpha \in [1/2 - \delta, 1/2 + \delta]$, \[\Cov(h(\omega),h(\omega^\alpha)) < \rho.\]

Hence for $\rho>0$ and $\delta>0$ sufficiently small it follows from the above that
$\Cov(f(\omega_0),f((\omega^\epsilon)_0)|D_{T-1} \\ \in [(1/2-\delta)n,(1/2+\delta)n]]) < \rho/2$.
Since $\Pb(D_{T-1} \in [(1/2-\delta)n,(1/2+\delta)n]) > 1-L_n e^{-\kappa n} > 1-\rho/8$ for large $n$, we get
\[\Cov(f(\omega_0),f((\omega^\epsilon)_0)) < \rho.\]

To see this, let $B$ be the event $\{D_{T-1} \in [(1/2-\delta)n,(1/2+\delta)n]\}$ and recall that
$\Cov(f(\omega_0),f((\omega^\epsilon)_0)) \\ =  \E[\Cov(f(\omega_0),f((\omega^\epsilon)_0)|I_B)] + \Var(\E[f(\omega_0)|I_B])$. Since $\Pb(B^c)<\rho/8$, it is easy to see that the first term is bounded by $\Cov(f(\omega_0),f((\omega^\epsilon)_0)|B)+\rho/4$ and the second term is bounded by $\rho/2$.

This proves that for some constant $K=K(\rho,\epsilon)$, $T \in [K,L_n]$ and $n$ sufficiently large, $\Cov(f(\omega),f(\omega^\epsilon)) \\ <\rho$.
In particular if $K_n \rightarrow \infty$
and $T \in [K_n,L_n]$, then $f$ is annealed QNS. This proves (i). 

\smallskip 

Next consider (ii) and (iii). This amounts to considering what happens to $D_t$ in the long run. We have argued that with overwhelming probability, $D_t$ will quickly approach $n/2$ and stay there for a very long time. However, after an even longer time, $D_t$ will end up in one of the absorbing states $0$ or $n$. Then the above argument for annealed QNS does not hold (since it is no longer true that $D_t \in[(1/2-\delta)n,(1/2+\delta)n]$ with high probability).

Let $\{D'_t\}$ be a copy of $\{D_t\}$ but started with $D'_0 \sim \Bin(n,1/2)$. We can couple $(D_0,D'_0)$ so that $D_0 \leq D'_0$, so do that. 

Since $g$ is increasing, the distribution of $D_t$ given $D_{t-1}=d$ is increasing in $d$. Hence, since we have coupled so that $D_0 \leq D'_0$, there is a further coupling $(D,D')$ such that $D'_t \geq D_t$ for every $t$. Use such a coupling from now on and note that a consequence is that if at some point $D'_t=D_t$, then also $D'_s=D_s$ for all $s \geq t$. By the above with $t_0=1+\log(4\epsilon)/\log(6/7)$, $P(\forall t \in [t_0,L_n]: D_t,D'_t \in [n/4,3n/4]) > 1-L_ne^{-\kappa n} > 1-e^{-\kappa n/2}$ for large $n$. For all $d,d' \in [n/4,3n/4]$ and $d \leq d'$,
\begin{align*}
    \E[D'_t-D_t|D_{t-1}=d,D'_{t-1}=d'] &= n\left(g\left(\frac{d'}{n}\right)-g\left(\frac{d}{n}\right)\right)\\
    &< \frac34(d'-d),
\end{align*}
where the last inequality follows on observing that $g'(v)<3/4$ for $v \in [1/4,3/4]$. 
This gives for $t \in [t_0,L_n]$,
\begin{align*}
    \E[D'_t-D_t] &\leq ne^{- \kappa n/2} + \E\left[D'_t-D_t|D_{t-1},D'_{t-1} \in \left[\frac{n}4,\frac{3n}{4}\right]\right] \\
    &\leq 2ne^{- \kappa n/2} + \frac34\E[D'_{t-1}-D_{t-1}].
\end{align*}
For such $t$, $\E[D'_{t-1}-D_{t-1}] \geq 1/n$ and the right hand side is smaller than $(4/5)\E[D'_{t-1}-D_{t-1}]$ for large $n$. By induction we thus get $\E[D'_{t_0+t}-D_{t_0+t}] \leq \max(1/n,(4/5)^tn)$ for $t_0+t \leq L_n$. This gives 
$\E[D'_{t_0+t}-D_{t_0+t}] \leq 1/n$ whenever $L_n \geq t \geq 9\log n$ and $n$ large. Hence $\E[D'_t-D_t]\leq1/n$ for all $L_n \geq t \geq 10\log n$ and $n$ large. By Markov's inequality, $\Pb(D'_t \neq D_t)\leq1/n$ for all $L_n \geq t \geq 10\log n$ and $n$ large. Since if $D'_t=D_t$ for some $t$, then $D'_s=D_s$ for all $s \geq t$, we get $\Pb(D'_t \neq D_t)\leq1/n$ for all $t \geq 10 \log n$.

Thus, taking $T \geq 10\log n$, $\Pb(D_T \neq D'_T) < 1/n$. This means that the total variation distance between the distribution of $D_T$ and $D'_T$ is less than $1/n$, i.e.\ $\sum_{d=0}^n|\Pb(D'_T=d)-\Pb(D_T=d)| < 2/n$. This gives for $T \geq 10\log n+1$
\begin{align*}
    \E[f(\omega_0)f((\omega^{1/2})_0] &=
    \E[h(\omega_T)h((\omega^{1/2})_T)] \\ 
    &= \sum_d \E[h(\omega_T)h((\omega^{1/2})_T)|D'_{T-1}=d]P(D'_{T-1}=d) \\
    &\geq \sum_d \E[h(\omega_T)h((\omega^{\epsilon})_T)|D_{T-1}=d]\Pb(D_{T-1}=d)-\sum_d|\Pb(D'_{T-1}=d)-\Pb(D_{T-1}=d)| \\
    &\geq \E[h(\omega_T)h((\omega^{\epsilon})_T)] - \frac2n \\
    &= \E[f(\omega_0)f((\omega^{\epsilon})_0)] - \frac2n,
\end{align*}
where the first inequality follows from that $|h| \leq 1$.
To now prove quenched QNS for all $h$ and $T \geq 10\log n$ observe that we now have
\begin{align*}
\E[\Cov(f(\omega_0),f((\omega^\epsilon)_0)|\Theta)] &= \E[\Cov(f(\omega_0),f((\omega^\epsilon)_0)|\Theta)-\Cov(f(\omega_0),f((\omega^{1/2})_0)|\Theta)] \\
&=\E[\E[f(\omega_0)f((\omega^\epsilon)_0)|\Theta] - \E[f(\omega_0)f((\omega^{1/2})_0)|\Theta]] \\
&=\E[f(\omega_0)f((\omega^\epsilon)_0)] - \E[f(\omega_0)f((\omega^{1/2})_0)] \\
&<\frac2n.
\end{align*}
Since for any fixed $f$, $\Cov(f(\omega),f(\omega^\epsilon)) \geq 0$, this implies quenched QNS for $T \geq 10\log n$. Combining with (i), this gives (iii).

If $h$ is also odd and $T \geq 10\log n$, we have $h(-(\omega^{1/2})_T)=h((-\omega^{1/2})_T) = -h((\omega^{1/2})_T)$ and since $(\omega,\omega^{1/2}) =_d (\omega,-\omega^{1/2})$, we have $\E[h(\omega_T)h((\omega^{1/2})_T)]=0$. Thus
\[\E[f(\omega_0)f((\omega^\epsilon)_0] \leq \frac{2}{n}.\]

Since $h$ odd implies $\E[f(\omega_0)]=0$, this proves annealed QNS for all odd $h$ and $T \geq 10 \log n$ and combining with (i) we get (ii). 

\smallskip

To prove (iv), we need an example of an $h$ that is not odd and where annealed noise sensitivity does not hold for large $T$. To achieve this, first observe that at each $t$, $\Pb(D_{t} \in \{0,n\}|D_{t-1}) \geq 2^{-n+1}$. Hence, as $D_t \in \{0,n\}$ implies that $D_{t'}=D_{t}$ for all $t' \geq t$, for $T_n$ such that $T_n2^{-n} \rightarrow \infty$, $\Pb(D_{T_n} \not\in \{0,n\}) \rightarrow 0$. This entails $\Pb((\omega^\epsilon)_{T_n} = \pm \omega_{T_n}) \rightarrow 1$. Now let $h_n$ be any even function. Then $\Pb(h_n(\omega_{T_n}) \neq h_n((\omega^\epsilon)_{T_n})) \rightarrow 0$
and so in fact $f_n$ is annealed noise stable. If also $-1<\liminf_n\E[h_n(\omega)] \leq \limsup_n \E[h_n(\omega)] < 1$, then $\{f_n\}$ is not annealed noise sensitive. 

\smallskip
\if 0
Moving to (v), this follows from the simple observation that if $d$ is the number of disagreements between $\omega$ and $\eta$, then $\omega_1$ is a vector i.i.d.\ fair coin flips and given $\omega_1$, $\eta_1$ differs from $\omega_1$ in uniformly random positions whose number is binomial with parameters $n$ and $g(d/n)$. Hence all arguments can be copied from above from $t=1$.

Part (vi) is an immediate corollary of (v) on observing that by invariance an symmetry properties of $\Theta$, $\Var_{\mathbf{Q}_n}(\E_\Theta[f_{n,T_n}(\omega)])=0$.

\smallskip 

Now for (vii) fix $\omega$, let $A_k$ be the set of $\eta'$ that each differs from $\omega$ at $k$ positions and take $\eta \in A_k$. We have by (v), since $h_n$ odd implies $\E[f_{n,T_n}(\omega)]=\E[f_{n,T_n}(\eta)]=0$, that $|\E[f_{n,T_n}(\omega)f_{n,T_n}(\eta)]| < \delta^4$, i.e.\ $\Pb(f_{n,T_n}(\omega) \neq f_{n,T_n}(\eta)) \in (1/2-1/2\delta^4,1/2+1/2\delta^4)$, for $n$ large.

Now let $\bar{M}_k=\bar{M}^{(n)}_k(\omega)=|\{(\eta,\xi) \in A_k:f_{n,T_n}(\eta) \neq f_{n,T_n}(\xi)\}|$. It follows that
\[\E[\bar{M}_k] \in \left((1-\delta^4)\frac12\binom{\binom{n}{k}}{2},(1+\delta^4)\frac12\binom{\binom{n}{k}}{2}\right) \subseteq \left((1-\delta^4)\frac{1}{4}\binom{n}{k}^2,(1+\delta^4)\frac{1}{4}\binom{n}{k}^2\right).\]
Let $M$ be the maximum value that $\bar{M}_k$ can take on, so that
\[M \leq \left(\frac{\binom{n}{k}}{2}\right)^2 = \frac14 \binom{n}{k}^2.\]
Since $M-\bar{M}_k$ is nonnegative, it follows from Markov's inequality that
\[\Pb\left(\bar{M}_k \leq (1-\delta^2)\frac{1}{4}\binom{n}{k}^2\right) < \delta^2.\]
Also if $\bar{M}_k > (1-\delta^2)\frac{1}{4}\binom{n}{k}^2$, we have $X_k^+ \in ((1-\delta)\frac{1}{2}\binom{n}{k},(1+\delta)\frac{1}{2}\binom{n}{k})$, where $X_k^+$ is the number of $\eta \in A_k$ with $f_{n,T_n}(\eta)=1$. This gives $M_k \in ((1-\delta)\frac{1}{2}\binom{n}{k},(1+\delta)\frac{1}{2}\binom{n}{k})$ and hence for $n$ sufficiently large,
\[\Pb\left(\frac{M_k}{\binom{n}{k}} \not\in \left(\frac{1-\delta}{2},\frac{1+\delta}{2}\right)\right) < \delta^2.\]
This proves (vii).

\smallskip 
\fi
For (v), assume for simplicity that $T_n$ equals the constant $T$; going from this to general bounded $T_n$ is easy and left to the reader. Fix a small $\delta>0$. Since $h_n$ is stable one can pick $\rho>0$ small enough that $\sup_n \Pb(h_n(\omega) \neq h_n(\omega^\xi))< \delta$ whenever $\xi<\rho$. Pick such a $\rho$ and let $\epsilon=\rho^{2^T}$. We have $g(v) = (2/\pi)\arctan\sqrt{v/(1-v)} < (2/3)\sqrt{v}$ for $v<\rho$ and $\rho$ small enough. By Chernoff bounds we then have that there is a $\kappa>0$ independent of $n$ such that
\[\Pb(D^\epsilon_{T-1} \geq n\rho^2) < e^{-\kappa n}.\]
Given $D_{T-1}=nv$ for $v<\rho^2$, the conditional distribution of $(\omega_T,(\omega^\epsilon)_T)$ is that of $(\omega,\omega^{g(v)})$ and since $g(v)<\rho$, we get by noise stability of $h_n$ that
\[\Pb(f_{n,T_n}(\omega) \neq f_{n,T_n}(\omega^\epsilon)|D^\epsilon_{T-1} < \rho^2) < \delta.\]
Summing up, this now gives
\[\Pb(f_{n,T_n}(\omega) \neq f_{n,T_n}(\omega^\epsilon)) < e^{-\kappa n}+\delta\]
This easily implies noise stability.

\end{proof}

\begin{proof}[Proof of Theorem \ref{th:stronger_noise_sensitivity_deep_networks}]
    Starting with (i), this follows from the simple observation that if $d$ is the number of disagreements between $\omega$ and $\eta$, then $\omega_1$ is a vector i.i.d.\ fair coin flips and given $\omega_1$, $\eta_1$ differs from $\omega_1$ in uniformly random positions whose number is binomial with parameters $n$ and $g(d/n)$. Hence all arguments from (i) or (ii) in Theorem \ref{th:noise_sensitivity_deep_networks} can be copied from above from $t=1$.

Part (ii) is an immediate corollary of (i) on observing that \[\Cov_{\mathbf{Q_n},\Theta}(f_{n,T_n}(\omega),f_{n,T_n}(\eta)) = \E_{\mathbf{Q}_n}[\Cov_{\Theta}(f_{n,T_n}(\omega),f_{n,T_n}(\eta))] + \Cov_{\mathbf{Q}_n}(\E_\Theta[f_{n,T_n}(\omega)],\E_\Theta[f_{n,T_n}(\eta)])\]
and by the invariance and symmetry properties of $\Theta$, $\E_\Theta[f_{n,T_n}(\omega)]$ is independent of $\omega$ and thus a constant and hence $\Cov_{\mathbf{Q}_n}(\E_\Theta[f_{n,T_n}(\omega)],\E_\Theta[f_{n,T_n}(\eta)]) = 0$.

\smallskip 

Now for (iii) fix $\omega$, let $A_k$ be the set of $\eta'$ that each differs from $\omega$ at $k$ positions and take $\eta \in A_k$. We have by (i), since $h_n$ odd implies $\E[f_{n,T_n}(\omega)]=\E[f_{n,T_n}(\eta)]=0$, that $|\E[f_{n,T_n}(\omega)f_{n,T_n}(\eta)]| < \delta^4$, i.e.\ $\Pb(f_{n,T_n}(\omega) \neq f_{n,T_n}(\eta)) \in (1/2-1/2\delta^4,1/2+1/2\delta^4)$, for $n$ large.

Now let $\bar{M}_k=\bar{M}^{(n)}_k(\omega)=|\{(\eta,\xi) \in A_k:f_{n,T_n}(\eta) \neq f_{n,T_n}(\xi)\}|$. It follows that
\[\E[\bar{M}_k] \in \left((1-\delta^4)\frac12\binom{\binom{n}{k}}{2},(1+\delta^4)\frac12\binom{\binom{n}{k}}{2}\right) \subseteq \left((1-\delta^4)\frac{1}{4}\binom{n}{k}^2,(1+\delta^4)\frac{1}{4}\binom{n}{k}^2\right).\]
Let $M$ be the maximum value that $\bar{M}_k$ can take on, so that
\[M \leq \left(\frac{\binom{n}{k}}{2}\right)^2 = \frac14 \binom{n}{k}^2.\]
Since $M-\bar{M}_k$ is nonnegative, it follows from Markov's inequality that
\[\Pb\left(\bar{M}_k \leq (1-\delta^2)\frac{1}{4}\binom{n}{k}^2\right) < \delta^2.\]
Also if $\bar{M}_k > (1-\delta^2)\frac{1}{4}\binom{n}{k}^2$, we have $X_k^+ \in ((1-\delta)\frac{1}{2}\binom{n}{k},(1+\delta)\frac{1}{2}\binom{n}{k})$, where $X_k^+$ is the number of $\eta \in A_k$ with $f_{n,T_n}(\eta)=1$. This gives $M_k \in ((1-\delta)\frac{1}{2}\binom{n}{k},(1+\delta)\frac{1}{2}\binom{n}{k})$ and hence for $n$ sufficiently large,
\[\Pb\left(\frac{M_k}{\binom{n}{k}} \not\in \left(\frac{1-\delta}{2},\frac{1+\delta}{2}\right)\right) < \delta^2.\]
This proves (iii).

\smallskip 
\end{proof}

(The property described in the statement (ii) in Theorem \ref{thm:stronger_correlated_noise_sensitivity} could be referred to as $f_{n,T_n}$ being annealed noise sensitive with respect to $\mathbf{Q}_n$.)

\subsection{Correlated networks} \label{sec:correlated}
    
    In this section we will investigate the noise sensitivity of a deep network where the network weights $\theta$ are sampled from a correlated normal distribution, i.e.\ Case 2 in the introduction.
    
    More precisely, the model is that the random matrices $\theta_1,\theta_2,\ldots,\theta_T$ are independent and for given $t$, the columns $\theta_t(\cdot,1),\ldots,\theta_t(\cdot,n)$ are independent. However, each column $\theta_t(\cdot,j)$ of each $\theta_t$ is now assumed to be $n$-dimensional Gaussian with expectation $0$ and covariance matrix $\Sigma$, where $\Sigma_{i,i}=1$, $i=1,\ldots,n$ and $\Sigma_{i,i'}=\rho$, $1 \leq i < i' \leq n$. Here $\rho=\rho_n$ is a positive given correlation and we are interested in providing conditions on $\rho$ that ensure noise stability or noise sensitivity of $f=f_n=f_{n,T_n}$ defined as in the previous section.
    
    \smallskip 


\smallskip 

To model the vectors $\theta_t(\cdot,j)$, we let $\theta_t(i,j)=\sqrt{\rho}\nu_t(j)+\sqrt{1-\rho}\psi_t(i,j)$, where the $\nu_t(j)$'s and $\psi_t(i,j)$'s are all independent standard Gaussian.

\smallskip

Since the entries in any given row of $\theta_t$ are i.i.d.\ standard normals, Lemma \ref{th:tan_prop} still says that $\Pb(F_t(i)|D_{t-1}=d) = g(d/n)$, where $F_t(i)=\{\omega_t(i) \neq (\omega^\epsilon)_t(i)\}$. However the events $F_t(i)$ and $F_t(i')$ are not, as in Section 2, conditionally independent given $D_{t-1}$.

\smallskip 

Recall the proof of Lemma \ref{th:tan_prop}, where the following observation was made. Here we recall that $C=\{j:\omega_{t-1}(j) \neq (\omega^\epsilon)_{t-1}(j)\}$, $A_{\Lambda,k} = \{j : j \in \Lambda, \omega(j) = k\}$ and
\[F_t(i) = \left\{\left|\sum_{j \in A_{C^c,1}} \theta(i,j) - \sum_{j \in A_{C^c,-1}} \theta(i,j)\right| \leq \left|\sum_{j \in A_{C,1}} \theta(i,j) - \sum_{j \in A_{C,-1}} \theta(i,j)\right|\right\}.\]
Using the above representations of $\theta_t(i,j)$ this becomes
\begin{align*}
    F_t(i) &= \left\{ \left|\sum_{j \in A_{C^c,1}} \left(\sqrt{\rho}\nu_t(j)+\sqrt{1-\rho}\psi_t(i,j)\right) - \sum_{j \in A_{C^c,-1}} \left(\sqrt{\rho}\nu_t(j)+\sqrt{1-\rho}\psi_t(i,j)\right)\right| \right. \\ &\leq \left. \left|\sum_{j \in A_{C,1}} \left(\sqrt{\rho}\nu_t(j)+\sqrt{1-\rho}\psi_t(i,j) \right) - \sum_{j \in A_{C,-1}} \left(\sqrt{\rho}\nu_t(j)+\sqrt{1-\rho}\psi_t(i,j) \right)\right|\right\}.  
\end{align*}
Make the following substitutions

\begin{align*}
        \sqrt{d}U_t^{C} &= \sum_{j \in A_{C,1}} \nu_t(j) - \sum_{j \in A_{C,-1}} \nu_t(j) \\ \sqrt{n-d}U_t^{C^c} &= \sum_{j \in A_{C^c,1}} \nu_t(j) - \sum_{j \in A_{C^c,-1}} \nu_t(j) \\
        \sqrt{d}V_t^{C}(i) &= \sum_{j \in A_{C,1}} \psi_t(i,j) - \sum_{j \in A_{C,-1}} \psi_t(i,j) \\ \sqrt{n-d}V_t^{C^c}(i) &= \sum_{j \in A_{C^c,1}} \psi_t(i,j) - \sum_{j \in A_{C^c,-1}} \psi_t(i,j)
\end{align*}

and notice that $U_t^C$, $U_t^{C^c}$, $V_t^{C}(i)$ and $V_t^{C^c}(i)$ are all independent standard Gaussians for all $t$ and $i$. This gives, with $v=d/n$,

\[F_t(i) = \left\{\left| \sqrt{\frac{\rho}{1-\rho}}U^{C^c}_t + V^{C^c}_t(i)\right| \leq \sqrt{\frac{v}{1-v}}\left| \sqrt{\frac{\rho}{1-\rho}}U^C_t + V^C_t(i) \right|\right\}\]

and hence also

\[\Pb(F_t(i)|\omega_{t-1},(\omega^\epsilon)_{t-1}) 
= \Pb\left( \left| \sqrt{\frac{\rho}{1-\rho}}U^{C^c}_t + V^{C^c}_t(i)\right| \leq \sqrt{\frac{v}{1-v}}\left| \sqrt{\frac{\rho}{1-\rho}}U^C_t + V^C_t(i) \right|\right).\]
The dependence between different $i$'s is captured by the common variables $U^C_t$ and $U^{C^c}_t$. Conditioning on $W_t=\sqrt{\rho/(1-\rho)}(U^C_t,U^{C^c}_t)$ in addition to the condition $D_{t-1}=d$, one gets that $F_t(1),\ldots,F_t(n)$ are conditionally independent and, writing $w=(w^C,w^{C^c})$,
\begin{equation} \label{eq:g_W}
\Pb(F_t(i)|D_{t-1}=d,W_t=w) =
\Pb\left(|Y| \leq \sqrt{\frac{v}{1-v}}|X|\right),
\end{equation}
where $X$ and $Y$ are independent normals with unit variance and means $w^{C}$ and $w^{C^c}$ respectively. Write $g_w(v)$ for the right hand side of (\ref{eq:g_W}). Summing up, by letting $\tilde{D}_t = D_t/n$, we have shown that the conditional distribution
$(D_t|\tilde{D}_{t-1}=v,W=w)$ is binomial with parameters $n$ and $g_w(v)$.

Indeed given $\tilde{D}_{t-1} = v$, $D_t$ can be determined in two steps. First sample $W$ from the two dimensional independent Gaussian with covariance $\sqrt{\rho/(1-\rho)} I_2$. Then given $W = w$, $D_t$ is determined as the number of independent two-dimensional Gaussian's $(X_i,Y_i)$ with mean $w$ and $X_i$ and $Y_i$ of unit variance and independent that end up in the region $A_t = \{(x,y) \: : \: |y| \leq \sqrt{\frac{v}{1-v}}|x|\}$. The probability of a given sample in the second step ending up in $A_t$ is $g_w(v)$. From this it is easy to see that $g_w(v)$ is increasing in $v$ for all values of $w$. An illustration of $A_t$ and $W$ is seen in figure \ref{fig:At}.
\smallskip


\medskip 

\begin{figure}[h] 
    \centering
    \begin{subfigure}[b]{0.4\linewidth}
        \includegraphics[width = 1\textwidth]{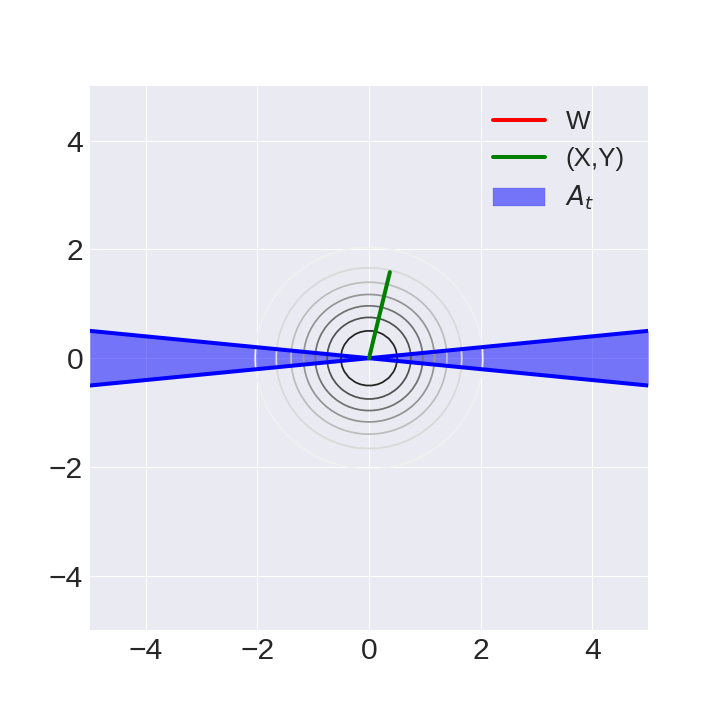}
        \caption{$W = (0,0)$. Corresponds to the uncorrelated case.}
        \end{subfigure}
    \begin{subfigure}[b]{0.4\linewidth}
        \includegraphics[width = 1\textwidth]{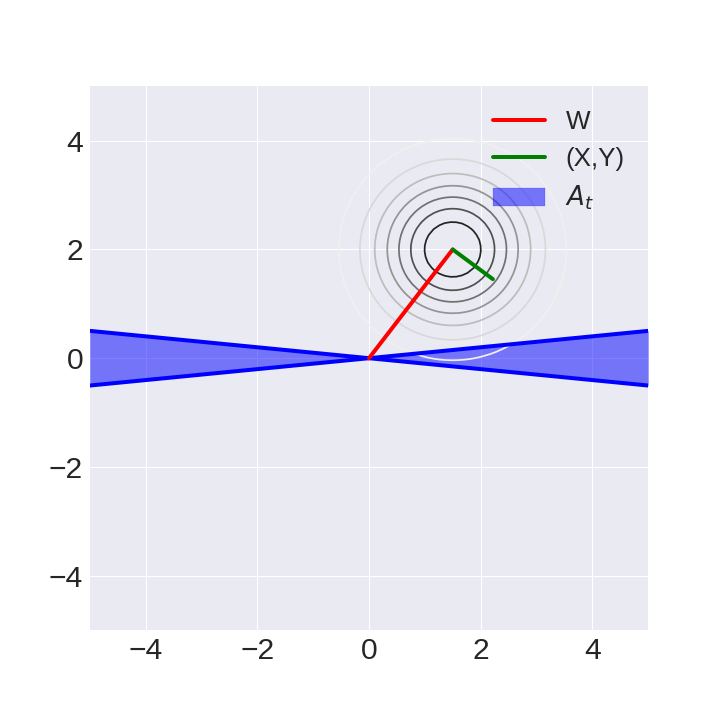}
        \caption{$W = (1.5,2)$. One realisation of the correlated case.}
        \label{fig:correlated_case}
        \end{subfigure}
    \caption{An illustration of how $D_t$ is sampled given $\tilde{D}_{t-1} = v$. First sample $W$ from a independent two dimensional Gaussian with covariance matrix $\sqrt{\rho/(1- \rho)} I_2$. Then given $W = w$, $D_t$ is determined as the number of independent Gaussian's with mean $w$ and unit variance that end up in the region $A_v$.}
    \label{fig:At}
\end{figure}

We will need the following definition.

\begin{definition}
    A sequence of Boolean function $\{h_n\}$ has a { \bf sharp threshold at 1/2} if $\{h_n\}$ is such that for $\omega$ being i.i.d. Bernoulli($p_n$) and if $\{p_n - 1/2\}$ is bounded away from $0$, then $\lim_{n \rightarrow \infty} \Pb(h_n(\omega)= \sign(p_n - 1/2)) = 1$.
\end{definition}

One among many examples of Boolean functions that has a sharp threshold at 1/2 is the majority function. It can be shown that this also is true for the weighted majority function when the weights $\theta_i > 0$ are such that $\lim_{n \rightarrow \infty} \frac{\max_i \theta_i}{\sqrt{n}\min_i \theta_i} = 0$.

Let $\tau_c^\epsilon = \tau_c^{\epsilon_n}=\min\{t:g_{W_t}(\tilde{D}_{t-1}) \geq c\}$ with $\tau_c^\epsilon=\infty$ if $g_{W_t}(\tilde{D}_{t-1})<c$ for all $t$.
The following lemma is crucial. It shows that 
noise sensitivity or noise stability almost entirely come down to if $\tilde{D}$ started from arbitrary small $\epsilon$ hits $1/2$ before $0$ (which gives rise to sensitivity results) or vice versa (giving rise to stability results). In both of these cases, the rest of the arguments come down to adopting suitable conditions on $h_n$ to go along with that.
Therefore and since we are also of the opinion that $\tilde{D}$ is a very natural Markov process that is interesting in its own right, we will later on state the results for $\tilde{D}$ explicitly along with sensitivity/stability results.

\begin{lemma} \label{lem:correlated_sens_stab}
    Let $1/2 \geq \epsilon_n \downarrow 0$. Assume $\{h_n\}$ has a sharp threshold at 1/2, $\rho_n > \delta$ for some $\delta > 0$. Then the following statements hold.
    \begin{enumerate}
        \item[(i)] If for all $\epsilon_n$, $\lim_{n \rightarrow \infty} \Pb(\tau_{1/2}^{\epsilon_n} \geq T_n) = 0$ and $\{h_n\}$ is odd, then $\{f_{n,T_n}\}$ is annealed, and hence quenched, QNS at level $\{\epsilon_n\}$.
        \item[(ii)] If for all $\epsilon>0$ sufficiently small, $\lim_{n \rightarrow \infty} \Pb(\tau_{2\epsilon}^\epsilon \leq T_n) = 0$, then $\{f_{n,T_n}\}$ is annealed, and hence quenched, noise stable.
    \end{enumerate}
    
\end{lemma}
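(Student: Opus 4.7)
The plan is to push both statements through the last layer. For each $t$, let $V_t:=\langle\nu_t,\omega_{t-1}\rangle/\sqrt n$ and $V'_t:=\langle\nu_t,(\omega^\epsilon)_{t-1}\rangle/\sqrt n$; conditional on $(\omega_{t-1},(\omega^\epsilon)_{t-1})$ the pair $(V_t,V'_t)$ is bivariate standard normal with correlation $r_{t-1}:=1-2\tilde D_{t-1}$, and conditional on $V_t=v$ the bits $\omega_t(i)$ are i.i.d.\ Bernoulli$(\Phi(\sqrt{\rho_n/(1-\rho_n)}\,v))$. Since $\rho_n>\delta$, this parameter is bounded away from $1/2$ whenever $|v|>\delta'$, so the sharp threshold hypothesis forces $h_n(\omega_T)=\sign(V_T)$ with probability tending to $1$ uniformly on $\{|V_T|>\delta'\}$, and likewise for the noisy copy. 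Using $\Pb(|V_T|<\delta')=O(\delta')$ to absorb the exceptional event,
\[\E\bigl[h_n(\omega_T)h_n((\omega^\epsilon)_T)\bigr]=\E\bigl[(2/\pi)\arcsin r_{T-1}\bigr]+o(1).\]

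For (ii), the hypothesis $\Pb(\tau_{2\epsilon}^\epsilon\le T_n)\to 0$ together with a Chernoff bound applied to $D_t\sim\Bin(n,g_{W_t}(\tilde D_{t-1}))$ uniformly over $t\le T_n$ gives $\tilde D_{T-1}\le 3\epsilon$ with probability tending to $1$; hence $r_{T-1}\ge 1-6\epsilon$, so $\Pb(\sign V_T\neq\sign V'_T)=(1/\pi)\arccos(r_{T-1})=O(\sqrt\epsilon)$. Combined with the sharp threshold identification, $\sup_n\Pb(f_{n,T_n}(\omega)\neq f_{n,T_n}(\omega^\epsilon))\to 0$ as $\epsilon\downarrow 0$, which is annealed noise stability and, by Theorem~\ref{th:relations_between_annealed_and_quenched}(ii), also quenched.

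For (i), oddness of $h_n$ plus symmetry of $V_T$ give $\E[f_{n,T_n}(\omega)]\to 0$, so it suffices to show $\E[(2/\pi)\arcsin(r_{T-1})]\to 0$. The key symmetry is $g_w(1-v)\stackrel{d}{=}1-g_w(v)$, coming from index-swap invariance of the i.i.d.\ sequence $(\nu_t(j))_j$; consequently $1-\tilde D_t$ started at $1-\tilde D_0$ has the law of $\tilde D_t$ started at $\tilde D_0$, and any initial distribution symmetric about $1/2$ is preserved by the dynamics. A direct calculation (collapsing $W$-averaging into the inner probability) yields $\E_W[g_W(v)]=g_0(v)=(2/\pi)\arctan\sqrt{v/(1-v)}$, which contracts toward $1/2$ on $(0,1)$. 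The plan is to couple $\{\tilde D_t\}$ starting from $\Bin(n,\epsilon)/n$ with an auxiliary chain $\{\tilde E_t\}$ starting from the symmetric distribution $\Bin(n,1/2)/n$, using the same $(W_t)$'s and the monotone coupling of the binomials (valid since $g_w$ is increasing in $v$), so that after $\tau_{1/2}^\epsilon$ the two coalesce in a controlled number of further steps. For $\{\tilde E_t\}$ one has $\E[(2/\pi)\arcsin(1-2\tilde E_{T-1})]=0$ exactly by reflection symmetry, and the coupling transports this to the chain of interest. Quenched QNS then follows via Theorem~\ref{th:relations_between_annealed_and_quenched}(i) once $\Var_\Theta(\E_\omega[f_{n,T_n}(\omega)])\to 0$ is checked, which is immediate from the sharp threshold identification.

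The hard part is the coupling in (i): in contrast to the uncorrelated argument of Theorem~\ref{th:noise_sensitivity_deep_networks}(iii), where binomial fluctuations alone give a $3/4$-contraction of $\E[\tilde E_t-\tilde D_t]$ once both chains lie in $[1/4,3/4]$, here the shared shock $W_t$ acts identically on the coupled copies and contributes no contraction on its own. One must instead establish $\E_W[g_W(v')-g_W(v)]\le c(v'-v)$ for some $c<1$ uniformly on a compact subset of $(0,1)$ bounded away from the absorbing states $\{0,1\}$, and then use this averaged contraction layer-by-layer; the hypothesis $\tau_{1/2}^\epsilon<T_n$ supplies the number of layers needed to close the coupling.
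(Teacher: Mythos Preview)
Your reduction in both parts---identifying $h_n(\omega_T)$ with $\sign V_T$ via the sharp threshold and the lower bound $\rho_n>\delta$, and hence $\E[f(\omega)f(\omega^\epsilon)]=\E[(2/\pi)\arcsin r_{T-1}]+o(1)$---is correct and is exactly what the paper is doing (its event $B_T=\{\sign V_T\ne\sign V'_T\}$ satisfies $\Pb(B_T\mid\tilde D_{T-1})=g(\tilde D_{T-1})=(1/\pi)\arccos r_{T-1}$). Your argument for (ii) is also fine; note you only need the Chernoff bound at the single time $T-1$, not uniformly over $t\le T_n$.

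The genuine gap is in your plan for (i). You propose to show $\E[\arcsin r_{T-1}]\to 0$ by coupling $\{\tilde D_t\}$ to the symmetric chain $\{\tilde E_t\}$ and driving them to \emph{coalesce} via an averaged contraction $\E_W[g_W(v')-g_W(v)]\le c(v'-v)$ on a compact subset of $(0,1)$. This is problematic on two counts. First, the hypothesis $\Pb(\tau_{1/2}^{\epsilon_n}\ge T_n)\to 0$ gives no lower bound on $T_n-\tau_{1/2}^{\epsilon_n}$, so you may have no layers left in which to run the contraction. Second, in the correlated regime nothing prevents $\tilde E_t$ from absorbing at $0$ or $1$ (or leaving your compact set) before the lower chain catches up, after which your contraction bound is vacuous.

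The paper sidesteps coalescence entirely by a one-sided argument. Conditionally on $\tau_{1/2}^\epsilon=t<T$, one has $g_{W_t}(\tilde D_{t-1})\ge 1/2$, so $\tilde D_t$ stochastically dominates $\Bin(n,1/2)/n$; by monotonicity of the dynamics and the reflection symmetry $g(1-v)=1-g(v)$, this yields $\Pb(B_T\mid\tau_{1/2}^\epsilon=t)\ge 1/2$, i.e.\ $\E[(2/\pi)\arcsin r_{T-1}\mid\tau_{1/2}^\epsilon=t]\le 0$. The matching lower bound comes for free: oddness of $h_n$ gives $\E_\omega[f(\omega)]=0$ for every realization of $\Theta$, and the standard fact $\Cov(f(\omega),f(\omega^\epsilon))\ge 0$ then forces $\E[f(\omega)f(\omega^\epsilon)]\ge 0$. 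Sandwiching closes (i) immediately. In your language: your own monotone coupling already gives $\tilde D_t\le\tilde E_t$, hence $\arcsin r^D_{T-1}\ge\arcsin r^E_{T-1}$ and $\E[\arcsin r^D_{T-1}]\ge 0$; you never needed the chains to meet.
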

\begin{proof}
    Let $T = T_n$ and $f = f_{n,T}$.
    As before, write $\theta_t(i,j)=\sqrt{\rho}\nu_t(j)+\sqrt{1-\rho}\psi_t(i,j)$, which in row vector form becomes $\theta_t(i,\cdot)=\sqrt{\rho}\nu_t+\sqrt{1-\rho}\psi_t(i,\cdot)$, so that 
    \[\omega_T(i) = \sign \left((\sqrt{\rho}\nu_T+\sqrt{1-\rho}\psi_T(i,\cdot)) \cdot \omega_{T-1}\right).\]
    
    This means that conditionally on $\omega_{T-1}$ and $\nu_T$, $\omega_T(i)$ are i.i.d. Bernoulli with probability $p(\nu_T,\omega_{T-1}) \\ := \Pb(\omega_T(i)=1|\nu_T,\omega_{T-1}) = \Pb((\sqrt{\rho}\nu_T+\sqrt{1-\rho}\psi_T(i,\cdot)) \cdot \omega_{T-1} >0)$, which equals
    \[\Pb \left( X > -\sqrt{\frac{\rho}{1-\rho}}\frac{1}{\sqrt{n}}\nu_T \cdot \omega_{T-1}\right)\]

    where $X$ is a standard normal random variable. Notice that $p(\nu_T,\omega_{T-1}) > \frac{1}{2}$ if and only if $\nu_T \cdot \omega_{T-1} > 0$. Fix $\gamma_1 > 0$. Since $\rho_n > \delta$ for some $\delta$, there is for every $\gamma_1 > 0$ an $\gamma_2>0$ independently of $\rho$ and $\omega_{T-1}$ such that for $n$ large

\begin{equation} \label{eq:bounded_from_half}
  \Pb \left( p(\nu_T,\omega_{T-1}) \in \left[\frac{1}{2} - \gamma_2, \frac{1}{2} + \gamma_2\right] \Big| \, \omega_{T-1}\right) < \gamma_1.  
\end{equation}
    
  Let us now also include the process $(\omega^\epsilon)_t = (\omega^{\epsilon_n})_t$. By (\ref{eq:g_W}), $\Pb(F_t(i)|\tilde{D}_{t-1} = v)$ is increasing in $v$, which means that the conditional distribution $(\tilde{D}_{t}|\tilde{D}_{t-1} =v)$ is stochastically increasing in $v$. Hence inductively for $t \leq T-1$, $(\tilde{D}_{T-1}|\tilde{D}_{t-1} =v)$ is stochastically increasing in $v$.
  Additionally, as in the proof of Lemma \ref{th:tan_prop}, writing $B_T$ for the event $\{\sign(\nu_T \cdot \omega_{T-1}) \neq \sign(\nu_T \cdot (\omega^\epsilon)_{T-1}\}$,
  
  \begin{equation} \label{eq:sign_nu}
      \Pb \left(B_T|\omega_{T-1},(\omega^\epsilon)_{T-1}\right) = g(\tilde{D}_{T-1})
  \end{equation}
  from which it follows by symmetry that for all $\eta \in [0,1/2]$
  \[\Pb\left(B_T|\tilde{D}_{t}=\frac12-\eta\right) + \Pb\left(B_T|\tilde{D}_{t}=\frac12+\eta\right) = 1.\]
  It follows directly that if $n$ is even $\Pb(B_T|\Tilde{D}_t=1/2) = 1/2$ and if $n$ is odd, since $\Pb(B_T|\tilde{D}_{t}=1/2-\eta) < \Pb(B_T|\tilde{D}_{t}=1/2+\eta)$, that $\Pb(B_T|\tilde{D}_t=1/2 + 1/2n)\geq 1/2$. Assume for simplicity for the rest of the proof of (i) that $n$ is even; for the case with $n$ odd just replace any conditioning on $\Tilde{D}_t=1/2$ with conditioning on $\Tilde{D}_t=1/2+1/2n$.
  
  Since $(\tilde{D}_{T-1}|\tilde{D}_{t-1} =v)$ is increasing in $v$, we now get
  \[\Pb(B_T|\tau_{1/2}^\epsilon=t) \geq \Pb\left(B_T|\tilde{D}_t=\frac12\right) = \frac12.\]
  Now let $A_T= B_T \setminus C_T$, where 
  \[C_T=\{p(\nu_T,\omega_{T-1}) \in [1/2-\gamma_2,1/2+\gamma_2]\} \cup \{p(\nu_T,(\omega^\epsilon)_{T-1}) \in [1/2-\gamma_2,1/2+\gamma_2]\}.\]
  By (\ref{eq:bounded_from_half}) and the fact that $A_T \subset B_T$, it now follows for $t < T$
    
    \[\Pb\left(A_T|\tau_{1/2}^\epsilon=t\right) \geq \frac12-\Pb(C_T|\tau_{1/2}^\epsilon=t) > \frac12-2\gamma_1.\]
    
    By the assumptions on $\{h_n\}$, for sufficiently large $n$ and $t < T$, $\E[f(\omega)f(\omega^\epsilon)|A_T,\tau_{1/2}^\epsilon=t] < -1 + \gamma_1$.
    We then get
    
    \begin{align*}
    \E\left[f(\omega)f(\omega^\epsilon)|\tau_{1/2}^\epsilon = t\right]
    &= \E\left[f(\omega)f(\omega^\epsilon)|A_T,\tau_{1/2}^\epsilon=t\right]\Pb\left(A_T|\tau_{1/2}^\epsilon=t\right) \\
    &+
    \E\left[f(\omega)f(\omega^\epsilon)|A^c_T,\tau_{1/2}^\epsilon=t\right]\Pb\left(A^c_T|\tau_{1/2}^\epsilon=t\right) \\ &< (-1 + \gamma_1)\left(\frac{1}{2} - 2\gamma_1\right) + \frac{1}{2} + 2\gamma_1 < 5\gamma_1.
    \end{align*}
    
    Finally this means that for $n$ sufficiently large,
    
    \begin{align*}
    0 &\leq \E\left[f(\omega)f(\omega^\epsilon)\right] \\
    &= \sum_{t=0}^{T-1}\E\left[f(\omega)f(\omega^\epsilon)|\tau_{1/2}^\epsilon =t\right]\Pb\left(\tau_{1/2}^\epsilon =t\right) + \E\left[f(\omega)f(\omega^\epsilon)|\tau_{1/2}^\epsilon \geq T\right]\Pb\left(\tau_{1/2}^\epsilon \geq T\right) \\
    &< 5\gamma_1 + \Pb\left(\tau_{1/2}^\epsilon \geq T\right).
    \end{align*}
    
    Since $h$ is assumed odd, so is $f$ and hence $\Cov(f(\omega),f(\omega^\epsilon)) = \E[f(\omega)f(\omega^\epsilon)]$. Since $\gamma_1$ is arbitrary, (i) follows.

    \smallskip

    To prove (ii), fix $\gamma_1$, pick $\epsilon$ such that $g(2\epsilon)<\gamma_1$ and pick $\gamma_2$ such that (\ref{eq:bounded_from_half}) holds. Notice that due to (\ref{eq:sign_nu}) and the fact that $g$ is increasing, we have
    
    \[\Pb\left(B_T|\tilde{D}_{T-1} \leq 2\epsilon\right) \leq g(2\epsilon) < \gamma_1.\]

    Hence by (\ref{eq:bounded_from_half})
    \[\Pb(B_T \cup C_T|\tilde{D}_{T-1} \leq 2\epsilon) < 3\gamma_1.\]
    
    Additionally for $n$ large and $\epsilon$ sufficiently small, due to $h_n$ having a sharp threshold at 1/2, 
    \[\Pb\left(f(\omega) \neq f(\omega^\epsilon)|B_T^c \cap C_T^c,\tilde{D}_{T-1} \leq 2\epsilon\right)  < \gamma_1.\] This results in
    
    \begin{align*}
    &\Pb\left(f(\omega) \neq f(\omega^\epsilon)|\tilde{D}_{T-1} \leq 2\epsilon\right)\\ &= \Pb\left(f(\omega) \neq f(\omega^\epsilon)|B_{T} \cup C_T,\tilde{D}_{T-1} \leq 2\epsilon\right)\Pb\left(B_T \cup C_T|\tilde{D}_{T-1} \leq 2\epsilon\right)\\ &+ \Pb\left(f(\omega) \neq f(\omega^\epsilon)|B_{T}^c \cap C_T^c,\tilde{D}_{T-1} \leq 2\epsilon\right)\Pb\left(B_{T}^c \cap C_T^c|\tilde{D}_{{T}-1} \leq 2\epsilon\right)\\&< 3\gamma_1 + \Pb\left(f(\omega) \neq f(\omega^\epsilon)|B_{T}^c \cap C_T^c,\tilde{D}_{T-1} \leq 2\epsilon\right) < 4\gamma_1.
    \end{align*}

    Since $\lim_{n \rightarrow \infty} \Pb(\tau_{2\epsilon}^\epsilon \leq T) = 0$, we have for $n$ sufficiently large that $\Pb(\tilde{D}_{T-1} > 2\epsilon) \leq \Pb(\tau_{2\epsilon}^\epsilon \leq T) < \gamma_1$.

    It now follows that for $n$ large,
    
    \begin{align*}
    &\Pb\left(f(\omega) \neq f(\omega^\epsilon)\right)\\ &< \Pb\left(f(\omega) \neq f(\omega^\epsilon)|\tilde{D}_{T-1} \leq 2\epsilon\right) + \Pb\left(\tilde{D}_{T-1} > 2\epsilon\right) \\&<5 \gamma_1
    \end{align*}

    Since $\gamma_1$ is arbitrary, this concludes (ii).
    
\end{proof}

We are now ready to start the proof that $\tilde{D}$ hits $1/2$ before $0$ and $f=f_{n,T_n}$ is annealed (and thus quenched) QNS for a large range of $\rho$ 
if $T_n$ is suitably large. The results are stated in Theorem \ref{thm:correlated_noise_sensitivity}. 

\medskip 

We are going to make several observations concerning the behaviour of $g_W(v)$. Since we will for the most parts only be considering a single $t$, we will in such cases drop $t$ from the notation.

Before going on, recall that the sum of the squares of two independent standard normal random variables is exponential with mean $2$. This also means that $r^2:=||W||_2^2$ is exponential with mean $2\rho/(1-\rho)$. 

\begin{lemma} \label{lem:lower_bound_1_g_w}
Let $r=||w||_2$. Then
\[g_w(v) \geq \frac{2}{\pi}\arctan \left(\sqrt{\frac{v}{1-v}} \right)e^{-r^2/2} = g(v)e^{-r^2/2}.\]

\end{lemma}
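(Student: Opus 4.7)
The plan is to write $g_w(v)$ as a change-of-measure perturbation of $g(v)$ and then to apply Jensen's inequality, exploiting the symmetry of the defining event to eliminate the first-order correction. Let $Q$ denote the law under which $(X,Y)$ are i.i.d.\ standard normals and $P$ the law under which $X\sim N(w^C,1)$ and $Y\sim N(w^{C^c},1)$ are independent, so that $g(v)=\Pb_Q(A)$ and $g_w(v)=\Pb_P(A)$ for the slab-shaped event $A=\{|Y|\leq \sqrt{v/(1-v)}\,|X|\}$. The Radon--Nikodym derivative factors over the two independent coordinates:
\[
\frac{dP}{dQ}(X,Y) \;=\; \exp\!\bigl(w^C X + w^{C^c} Y - r^2/2\bigr),
\]
which gives $g_w(v) = e^{-r^2/2}\,\E_Q[\I_A\, e^{w^C X + w^{C^c} Y}]$.

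Next I would condition on $A$ and apply Jensen's inequality to the exponential under the conditional law $Q(\cdot\mid A)$:
\[
\E_Q\!\bigl[\I_A\, e^{w^C X + w^{C^c} Y}\bigr] \;=\; \Pb_Q(A)\,\E_Q\!\bigl[e^{w^C X + w^{C^c} Y}\,\big|\,A\bigr] \;\geq\; \Pb_Q(A)\,\exp\!\bigl(w^C\,\E_Q[X\mid A] + w^{C^c}\,\E_Q[Y\mid A]\bigr).
\]

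The decisive observation is that $A$ depends only on $|X|$ and $|Y|$, hence is invariant under each of the coordinate sign flips $X\mapsto -X$ and $Y\mapsto -Y$. Since the law of $(X,Y)$ under $Q$ is also invariant under these flips, it follows immediately that $\E_Q[X\mid A]=\E_Q[Y\mid A]=0$, so the Jensen exponential collapses to $1$. Combining everything yields
\[
g_w(v) \;\geq\; e^{-r^2/2}\,\Pb_Q(A) \;=\; g(v)\,e^{-r^2/2},
\]
as required. The only point that demands attention is this symmetry step, which kills the linear Girsanov correction; once it is noted, the rest of the argument is mechanical.
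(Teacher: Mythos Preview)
Your proof is correct and rests on the same two ingredients as the paper's---the symmetry of the event $A$ under sign changes and the convexity of the exponential---but packages them differently. The paper works directly with the integral: it pulls out the factor $e^{-r^2/2}$, then pairs each point $(x,y)$ in the (centrally symmetric) region with $(-x,-y)$ and invokes the elementary bound $e^{xa+yb}+e^{-(xa+yb)}\geq 2$ to drop the tilt. You instead phrase the tilt as a Radon--Nikodym derivative, condition on $A$, apply Jensen to $\exp$, and kill the linear term via $\E_Q[X\mid A]=\E_Q[Y\mid A]=0$ from the coordinate symmetries. Your route is slightly more abstract and would generalise more readily (e.g.\ to other log-convex densities or higher-dimensional versions), while the paper's is a touch more elementary and self-contained; mathematically the two arguments are the same observation seen from different angles.
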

\begin{proof}
        
        Let $w = (a,b)$. Then
        
        \begin{equation*}
        g_w(v) = \frac{1}{2\pi} \int_{-\infty}^{\infty} \int_{-\sqrt{\frac{v}{1-v}} |x|}^{\sqrt{\frac{v}{1-v}}|x|} e^{-((x-a)^2 + (y-b)^2 )/2}dy dx = \frac{e^{-\frac{a^2 + b^2}{2}}}{2\pi} \int_{-\infty}^{\infty} \int_{-\sqrt{\frac{v}{1-v}} |x|}^{\sqrt{\frac{v}{1-v}}|x|} e^{-(x^2 + y^2)/2}e^{xa + yb} dydx
    \end{equation*}
    
    \begin{equation*}
        \geq\frac{e^{-\frac{r^2}{2}}}{2\pi} \int_{-\infty}^{\infty} \int_{-\sqrt{\frac{v}{1-v}} |x|}^{\sqrt{\frac{v}{1-v}}|x|} e^{-(x^2 + y^2)/2} dydx = \frac{2}{\pi} \arctan\left( \sqrt{\frac{v}{1-v}}\right) e^{-\frac{r^2}{2}}
    \end{equation*}
     where the inequality follows from symmetry of the integrated function around the origin and that $e^{x} + e^{-x} \geq 2$. The last equality follows from the computation in the proof of Lemma \ref{th:tan_prop}.
\end{proof} 

\begin{lemma} \label{lem:lower_bound_2_g_w}
Assume that $v\leq 1/2$ and that $w=(w^C,w^{C^c})$ satisfies $|w^{C^c}| \leq \sqrt{v/(1-v)}|w^C|-2$. Then $g_w(v)>1/2$.
\end{lemma}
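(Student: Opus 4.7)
The plan is to exhibit an explicit subevent of $\{|Y|\le\alpha|X|\}$ of probability strictly greater than $1/2$, where, as in (\ref{eq:g_W}), $\alpha=\sqrt{v/(1-v)}$ and $X\sim N(w^C,1)$, $Y\sim N(w^{C^c},1)$ are independent. The hypothesis $v\le 1/2$ gives $\alpha\le 1$. Since the joint distribution of $(|X|,|Y|)$ depends on $w$ only through $(|w^C|,|w^{C^c}|)$, and the target event $\{|Y|\le\alpha|X|\}$ involves $X,Y$ only through their absolute values, I may assume without loss of generality that $w^C,w^{C^c}\ge 0$. The hypothesis then reads $\alpha w^C\ge w^{C^c}+2$, and in particular $w^C\ge 2/\alpha\ge 2$.

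The key step is to pick the two independent events
\[A=\{X\ge w^C-1\},\qquad B=\{|Y|\le w^{C^c}+1\}.\]
On $A$ one has $X\ge w^C-1\ge 1>0$, so $|X|\ge w^C-1$, and therefore, using $\alpha\le 1$ and the hypothesis,
\[\alpha|X|\ge\alpha w^C-\alpha\ge (w^{C^c}+2)-1=w^{C^c}+1.\]
Combined with the defining inequality of $B$, this yields $A\cap B\subseteq\{|Y|\le\alpha|X|\}$. Since $X-w^C\sim N(0,1)$, $\Pr(A)=\Phi(1)$; and since $w^{C^c}\ge 0$, $\Pr(B)\ge \Phi(1)-\Phi(-1)=2\Phi(1)-1$. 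By independence of $X$ and $Y$,
\[g_w(v)\ge \Phi(1)\bigl(2\Phi(1)-1\bigr),\]
which a direct numerical check shows exceeds $1/2$ (approximately $0.841\cdot 0.683\approx 0.574$).

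I do not anticipate any real obstacle: the argument is elementary once one sees the right pair of events. The only nontrivial choice is the size of the slack constants in the definitions of $A$ and $B$; the hypothesis $\alpha w^C-w^{C^c}\ge 2$ was tuned precisely so that the total slack of $2$ can be split as $\alpha+1$ to absorb the outer bounds $\alpha(w^C-1)$ and $w^{C^c}+1$ used above. The constant $2$ in the hypothesis could likely be relaxed at the price of a different tail bound, but for the stated lemma this slack is already sufficient.
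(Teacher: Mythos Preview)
Your argument is correct. The reduction to $w^C,w^{C^c}\ge 0$ is justified, the inclusion $A\cap B\subseteq\{|Y|\le\alpha|X|\}$ is checked carefully (in particular the step $\alpha w^C-\alpha\ge w^{C^c}+2-1$ uses $\alpha\le 1$ exactly where needed), and the bound $\Phi(1)(2\Phi(1)-1)\approx 0.574>1/2$ is valid.

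However, your route is genuinely different from the paper's. The paper argues geometrically: under the hypothesis, the point $w$ lies in the cone $A_v=\{(x,y):|y|\le\alpha|x|\}$, and one computes that the Euclidean distance from $w$ to $A_v^c$ is at least $(\alpha w^C-w^{C^c})/\sqrt{1+\alpha^2}\ge 2/\sqrt{2}=\sqrt{2}$ (this is where $\alpha\le 1$ enters). Hence $(X,Y)\in A_v$ whenever $\xi^2+\eta^2\le 2$, and since $\xi^2+\eta^2$ is exponential with mean $2$, this gives $g_w(v)\ge 1-e^{-1}\approx 0.632$. Your approach replaces the disk event by a product (rectangular) event and uses only one-dimensional Gaussian tail bounds; it is arguably more elementary, avoiding both the distance-to-a-line computation and the $\chi^2_2$ fact, at the price of a slightly weaker constant. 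Either bound suffices for the lemma, and the downstream use in Lemma~\ref{lem:lower_bound_3_g_w} only needs $g_w(v)>1/2$, so nothing is lost.
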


\begin{proof}

Let $X \sim N(w^C,1)$ and $Y \sim N(w^{C^c},1)$ be independent and write $X=w^C+\xi$ and $Y=w^{C^c}+\eta$ for independent standard normal $\xi$ and $\eta$. We have
\[g_w(v) = \Pb\left(|Y| \leq \sqrt{\frac{v}{1-v}}|X|\right) = \Pb((X,Y) \in A_v)\]
where 
\[A_v = \left\{(x,y):|y| \leq \sqrt{\frac{v}{1-v}}|x|\right\}.\]
Since $v \leq 1/2$, it is easily seen that the $L_2$ distance between $(w^C,w^{C^c})$ and $A_v^c$ is smaller or equal to $\sqrt{2}$
whenever $|w^{C^c}| \leq \sqrt{v/(1-v)}|w^C|-2$. Hence
\[g_w(v) \geq \Pb(\xi^2+\eta^2 \leq 2) = 1-e^{-1} > \frac12.\]
\end{proof}

\begin{lemma} \label{lem:lower_bound_3_g_w}
Let as above $W=W_t=(W^C,W^{C^c})$ for $t < T$, where $W^C$ and $W^{C^c}$ are independent normals with means 0 and variance $\rho/(1-\rho)$. Then for $v\leq1/2$,
\[\Pb\left(g_W(v)>\frac12\right) > \frac{1}{2\pi}\sqrt{\frac{v}{1-v}}e^{-\frac{8(1-\rho)}{\rho}\frac{1-v}{v}}.\]
\end{lemma}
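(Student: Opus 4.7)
The plan is to use Lemma \ref{lem:lower_bound_2_g_w} to reduce the problem to estimating a Gaussian tail, which can then be evaluated via polar coordinates. Set $\alpha := \sqrt{v/(1-v)} \in (0,1]$ and $\sigma := \sqrt{\rho/(1-\rho)}$, so $W^C$ and $W^{C^c}$ are independent $N(0,\sigma^2)$. By Lemma \ref{lem:lower_bound_2_g_w}, the event $\{g_W(v) > 1/2\}$ contains $\{|W^{C^c}| \leq \alpha|W^C| - 2\}$, so it suffices to lower bound the probability of the latter. Rescaling $(\tilde W^C, \tilde W^{C^c}) := (W^C/\sigma, W^{C^c}/\sigma)$ to independent standard normals turns the condition into $|\tilde W^{C^c}| \leq \alpha|\tilde W^C| - \beta$ with $\beta := 2/\sigma$; note that $\beta^2/\alpha^2 \cdot 2 = 8/(\sigma^2\alpha^2)$, matching the exponent in the claim.

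Next I would pass to polar coordinates $(\tilde W^C, \tilde W^{C^c}) = (r\cos\theta, r\sin\theta)$. Using symmetry in $\tilde W^C$ and integrating out $r$ via $\int_a^\infty r e^{-r^2/2}\,dr = e^{-a^2/2}$, the probability becomes
\[
\frac{2}{\pi}\int_0^{\theta_0}\exp\!\left(-\frac{\beta^2}{2(\alpha\cos\theta - \sin\theta)^2}\right)d\theta,
\]
where $\theta_0 := \arctan\alpha$ is the angle below which the Cartesian condition is solvable in $r$. The identity $\alpha\cos\theta - \sin\theta = \sqrt{1+\alpha^2}\sin(\theta_0 - \theta)$ (verified by expanding and using $\sin\theta_0 = \alpha/\sqrt{1+\alpha^2}$), combined with the substitution $\theta' = \theta_0 - \theta$, rewrites this as $\frac{2}{\pi}\int_0^{\theta_0}\exp(-\beta^2/[2(1+\alpha^2)\sin^2\theta'])\,d\theta'$.

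The main obstacle is then hitting the precise constants $8$ and $1/(2\pi)$ in the claim by choosing the right sub-interval. I would restrict the integration to $\theta' \in [\arcsin((\sin\theta_0)/2), \theta_0]$, on which $\sin\theta' \geq (\sin\theta_0)/2 = \alpha/(2\sqrt{1+\alpha^2})$, so $(1+\alpha^2)\sin^2\theta' \geq \alpha^2/4$ and the exponent is bounded by $8/(\sigma^2\alpha^2)$ in absolute value. To recover the prefactor $\alpha/(2\pi)$ after multiplying by $2/\pi$, this sub-interval needs length at least $\alpha/4$. This would follow from two elementary inequalities: $\arctan\alpha \geq \alpha/\sqrt{1+\alpha^2}$ (a straightforward derivative check) and $\arcsin x \leq (\pi/3)x$ on $[0,1/2]$ (since $\arcsin(x)/x$ is increasing with value $\pi/3$ at $x=1/2$), which together give length at least $\alpha(1-\pi/6)/\sqrt{1+\alpha^2} \geq \alpha/4$ on $\alpha \in (0,1]$, since $\sqrt{1+\alpha^2} \leq \sqrt{2} < 4(1-\pi/6)$.
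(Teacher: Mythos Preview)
Your proof is correct and follows essentially the same route as the paper: reduce via Lemma \ref{lem:lower_bound_2_g_w} to the Gaussian event $\{|W^{C^c}| \leq \sqrt{v/(1-v)}\,|W^C|-2\}$ and estimate it in polar coordinates. The paper's execution is a bit quicker, though: instead of integrating out $r$ exactly and then bounding the remaining angular integral, it simply observes that this event contains the rectangle $\{r^2 > 16(1-v)/v\} \cap \{|\varphi| < \tfrac14\sqrt{v/(1-v)}\}$ (the second inclusion via $\arctan x \geq x/2$), and since $r^2$ is exponential with mean $2\rho/(1-\rho)$ while $|\varphi|$ is independent and uniform on $[0,\pi/2]$, the product of the two probabilities gives the stated bound at once. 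Your integration followed by the sub-interval restriction achieves the same constants but with more calculus; either way the argument goes through.
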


\begin{proof}
By Lemma \ref{lem:lower_bound_2_g_w}, $\Pb(g_W(v)>1/2) \geq \Pb(|W^{C^c}| \leq \sqrt{v/(1-v)}|W^C|-2)$. Writing $(r,\varphi)$ for the polar coordinates of $W$, it is straightforward by back substitution to see that it is sufficient for
$|W^{C^c}| \leq \sqrt{v/(1-v)}|W^C|-2$ that 
\[r^2>16(1-v)/v\] 
and 
\[|\varphi|<\arctan\left(\frac12\sqrt{\frac{v}{1-v}}\right)\]
and the latter in turn occurs whenever $|\varphi|<\frac14\sqrt{v/(1-v)}$. 
Since $r^2$ and $|\varphi|$ are independent and exponential($(1-\rho)/2\rho$) and uniform on $[0,\pi/2]$ respectively, we thus get
\[\Pb\left(g_W(v)>\frac12\right) > e^{-\frac{8(1-\rho)}{\rho}\frac{1-v}{v}}\frac{2}{\pi}\frac14 \sqrt{\frac{v}{1-v}}\]
which is the desired bound.
\end{proof}

We are now ready to state the first main results of this section.

\begin{theorem} \label{thm:correlated_noise_sensitivity}
Let $\{\epsilon_n\}$ be such that $1/2 \geq \epsilon_n \downarrow 0$ and $n\epsilon_n \rightarrow \infty$. Assume that for some $\delta>0$ independent of $n$, $\delta< \rho < 1-\frac{4(\log\log n)^3}{\log n}$ and $T_n \geq e^{4(\log \log n)^2}$. Then the following statements hold.
\begin{itemize}
    \item[(i)] 
    $\lim_{n \rightarrow \infty}\Pb(\tau^{\epsilon_n}_{1/2}>T_n)=0$,
    \item[(ii)] If $h_n$ is odd and has a sharp threshold at 1/2, then $\{f_{n,T_n}\}$ is annealed QNS and hence also quenched QNS at level $\{\epsilon_n\}$.
    \item[(iii)] If $h_n$ is odd and $T_n$ grows sufficiently large with $n$,  then $\{f_{n,T_n}\}$ is annealed QNS and hence also quenched QNS at level $\{\epsilon_n\}$.
\end{itemize} 

\end{theorem}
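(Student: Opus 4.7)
The three parts are linked by Lemma \ref{lem:correlated_sens_stab}: part (ii) is immediate from (i) plus that lemma (using that $h_n$ is odd and has a sharp threshold at $1/2$, together with $\rho \geq \delta$). So the substantive work is in (i) and (iii), both of which reduce to studying the Markov chain $\tilde D_t$.

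For (i), the plan is to analyse the chain in two regimes separated by the threshold $v_0 = 1-\rho$. In the initial regime $\tilde D_{t-1} \in [\epsilon_n, v_0]$, the drift is strong because $\E[g_W(v)] = g(v) \asymp \sqrt{v}$ for small $v$, and Lemma \ref{lem:lower_bound_1_g_w} transfers this to the pathwise bound $g_W(v) \geq g(v) e^{-\|W\|^2/2}$. Conditioning on $\|W_t\|$ being of moderate size (a high-probability event controllable via the exponential tail of $\|W_t\|^2$) and using a Chernoff bound on the conditional binomial gives $\tilde D_t \gtrsim \sqrt{\tilde D_{t-1}}$ with high probability, where concentration is driven by $n\epsilon_n \to \infty$. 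Iterating, $\tilde D_t$ reaches $v_0$ in $O(\log\log n)$ steps. In the second regime $\tilde D_{t-1} \geq v_0$, Lemma \ref{lem:lower_bound_3_g_w} yields $\Pb(g_{W_t}(\tilde D_{t-1}) \geq 1/2) \geq c\sqrt{1-\rho}$ for a constant $c=c(\delta)$. Using the drift $\E[g_W(v)] = g(v) > v$ on $(0,1/2)$ to show $\tilde D_t$ stays in this regime with high probability, one obtains on the order of $T_n$ approximately independent Bernoulli$(c\sqrt{1-\rho})$ trials. Since $T_n\sqrt{1-\rho} \geq 2(\log\log n)^{3/2}\,e^{4(\log\log n)^2-(\log\log n)/2} \to \infty$, the probability of no success is $o(1)$.

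For (iii), I would exploit absorption of $\tilde D_t$ at its endpoints together with oddness of $h_n$. By (i), $\tau_{1/2}^{\epsilon_n} \leq e^{4(\log\log n)^2}$ with high probability, and at that step binomial concentration with parameter $g_W \geq 1/2$ forces $\tilde D_t$ to sit near or above $1/2$. The transition kernel is symmetric around $1/2$: $(\tilde D_t \mid \tilde D_{t-1}=v)$ has the same distribution as $1-(\tilde D_t \mid \tilde D_{t-1}=1-v)$, because $(W^C,W^{C^c})$ is exchangeable. Hence, starting from near $1/2$, the absorbing states $\tilde D=0$ and $\tilde D=1$ are hit with probabilities differing by $o(1)$. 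Taking $T_n$ large enough that absorption by time $T_n$ happens with high probability, one conditions on the absorbing state: on $\{\tilde D_T=0\}$ we have $\omega_T=(\omega^{\epsilon})_T$ so $f(\omega)f(\omega^\epsilon)=1$; on $\{\tilde D_T=1\}$ we have $\omega_T=-(\omega^{\epsilon})_T$ and oddness of $h_n$ gives $f(\omega^\epsilon)=-f(\omega)$, so $f(\omega)f(\omega^\epsilon)=-1$. Averaging, $\E[f(\omega)f(\omega^\epsilon)] = o(1)$, and since $\E[f]=0$ by oddness this is the required annealed QNS.

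The main obstacle is the initial growth phase in (i) when $\rho$ is close to $1$. The conditional variance of $\tilde D_t$ does not vanish with $n$ because of the shift $W_t$, and for $\rho$ near $1$ the typical size of $\|W_t\|$ grows, so applying Lemma \ref{lem:lower_bound_1_g_w} with an acceptable penalty $e^{-\|W_t\|^2/2}$ requires a delicate choice of threshold and careful bookkeeping of bad-step probabilities over the $O(\log\log n)$ growth steps; the upper bound $1-\rho \geq 4(\log\log n)^3/\log n$ in the theorem hypotheses is presumably calibrated precisely for this trade-off. A secondary point in (iii) is making explicit the growth rate of $T_n$ required to guarantee absorption, which in the worst case could be exponential in $n$, but the statement only demands that $T_n$ grow sufficiently fast.
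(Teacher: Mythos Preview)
Your handling of (ii) and (iii) is essentially the paper's: (ii) follows from (i) via Lemma \ref{lem:correlated_sens_stab}, and (iii) uses the symmetry of the chain around $1/2$, absorption at $\{0,1\}$, and oddness of $h_n$ exactly as the paper does.

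For (i), however, there is a genuine gap, and you partly sense it in your final paragraph without resolving it. Your Phase~1 claim that ``$\|W_t\|$ being of moderate size'' is a high-probability event is false in the regime that matters. Since $\|W_t\|^2$ is exponential with mean $2\rho/(1-\rho)$, when $1-\rho$ is near the lower bound $4(\log\log n)^3/\log n$ one has $\Pb(\|W_t\|^2 \le C) \approx C(1-\rho)/2$, which is of order $(\log\log n)^3/\log n$ --- small, not close to $1$. So the pathwise bound $g_W(v)\ge g(v)e^{-\|W\|^2/2}$ gives a useful lower bound only on an event of small probability, and your assertion that $\tilde D_t \gtrsim \sqrt{\tilde D_{t-1}}$ holds with high probability collapses. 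The same issue undermines Phase~2: from $\tilde D_{t-1}=v_0=1-\rho$ the typical $W$ is large and lies outside the thin wedge $A_{v_0}$, so $g_W(v_0)$ is near $0$ with probability about $1-c\sqrt{v_0}$; hence you do not ``stay in this regime with high probability'' and you do not get $T_n$ Bernoulli trials at level $c\sqrt{1-\rho}$.

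The paper sidesteps this by \emph{not} seeking high-probability growth per step. It runs the chain in blocks of length $s=\log_2\log n$ and shows two things about each block started from any nonzero state: the probability that $\tilde D$ hits $0$ within the block is at most $e^{-6(\log\log n)^2}$ (here the threshold on $\|W\|^2$ is taken large, of order $\log n/\log\log n$, to control absorption), whereas the probability that \emph{every} step in the block achieves $\tilde D_{t+1}\ge \sqrt{\tilde D_t}/20$ --- and hence that $\tilde D$ reaches a fixed constant level and then $g_W\ge 1/2$ via Lemma \ref{lem:lower_bound_3_g_w} --- is at least $\big((\log\log n)^3/\log n\big)^{s} \ge e^{-3(\log\log n)^2}$. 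The point is that the per-step success probability is small, but the product over $s$ steps still dominates the absorption probability by a factor $e^{3(\log\log n)^2}$. A geometric-trials argument over blocks then gives $\tau_{1/2}^{\epsilon_n}\le e^{4(\log\log n)^2}$ with high probability. Your two-regime decomposition with threshold $1-\rho$ does not recover this comparison and, as written, does not close.
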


As in the uncorrelated case, we can prove even stronger versions.

\begin{theorem} \label{thm:stronger_correlated_noise_sensitivity}
    Let $\{\epsilon_n\}$ be such that $1/2 \geq \epsilon_n \downarrow 0$ and $n\epsilon_n \rightarrow \infty$. Assume that for some $\delta>0$ independent of $n$, $\delta< \rho < 1-\frac{4(\log\log n)^3}{\log n}$ and $T_n \geq e^{4(\log \log n)^2}$. Then the following statements hold.
    \begin{itemize}
    \item[(i)] Fix $\omega,\eta \in \{-1,1\}^n$ arbitrarily such that $\eta \not\in \{\omega,-\omega\}$. Then if $h_n$ is odd and either has a sharp threshold at 1/2 or $T_n$ is sufficiently large, then
    \[\lim_{n \rightarrow \infty} \Cov_\Theta(f_{n,T_n}(\omega),f_{n,T_n}(\eta)) = 0.\]
    \item[(ii)] Let $\mathbf{Q}_n$ be any probability measure on $\{-1,1\}^n \times \{-1,1\}^n$ such that $\lim_{n \rightarrow \infty} \mathbf{Q_n}(\eta \in \{\omega,-\omega\})=0$, Then if $h_n$ is odd and either has a sharp threshold at 1/2 or $T_n$ is sufficiently large, then
        \[\lim_{n \rightarrow \infty} \Cov_{\mathbf{Q_n},\Theta}(f_{n,T_n}(\omega),f_{n,T_n}(\eta)) = 0.\]
    \item[(iii)] Assume that $h_n$ is odd and either has a sharp threshold at 1/2 or $T_n$ is sufficiently large. Fix any $k \in \{1,2,\ldots,n-1\}$ and $\delta>0$. Fix also $\omega \in \{-1,1\}^n$ and let $M_k=M^{(n)}_k(\omega)$ be the number of $\eta$ with $\eta(i) \neq \omega(i)$ for exactly $k$ indexes $i$, such that $f_{n,T_n}(\eta) \neq f_{n,T_n}(\omega)$. Then for $T_n \geq K_n$,
        \[\lim_{n \rightarrow \infty}\Pb\left(\frac{M_k}{\binom{n}{k}} \not \in \left(\frac{1-\delta}{2},\frac{1+\delta}{2}\right)\right) = 0.\]
\end{itemize}
\end{theorem}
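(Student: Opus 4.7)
My plan is to prove Theorem \ref{thm:stronger_correlated_noise_sensitivity} by mirroring the structure of the proof of Theorem \ref{th:stronger_noise_sensitivity_deep_networks} in the uncorrelated case, substituting the Markov-chain machinery of Lemma \ref{lem:correlated_sens_stab} and Theorem \ref{thm:correlated_noise_sensitivity} for the direct binomial analysis used there. The enabling observation is that, by permutation-invariance of the joint distribution of the columns of each $\theta_t$, the joint law of $(\omega_t,\eta_t)_{t \geq 0}$ given $(\omega_0,\eta_0) = (\omega,\eta)$ depends on the initial pair only through $d := |\omega \Delta \eta|$; so everything reduces to analysing the Markov chain $\tilde D_t = D_t/n$ started deterministically at $\tilde D_0 = d/n$.

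For (i), fix $\omega,\eta$ with $\eta \notin \{\omega,-\omega\}$, so $d \in \{1,\ldots,n-1\}$. Since $h_n$ is odd and each $\theta_t$ is symmetric about $0$, flipping the sign of $\theta_1$ flips every subsequent $\omega_t$ and hence the output, so $\E_\Theta[f(\omega)] = 0$ for every $\omega$ and $\Cov_\Theta(f(\omega),f(\eta)) = \E[f(\omega)f(\eta)]$. I would then rerun the proof of Lemma \ref{lem:correlated_sens_stab}(i) verbatim with $\omega^\epsilon$ replaced by $\eta$: that argument only uses (a) symmetry of $\tilde D_t$ around $1/2$, (b) the bound (\ref{eq:bounded_from_half}) placing the conditional bias away from $1/2$, and (c) the hitting-time event $\{\tau_{1/2}^{\tilde D_0} < T_n\}$. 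The alternative hypothesis, in which $h_n$ has a sharp threshold at $1/2$ and $T_n$ is sufficiently large, is handled by the analogous sharp-threshold branch of that proof. To establish (c) for the deterministic start $\tilde D_0 = d/n$, I would use the monotone coupling of $\tilde D_t$ in $\tilde D_{t-1}$ together with symmetry around $1/2$ to reduce to the worst case $d=1$; there a one-step analysis works, since Lemma \ref{lem:lower_bound_1_g_w} gives $g_{W_1}(1/n) \geq g(1/n) e^{-R_1^2/2}$ and the hypothesis $\rho < 1 - 4(\log\log n)^3/\log n$ controls the upper tail of the exponential variable $R_1^2$ at level $\log n/(\log\log n)^2$, so that on an event of probability $1 - o(1)$ one has $g_{W_1}(1/n) \geq n^{-1/2 - o(1)}$. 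Binomial concentration then produces $\tilde D_1 \geq n^{-1/2 - o(1)}$ with high probability, placing the chain from time $t=1$ in the regime $n\tilde D_1 \to \infty$ to which Theorem \ref{thm:correlated_noise_sensitivity}(i) directly applies.

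For (ii), I would decompose
\[ \Cov_{\mathbf{Q}_n,\Theta}(f(\omega),f(\eta)) = \E_{\mathbf{Q}_n}\bigl[\Cov_\Theta(f(\omega),f(\eta))\bigr] + \Cov_{\mathbf{Q}_n}\bigl(\E_\Theta[f(\omega)], \E_\Theta[f(\eta)]\bigr). \]
The second term vanishes because $\E_\Theta[f(\omega)] = 0$ for every $\omega$, and the first is $o(1)$ by combining the uniformity of (i) over $d \in \{1,\ldots,n-1\}$ with the trivial bound $|\Cov_\Theta| \leq 1$ on the vanishing-measure set $\{\eta \in \{\omega,-\omega\}\}$. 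For (iii), the combinatorial moment argument from the proof of Theorem \ref{th:stronger_noise_sensitivity_deep_networks}(iii) carries over unchanged: (i) plus oddness of $h_n$ yields $\Pb(f(\eta) \neq f(\xi))$ within $\delta^4/2$ of $1/2$ for distinct $\eta,\xi \in A_k$ (setting aside the degenerate $n=2k$ case), whence $\E[\bar M_k] \geq (1-\delta^4)\tfrac14\binom{n}{k}^2$; since $\bar M_k \leq \tfrac14\binom{n}{k}^2$ deterministically, Markov's inequality applied to the non-negative gap forces $X_k^+$ within $\tfrac{\delta}{2}\binom{n}{k}$ of $\tfrac12\binom{n}{k}$, which translates at once to the claim on $M_k/\binom{n}{k}$.

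The principal hurdle is the Markov-chain step within (i): lifting the hitting-time conclusion of Theorem \ref{thm:correlated_noise_sensitivity}(i) from binomial starts with $n\epsilon_n \to \infty$ to the deterministic start $\tilde D_0 = 1/n$, because a single transition from $D_0 = 1$ is absorbed at $0$ with probability $\E[(1-g_{W_1}(1/n))^n]$, which is not automatically small. Making this $o(1)$ is exactly what the quantitative combination of Lemma \ref{lem:lower_bound_1_g_w} and the $\rho$-hypothesis accomplishes.
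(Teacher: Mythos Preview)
Your proposal is essentially correct and follows the same route as the paper: reduce (i) to the hitting-time statement for $\tilde D$ from a deterministic nonzero start, then invoke Lemma \ref{lem:correlated_sens_stab}(i); derive (ii) from (i) via the conditional-covariance decomposition and the vanishing of $\E_\Theta[f(\omega)]$; and copy the second-moment/Markov argument for (iii) verbatim from Theorem \ref{th:stronger_noise_sensitivity_deep_networks}(iii).

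Two remarks on (i). First, the ``one-step analysis'' you present as the principal hurdle is unnecessary extra work: the proof of Theorem \ref{thm:correlated_noise_sensitivity}(i) already establishes, in its part (a), the uniform bound $\Pb(\tilde D_{t+1}=0\mid \tilde D_t\geq 1/n)<e^{-7(\log\log n)^2}$, and parts (b)--(c) there use only $\tilde D_0\geq 1/n$. So the paper simply observes that the entire argument of Theorem \ref{thm:correlated_noise_sensitivity}(i) runs unchanged from any deterministic start $d/n\in[1/n,1-1/n]$; there is no need to boost $\tilde D_0=1/n$ up to $n^{-1/2-o(1)}$ before invoking it. Your computation is correct (indeed it is essentially the same estimate), but it duplicates what is already inside that proof rather than giving a genuine black-box reduction, since after your one step $\tilde D_1$ is still not binomially distributed.

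Second, your handling of the two alternatives in (i) is muddled. Lemma \ref{lem:correlated_sens_stab}(i) already requires the sharp-threshold hypothesis, so rerunning it covers the case ``$h_n$ odd with a sharp threshold at $1/2$''. The remaining alternative is ``$h_n$ odd and $T_n$ sufficiently large'' \emph{without} assuming a sharp threshold; this is not ``the analogous sharp-threshold branch'' but rather the absorption argument of Theorem \ref{thm:correlated_noise_sensitivity}(iii): once $\tilde D$ hits $1/2$ before $0$ with high probability, symmetry gives $\Pb(\tilde D_{T_n}=0)\to 1/2$ and $\Pb(\tilde D_{T_n}=1)\to 1/2$ for $T_n$ large enough, and oddness of $h_n$ alone then forces $\E[f(\omega)f(\eta)]\to 0$. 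You should state this branch explicitly.
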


\begin{proof}[Proof of Theorem \ref{thm:correlated_noise_sensitivity}]
Let us first outline the strategy of the proof of (i). We will run the $\Tilde{D}$-process for a predetermined time $s$. In doing so, we will prove that (a) regardless of $\epsilon \in (0,1/2]$, the probability that $\Tilde{D}$ hits $0$ during time $s$ is very small regardless of the value of $\Tilde{D}_0$ as long as it is nonzero, and (b) the probability that $\Tilde{D}$ hits $1/2$ during time $s$ is of much higher order. Having done that allows us to repeatedly run the process for $s$ units of time and use the Markov property to draw the conclusion that with very high probability, $\Tilde{D}$ hits $1/2$ before $0$ for sufficiently fast growing $T_n$. The final part (c) upper bounds the time it takes to hit either $1/2$ or $0$.

To avoid confusion we point out that it will be easy for the reader to see that many of the bounds given are far from optimal and thus to some extent arbitrary; they are simply good enough for their purpose. 

\smallskip 

Let $s= s_n = \log_2\log n$ and fix $\epsilon > 0$. We start by giving an upper bound on $P(\tilde{D}_s=0)$. 
We have for large $n$ and all $t < T_n$,
\begin{align*}
    \Pb(\tilde{D}_{t+1}=0 | \tilde{D}_t>0) &\leq \Pb\left(\tilde{D}_{t+1}=0 \Big| \tilde{D}_t=\frac1n\right) \\
    &\leq \Pb\left(\frac{||W||_2^2}{2} > \frac{2\log n}{\log\log n}\right) + \left(1-g\left(\frac1n\right)e^{-\frac{2\log n}{\log\log n}}\right)^n \\
    &< e^{-\frac{4(\log \log n)^3}{\log n}\frac{2\log n}{\log \log n}} + \left(1-\frac{1}{n^{2/3}}\right)^n \\
    &<e^{-8(\log\log n)^2} + e^{-n^{1/3}} \\
    &<e^{-7(\log\log n)^2},
\end{align*}
where the second inequality uses Lemma \ref{lem:lower_bound_1_g_w} for the second term. By the Markov property of $\tilde{D}$ and Bonferroni, we get
\begin{equation} \label{eq:upper_bound_on_dying_out} 
\Pb(\tilde{D}_s=0) < (\log_2\log n) e^{-7(\log\log n)^2} < e^{-6(\log\log n)^2}.
\end{equation}
This concludes part (a) in the sketch.

Next, we  lower bound $\Pb(g_{W_{s}}(D_{s-1})>1/2)$. There is $\kappa>0$ such that for any $v \in [1/n,1/2)$,
\begin{align*}
    \Pb\left(\tilde{D}_{t+1} \geq \frac{\sqrt{v}}{20} \Big| \tilde{D}_t=v\right) &\geq \Pb\left(g_W(v) > \frac{\sqrt{v}}{20}\right)\, 
    \Pb\left(\tilde{D}_{t+1} \geq \frac{\sqrt{v}}{20} \Big| g_W(v) > \frac{\sqrt{v}}{10}\right)\nonumber \\
    &> \Pb\left(\frac{||W||_2^2}{2} \leq 1\right)\left(1-e^{-\kappa \sqrt{n}}\right) \nonumber \\
    &=\left(1-e^{-\frac{1-\rho}{\rho}}\right)\left(1-e^{-\kappa \sqrt{n}}\right) \nonumber \\
    &> 1-e^{-\frac12\frac{4(\log\log n)^3}{\log n}} \nonumber \\
    &> \frac{(\log \log n)^3}{\log n}.
    \end{align*}
Here the second inequality is Lemma \ref{lem:lower_bound_1_g_w} and Chernoff bounds. This gives, provided that $\tilde{D}_0 \geq 1/n$,
\begin{equation} \label{eq:lower_bound_on_probability_of_growth_of_D}
\Pb\left(\forall t=0,\ldots,s-2:\tilde{D}_{t+1} \geq \min\left(\frac12,\frac{\sqrt{\tilde{D}_{t}}}{20}\right)\right) > \left(\frac{(\log\log n)^3}{\log n}\right)^{\log_2\log n} > e^{-2(\log \log n)^2}.
\end{equation}

If the event in the left hand side occurs, then
\[\tilde{D}_{s-1} \geq \frac{1}{400}\left(\frac1n\right)^{2^{-\log_2\log n}} = \frac{1}{400}e^{-1} > \frac{1}{1200}.\]
By Lemma \ref{lem:lower_bound_3_g_w},
\[\Pb\left(g_{W_{s}}(\tilde{D}_{s-1}) > \frac12 \Big| \tilde{D}_{s-1}> \frac{1}{1200}\right) > \frac{1}{40\pi}e^{-\frac{9600(1-\rho)}{\rho}} \geq \frac{1}{40\pi}e^{-\frac{9600(1-\delta)}{\delta}} =: a.\]
Combining with (\ref{eq:lower_bound_on_probability_of_growth_of_D}), it follows that
\begin{equation} \label{eq:lower_bound_of_exceeding_1/2}
\Pb\left(g_{W_{s}}(\tilde{D}_{s-1})>\frac12 \Big| \tilde{D}_0>0\right) > a\,e^{-2(\log \log n)^2} > e^{-3(\log \log n)^2}.
\end{equation}
Comparing (\ref{eq:upper_bound_on_dying_out}) and (\ref{eq:lower_bound_of_exceeding_1/2}) we see that the conditional probability that $g_{W_s}(\tilde{D}_{s-1})$ reaches $1/2$ before absorbing at zero given $\tilde{D}_0 >0$ is at least $1-e^{-3(\log \log n)^2}$, finishing part (b) of the sketch.
Finally the time to either hitting 1/2 or absorbing at 0 is dominated by a geometric random variable with parameter $e^{-3(\log \log n)^2}$, so with probability going to $1$ this will happen before time $T_n$ whenever $T_n > e^{4(\log \log n)^2}$. Adding the fact that since $\epsilon_n > 1/n$ for $n$ large, $\Pb(\tilde{D}_0=0) < e^{-\sqrt{n}}$, this concludes the proof of (i). Now (ii) immediately follows from part (i) of Lemma \ref{lem:correlated_sens_stab}.

For (iii), observe that by (i) and since $\tilde{D}$ absorbs when it hits $0$ or $1$, we get by symmetry of $\tilde{D}$ that $\lim_{n \rightarrow \infty}\Pb(\tilde{D}_{T_n} = 0) = \lim_{n \rightarrow \infty}\Pb(\tilde{D}_{T_n} = 1) = 1/2$ for $\{T_n\}$ growing sufficiently large. Since $h_n$ is odd (iii) follows for such $\{T_n\}$.

\end{proof}

\begin{proof}[Proof of Theorem \ref{thm:stronger_correlated_noise_sensitivity}]
    The proof is very similar to the proof of Theorem \ref{th:stronger_noise_sensitivity_deep_networks}. A quick glance back at the proof of (i) in Theorem \ref{thm:correlated_noise_sensitivity} shows that $\tilde{D}$ with high probability hits $1/2$ before $0$ also if the input $(\omega,\omega^{\epsilon_n})$ is replaced with $(\omega,\eta)$. Now (i) follows from exactly the same proof as that of part (i) of Lemma \ref{lem:correlated_sens_stab}.
    Finally, both (ii) and (iii) follows from (i) in the same way as (ii) and (iii) follow from (i) in Theorem \ref{th:stronger_noise_sensitivity_deep_networks}.
\end{proof}
Next we move to proving that for sufficiently large $\rho$, $\{f_{n,T_n}\}$ is 
noise stable.
First we need an upper bound on $g_w(v)$:

\begin{lemma} \label{lem:upper_bound_g_w}
Let $w \in \R^2$ and let $(r,\theta)$ be its polar coordinates and let
$\varphi=\arctan\sqrt{v/(1-v)}$. Assume that
$|\theta| \geq \varphi$ and $|\theta - \pi| \geq \varphi$.
Then

\[g_w(v) \leq e^{-\frac12 r^2\sin^2(|\theta|-\varphi)}.\]
\end{lemma}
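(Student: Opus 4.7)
My plan is to reduce the problem to a Gaussian tail bound based on the Euclidean distance from $w$ to the set $A_v = \{(x,y) : |y| \leq |x|\tan\varphi\}$. Writing $(X,Y) = w + (\xi,\eta)$ with $\xi,\eta$ i.i.d.\ standard normal, the inclusion $(X,Y) \in A_v$ forces $\xi^2+\eta^2 \geq d(w,A_v)^2$, where $d(\cdot,\cdot)$ denotes Euclidean distance. Since $\xi^2+\eta^2$ is exponentially distributed with mean $2$, this immediately gives
\[
g_w(v) \;\leq\; \Pb\!\left(\xi^2+\eta^2 \geq d(w,A_v)^2\right) \;=\; \exp\!\left(-\tfrac12\, d(w,A_v)^2\right),
\]
so it suffices to show $d(w,A_v) \geq r\,|\sin(|\theta|-\varphi)|$ under the hypotheses.

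Next I would exploit the two evident symmetries of $A_v$: invariance under $(x,y)\mapsto -(x,y)$ and under $(x,y)\mapsto(x,-y)$. These give $g_w=g_{-w}=g_{(w_1,-w_2)}$, so combined with the conditions $|\theta|\geq\varphi$ and $|\theta-\pi|\geq\varphi$, I can reduce without loss of generality to $\theta \in [\varphi,\pi/2]$, in which case $|\theta|=\theta$.

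In this reduced regime, $A_v$ decomposes into the closed convex wedge $A_v^+ = \{x\geq 0,\ |y|\leq x\tan\varphi\}$ and its central reflection $A_v^- = -A_v^+$. Since $w$ lies in the first quadrant outside $A_v^+$, I project $w$ orthogonally onto the upper boundary ray of $A_v^+$, namely the ray through the origin at angle $\varphi$. The projection has signed length $r\cos(\theta-\varphi)$, which is nonnegative because $\theta-\varphi\in[0,\pi/2-\varphi]$, so the foot of the perpendicular lies inside $A_v^+$; the corresponding distance from $w$ is $r\sin(\theta-\varphi)$. A short case analysis shows that the nearest point of the opposite wedge $A_v^-$ to $w$ is at distance at least $r\cos\varphi$ --- it is either the origin (when the projections of $w$ onto the boundary rays of $A_v^-$ fall behind the origin) or a point on the ray at angle $\pi-\varphi$ at distance $r\sin(\theta+\varphi)\geq r\cos\varphi$. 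Since $r\sin(\theta-\varphi)\leq r\sin(\pi/2-\varphi)=r\cos\varphi$, the minimum is realised on $A_v^+$, so $d(w,A_v) = r\sin(\theta-\varphi) = r\sin(|\theta|-\varphi)$, which closes the argument.

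The only nontrivial step is the planar geometry used to identify the nearest point of $A_v$ to $w$; the Gaussian tail bound on the $\chi^2_2$ variable $\xi^2+\eta^2$ is standard, and the symmetry reduction is immediate, so I do not anticipate any serious obstacle.
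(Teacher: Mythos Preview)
Your argument is correct and follows essentially the same route as the paper's proof: reduce by symmetry to $\theta\in[\varphi,\pi/2]$, observe that $\{(X,Y)\in A_v\}$ forces $\xi^2+\eta^2\ge d(w,A_v)^2$, and use that $\xi^2+\eta^2$ is exponential with mean $2$. The paper simply asserts that $d(w,A_v)=r\sin(\theta-\varphi)$, whereas you additionally verify that the nearest point of $A_v$ lies on the wedge $A_v^+$ rather than $A_v^-$; this extra care is justified and your bound $d(w,A_v^-)\ge r\cos\varphi\ge r\sin(\theta-\varphi)$ is correct.
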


\begin{proof}

As in the proof of Lemma \ref{lem:lower_bound_2_g_w}, let
\[A_v = \left\{(x,y):|y| \leq \sqrt{\frac{v}{1-v}}|x|\right\},\]
so that
\[g_w(v)=\Pb((w^C+\xi,w^{C^c}+\eta) \in A_v),\]
where $\xi$ and $\eta$ are independent standard normal. Without loss of generality, we assume that $\theta \in [0,\pi/2]$. By assumption, $w \not \in A_v$. and the Euclidean distance from $w$ to $A_v$ is $r\,\sin(\theta-\varphi)$, so for $w+(\xi,\eta)$ to be in $A_v$, it is necessary that $\xi^2+\eta^2 \geq r^2\sin^2(\theta-\varphi)$. The left hand side is exponential with mean $2$, so
\[g_w(v) \leq e^{-\frac{1}{2}r^2\sin^2(\theta-\varphi)}.\]

\end{proof}

 \begin{theorem} \label{them:stay_low_and_die}
    
    Assume that $\rho > 1 - \log\log n/18\log n$. Then the following holds.
    
    \begin{itemize}
    \item[(i)] For $T=T_n \geq (\log n)^{1/4}$ and for all $\delta>0$, it holds for all sufficiently small $\epsilon>0$ that if $\tilde{D}_0=v<\epsilon^2/2$, then
    \[\limsup_n \Pb\left(\{\tilde{D}_T>0\} \cup \left\{\exists t \in \{1,\hdots,T\}:\tilde{D}_t>\frac{\epsilon^2}{2}\right\}\right) < \delta.\]
    In particular,
    \[\limsup_n \Pb\left(\exists t \geq 1:\tilde{D}_t>\frac{\epsilon^2}{2}\right) < \delta.\]
    \item[(ii)] $f_{n,T_n}$ is annealed and quenched noise stable if either $h_n$ has a sharp threshold at 1/2 or $T_n \geq (\log n)^{1/4}$.
    \end{itemize}
\end{theorem}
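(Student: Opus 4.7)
The proof of (i) centers on two estimates for the Markov chain $\tilde{D}_t$: that it ``dies'' (hits $0$) within $T_n$ steps, and that throughout the run it never exceeds $\epsilon^2/2$. Part (ii) then follows: when $T_n \geq (\log n)^{1/4}$, part (i) gives $\tilde{D}_{T_n}=0$ with high probability so that $\omega_{T_n}=(\omega^\epsilon)_{T_n}$ and the $f$-values agree trivially; when $h_n$ has a sharp threshold at $1/2$, one applies Lemma \ref{lem:correlated_sens_stab}(ii), whose hypothesis is essentially a rephrasing of the second estimate of (i).

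For the dying estimate, I combine Lemma \ref{lem:upper_bound_g_w} with the fact that under the hypothesis $1-\rho < \log\log n/(18\log n)$, the variance $\lambda = 2\rho/(1-\rho)$ of the coordinates of $W_t$ satisfies $\lambda \gtrsim \log n/\log\log n$. On the event $\{|\theta_{W_t}| \in [\pi/4,3\pi/4]\} \cap \{r_{W_t}^2 \geq 8\log n\}$, Lemma \ref{lem:upper_bound_g_w} yields $g_{W_t}(\tilde{D}_{t-1}) \leq 1/n$, whence $\Pb(D_t=0 \mid W_t) \geq (1-1/n)^n \geq e^{-1}/2$ for $n$ large. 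Since $r_{W_t}^2$ is exponential with mean $\lambda$, a direct tail estimate shows this event has probability at least $c(\log n)^{-2/9}$, so the probability of failing to die within $T_n \geq (\log n)^{1/4}$ steps is at most $(1-c'(\log n)^{-2/9})^{T_n} \leq \exp(-c'(\log n)^{5/36}) \to 0$.

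The ``never exceed $\epsilon^2/2$'' estimate turns on the event $\{g_{W_t}(\tilde{D}_{t-1}) \geq \epsilon^2/4\}$ (a ``bad step''): absent bad steps, Chernoff applied to $D_t \sim \Bin(n, g_{W_t}(\tilde{D}_{t-1}))$ yields $\tilde{D}_t \leq \epsilon^2/2$ for every $t \leq T_n$ with probability $1-o(1)$. The core ingredient is the one-step decay bound $\E[\sqrt{g_W(v)}] \leq 2\sqrt{v}/\pi + O(1/\sqrt{\lambda})$: the Case A contribution is bounded trivially by $\Pb(W \in A_v) \leq 2\sqrt{v}/\pi$, while the non-Case-A part is at most $\int_{|\theta|>\varphi} \E_{r^2}[e^{-r^2\sin^2(|\theta|-\varphi)/4}] \, d\theta/(2\pi) = \int 1/(1 + \lambda\sin^2\alpha/4) \, d\alpha/(2\pi)$, which splits at $\alpha = 1/\sqrt{\lambda}$ into two pieces each of order $1/\sqrt{\lambda}$. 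Iterating gives $\E[\sqrt{\tilde{D}_t}] \leq (2/\pi)^t \sqrt{v_0} + O(1/\sqrt{\lambda})$, whence $\sum_{t \leq T_n} \E[\sqrt{\tilde{D}_{t-1}}] \leq C\sqrt{v_0} + T_n \cdot O(1/\sqrt{\lambda}) = O(\epsilon) + o(1)$. By Markov, the probability of any Case A event in $T_n$ steps is $O(\epsilon) < \delta/3$ for $\epsilon$ small, and a parallel integral argument bounds non-Case-A bad steps by $T_n \cdot O(\sqrt{\log(1/\epsilon)/\lambda}) = o(1)$.

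The main technical obstacle is precisely the geometric-decay estimate $\E[\sqrt{g_W(v)}] \leq (2/\pi)\sqrt{v} + O(1/\sqrt{\lambda})$. The hypothesis on $\rho$ enters quantitatively tightly here: it makes $1/\sqrt{\lambda} = O(\sqrt{\log\log n/\log n})$ small enough that $T_n/\sqrt{\lambda} = o(1)$ exactly at the scale $T_n = (\log n)^{1/4}$, so that both the Case A accumulation and the non-Case-A bad-step accumulation fit into the $\delta$-budget.
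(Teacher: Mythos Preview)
Your overall strategy is sound and reaches the result, but it is organized rather differently from the paper's proof.

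The paper argues purely with one-step tail bounds and never touches expectations: it first shows that in one step $\tilde D$ drops below $1/(\log n)^{1/2}$ with probability at least $1-5\epsilon$; then that from any state below $1/(\log n)^{1/2}$ the next state stays there with probability at least $1-4(\log n)^{-1/4}$ while simultaneously hitting $0$ with probability at least $(\log n)^{-1/5}$. These two one-step bounds are combined as a geometric race: conditionally on the first drop, $\tilde D$ dies before escaping $[0,1/(\log n)^{1/2}]$ with probability at least $1-(\log n)^{-1/20}$, and does so before time $(\log n)^{1/4}$.

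Your route instead controls the potential $\E[\sqrt{\tilde D_t}]$ via the one-step contraction $\E_W[\sqrt{g_W(v)}]\le g(v)+O(\lambda^{-1/2})$ and then uses Markov to bound the accumulated probability of a ``bad step''. This is a legitimate alternative, and the key integral $\int (1+\tfrac14\lambda\sin^2\alpha)^{-1}\,d\alpha=O(\lambda^{-1/2})$ is correct. Two points need tightening, however. First, the Case~A contribution $\Pb(W\in A_v)=g(v)$ satisfies $g(v)/\sqrt v\to 1$ as $v\to 1$, so the constant $2/\pi$ you quote is only the $v\to 0$ limit; to avoid circularity you must run the iteration on the stopped quantity $\sqrt{\tilde D_t}\,\mathbf 1_{\{\tau>t\}}$ with $\tau=\min\{t:\tilde D_t>\epsilon^2/2\}$, so that the relevant $v$ is always at most $\epsilon^2/2$ and a genuine contraction factor $<1$ is available. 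Second, both your summed bad-step bound and your dying bound are calibrated to $T_n$ of order $(\log n)^{1/4}$; for larger $T_n$ you must explicitly prove the statement at $T_n'=(\log n)^{1/4}$ and then invoke that $0$ is absorbing. The paper's direct approach sidesteps both issues by always conditioning on the current state lying in a fixed small interval. (A minor arithmetic slip: with $r^2\ge 8\log n$ the per-step dying probability is of order $(\log n)^{-2/9}$, giving exponent $1/36$ rather than $5/36$ in the final bound; either suffices.)

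Part (ii) is handled identically in both proofs.
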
 

    \begin{proof}

    This proof will start with showing that with high probability, $\Tilde{D}$ immediately drops to less than $1/(\log n)^{1/2}$ and then stays there for a time of order at least $(\log n)^{1/16}$. Having done that, it will be shown that during that time, $\Tilde{D}$ will in fact have hit $0$. 
    As in previous proofs, many inequalities used are obviously far from optimal, but simply good enough for their purpose.

    \smallskip
    
    Consider now the first step of $\tilde{D}$. Write $(r,\theta)$ for the polar coordinates of $W=W_1$. We have by Lemma \ref{lem:upper_bound_g_w}, since $\arctan\sqrt{(\epsilon^2/2)/(1-(\epsilon^2/2)^2)} < \epsilon$
    \begin{align*} 
    \Pb\left(g_W(v) < e^{-\frac{18\epsilon^3 \log n}{\log\log n}}\right) &> \Pb\left(g_W\left(\frac{\epsilon^2}{2}\right) < e^{-\frac{18\epsilon^3 \log n}{\log\log n}}\right) \\
    &> \Pb\left(e^{-\frac12r^2\sin^2(|\theta|-\epsilon)} < e^{-\frac{18\epsilon^3\log n}{\log \log n}}\right) \\
    &> \Pb\left(|\theta|>3\epsilon,|\theta-\pi|>3\epsilon,\frac12 r^2>\frac{18\epsilon\log n}{\log \log n}\right) \\
    &> \left(1-\frac{18\epsilon}{\pi}\right)e^{-\epsilon} \\
    &> 1-4\epsilon
    \end{align*}
    and given that $g_W(v) < e^{-\frac{18\epsilon^3 \log n}{\log\log n}}$, we get for large $n$,
    \[\tilde{D}_1<2e^{-\frac{18\epsilon^3\log n}{\log \log n}}<\frac{1}{(\log n)^{1/2}}\]
    with conditional probability at least $1-\epsilon$. Taken together, the last two inequalities give,
    \begin{equation} \label{ei}
    \Pb\left(\tilde{D}_1 \geq \frac{1}{(\log n)^{1/2}}\right) < 5\epsilon.
    \end{equation}
    Let $E_t = \{\tilde{D}_t < 1/(\log n)^{1/2}\}$; this notation will be convenient as we are going to do much conditioning on this event from here on.

    \smallskip 
    
    Next consider the distribution of $\tilde{D}_{t+1}$ given $\tilde{D}_t = \alpha^2/2 < 1/(\log n)^{1/2}$. Let as before $(r,\theta)=(r_t,\theta_t)$ be the polar coordinates of $W_t$, $t=1,2,\ldots$.
    
    Let 
    \[B=\left\{e^{-\frac12 r^2\sin^2(|\theta|-\alpha)}<\frac{1}{2(\log n)^{1/2}}\right\}.\]
    It is sufficient for $B$ to occur that $|\theta| \geq 3/(\log n)^{1/4}$ and $r^2 \geq (\log n)^{1/2}\log\log n$. Since $1-\rho < \log\log n/300\log n$, this gives for large $n$,
    
    \begin{equation} \label{en} 
    \Pb\left(B \, \Big| \, \tilde{D}_t = \frac{\alpha^2}{2}\right) = \Pb(B) > \left(1-\frac{6}{\pi(\log n)^{1/4}}\right)e^{-\frac{\log \log n}{36\log n}(\log n)^{1/2}\log\log n} > 1-\frac{3}{(\log n)^{1/4}}.
    \end{equation}
    
    When $B$ occurs, 
    \[g_{W}\left(\frac{\alpha^2}{2}\right) < \frac{1}{2(\log n)^{1/2}}\]
    according to Lemma \ref{lem:upper_bound_g_w}. Hence for $n$ large by Chernoff bounds,
    \[\Pb\left(\tilde{D}_{t+1}<\frac{1}{(\log n)^{1/2}} \Big|\tilde{D}_t = \frac{\alpha^2}{2},B\right) > 1-e^{-\frac{n}{\log n}} > 1-\frac{1}{(\log n)^{1/4}}.\]
    Since $\alpha^2/2$ is an arbitrary number in $[0,1/(\log n)^{1/2})$, we get
    \[\Pb\left(\tilde{D}_{t+1}<\frac{1}{(\log n)^{1/2}} \Big|\tilde{D}_t < \frac{1}{(\log n)^{1/2}},B\right) > 1-\frac{1}{(\log n)^{1/4}}\]
    and hence on combining with (\ref{en}),
    \begin{equation} \label{ej}
    \Pb \left(\tilde{D}_{t+1}<\frac{1}{(\log n)^{1/2}} \Big| E_t\right) > 1-\frac{4}{(\log n)^{1/4}}.
    \end{equation}
    
    Next let $A=A_{t+1}=\{|\theta_{t+1}|-\epsilon>\pi/4,r_{t+1}^2 > 6\log n\}$. Then since $1-\rho<\log \log n/18\log n$, provided that $\epsilon<\pi/12$,
    \[\Pb(A) > \frac13 e^{-\frac{\log\log n}{6}} > \frac{2}{(\log n)^{1/5}}.\]
    If $A$ occurs, Lemma \ref{lem:upper_bound_g_w} implies for $v<\epsilon^2/2$ 
    \[g_W(v) < e^{-\frac14 r^2} < e^{-\frac32 \log n} = \frac{1}{n^{3/2}}.\]
    That entails in turn that
    \[\Pb(\tilde{D}_{t+1}=0|A) > \left(1-\frac{1}{n^{3/2}}\right)^n > 1 - \frac{1}{\sqrt{n}}.\]
    Hence
    \begin{align}
    \Pb\left(\tilde{D}_{t+1}=0|E_{t}\right) &> \Pb\left(\tilde{D}_{t+1}=0 | A \cap E_t\right) \Pb\left(A|E_{t-1}\right) 
    = \Pb\left(\tilde{D}_{t+1}=0 | A\right) \Pb\left(A\right) \nonumber \\
    &> \frac{2}{(\log n)^{1/5}}\left(1-\frac{1}{\sqrt{n}}\right)
    > \frac{1}{(\log n)^{1/5}}. \label{el}
    \end{align}

    \smallskip
    
    By taking (\ref{ej}) and (\ref{el}) together, it now easily follows that the conditional probability given $E_1$ that $\tilde{D}$ for $t \geq 1$ hits $0$ before $(\log n)^{1/2}$ exceeds $1-1/(\log n)^{1/4-1/5} = 1-1/(\log n)^{1/20}$ and with probability exceeding $1-e^{-(\log n)^{1/20}}$ it happens before time $(\log n)^{1/4}$.
    
    Combining this with (\ref{ei}) and taking $\epsilon < \delta/6$ we get
    \[\limsup_n \Pb\left(\{\tilde{D}_{T_n}>0\} \cup \{\exists t \in \{1,\hdots,T_n\}:\tilde{D}_t>\frac{1}{(\log n)^{1/2}}\}\right) < \delta.\]

    This obviously implies
    \[\limsup_n \Pb\left(\{\tilde{D}_{T_n}>0\} \cup \{\exists t \in \{1,\hdots,T_n\}:\tilde{D}_t>\epsilon\}\right) < \delta.\]
    
    For (ii) assume first that $h_n$ has a sharp threshold at 1/2 and $T_n$ is arbitrary.  Fix $\kappa>0$. Then (i) tells us that for sufficiently small $\epsilon>0$ and sufficiently large $n$, 
    $P(\forall t:\tilde{D}_t < 2\epsilon) > 1-\kappa/2$. In particular 
    \[\limsup_n\Pb(\tau_{2\epsilon}^\epsilon \leq T_n) < \frac{\kappa}{2}.\] 
    Since the final output is $f_{n,T_n}(\omega)= h_n(\omega_T)$ and $h_n$ has a sharp threshold at 1/2, the result follows from Lemma \ref{lem:correlated_sens_stab}.
    
    Finally if $T_n \geq (\log n)^{1/4}$, then by (i), $\lim_{\epsilon \rightarrow 0}\Pb(\tilde{D}_{T_n}=0) = 1$, i.e.\ $\lim_{\epsilon \rightarrow 0}\Pb(\omega_{T_n} = (\omega^\epsilon)_{T_n}) = 1$, which implies noise stability.

    \end{proof}

\section{Convolutional treelike networks} \label{sec:conv}

    Above we have studied a Boolean representation of randomised feed forward neural networks. Another common architecture of neural networks is convolutional neural networks, ConvNet, where the output of each layer is the convolution between the input and some filter of size $d$ through some activation function. The learnable parameters in the model will be those in the filter. If we study a one dimensional input and if no padding is used, the output of each convolution is of length $(n - d)/s + 1$ where $s$ is the stride and $n$ the number of inputs. As previously, we consider the activation function to be the sign function. We also limit ourselves to a filter size of length $d=3$ (but the technique used will with a few observations easily generalise to some other settings, which we point out and state at the end of this section). Stacking many of these convolutional layers our networks can be seen as a graph $G_n = (V_n,E_n)$. 
    From this graph we induce a Boolean function $f_n$ that goes from $\{-1,1\}^n$ to \{-1,1\}. 
    The graphs that we consider here, and soon give proper definitions of, can be seen in Figure \ref{fig:3-maj}.
    
    With filter size 3, the value of each node at layer $t$ in $G_n$ is the weighted $3$-majority function, where the weights are the parameters in the filter $\theta_t$. These weights are the same for all triples of nodes next to each other on the same layer on which the filter is applied. A weighted $3$-majority function can only express either a regular $3$-majority, negative $3$-majority, a dictator function or a negative dictator function. Clearly, reversing the signs of all values in a layer does not have anything to do with stability questions, so we may assume that the filter at a given layer either expresses a dictator or a regular majority. In each setting analysed here, we will at first assume that each layer expresses a regular majority, i.e.\ $\theta_t = (1,1,1)$ $\forall t$. 
    It will then be easy to see that the arguments used can be easily extended to the setting where some layers are dictator layers under the very mild assumption that the distribution of the $\theta_t$'s is such that there is a probability bounded away from 0 that $\theta_t$ expresses a regular majority. Each $\theta_t$ is here assumed to be independent across $t$. 
    First, notice that if the stride $s > 2$, $G_n$ would be the iterated $3$-majority function with no overlap, which is known to be sensitive, so we will only consider $s=1$ and $s=2$.

    This leaves us with four different structures of interest. There corresponding graph $G_n$ are illustrated in Figure \ref{fig:3-maj} and have the following formal definitions of them: 
    
    \begin{figure}[h]
        \centering
        \begin{subfigure}[b]{0.4\linewidth}
        \includegraphics[width = 1\textwidth]{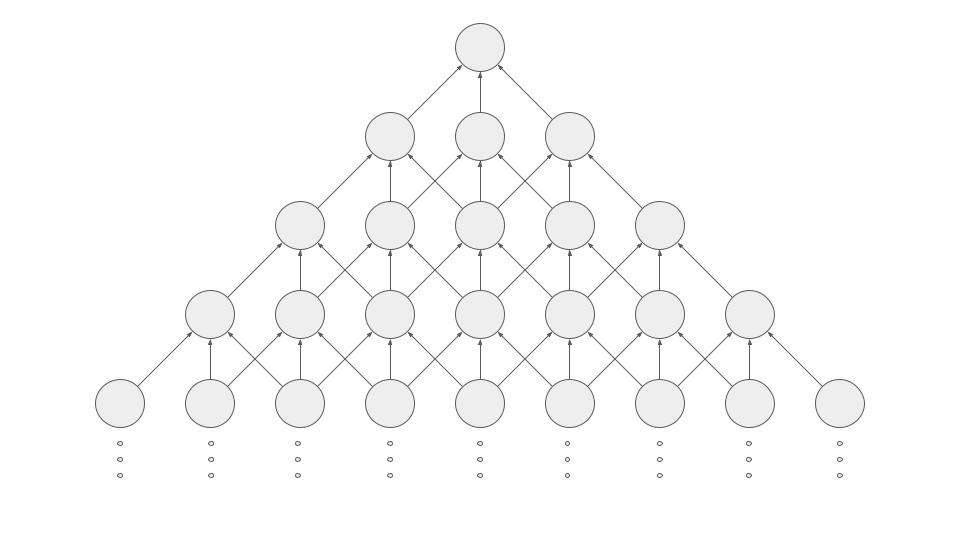}
        \caption{\label{fig:stride1} Convolutional iterated $3$-majority function with stride $1$.}
        \end{subfigure}
        \begin{subfigure}[b]{0.4\linewidth}
        \includegraphics[width = 1\textwidth]{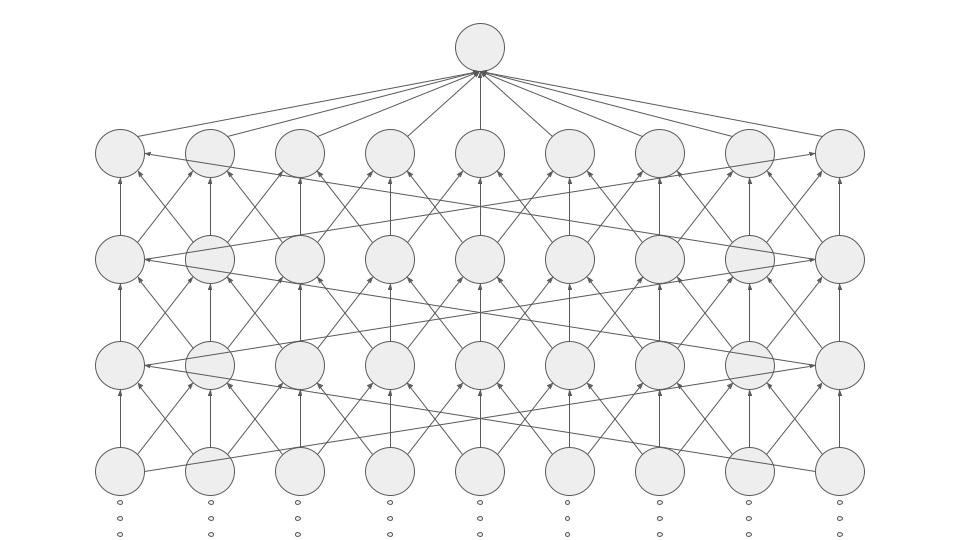}
        \caption{\label{fig:stride1c} Convolutional iterated $3$-majority function with stride $1$ on an n-cycle.}
        \end{subfigure}
        \begin{subfigure}[b]{0.4\linewidth}
        \includegraphics[width = 1\textwidth]{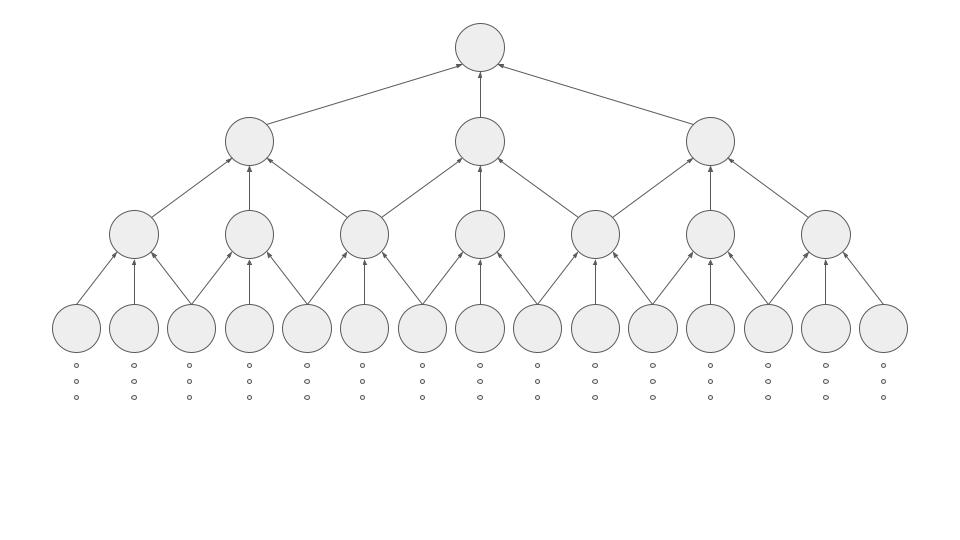}
        \caption{\label{fig:stride2} Convolutional iterated $3$-majority function with stride $2$.}
        \end{subfigure}
        \begin{subfigure}[b]{0.4\linewidth}
        \includegraphics[width = 1\textwidth]{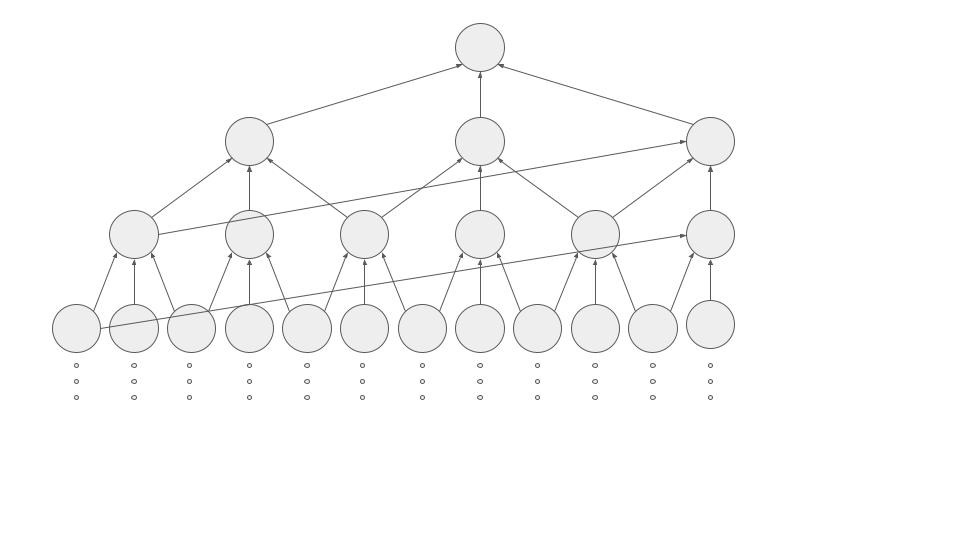}
        \caption{\label{fig:stride2c} Convolutional iterated $3$-majority function with stride $2$ on an n-cycle.}
        \end{subfigure}
        \caption{The graphs $G_n^{(1)}$,$G_n^{(1')}$,$G_n^{(2)}$ and $G_n^{(2')}$.}
        \label{fig:3-maj}
    \end{figure}

    \begin{itemize}
    \item[(a)] Convolutional iterated $3$-majority with stride $1$, $G^{(1)}_n=(V,E)$, where \newline $V=\{v_{n,0},v_{n-1,-1},v_{n-1,0},v_{n-1,1},v_{n-2,-2},v_{n-2,-1},v_{n-2,0},v_{n-2,1},v_{n-2,2}, \hdots,v_{0,-n},\hdots,v_{0,n}\}$ and $E=\{(v_{k,i},v_{k-1,j}): k=n,\hdots,1,\, j=i-1,i,i+1\}$.
    \item[(b)] Convolutional iterated $3$-majority with stride $1$ on an n-cycle, $G^{(1')}_n=(V,E)$, where \newline $V=\{v_{K,-n},v_{K-1,-(n-1)},\hdots,v_{K-1,n},\hdots,v_{0,-n},\hdots,v_{0,n}\}$, $E=\{(v_{K,1},v_{K-1,i}):i=1,\hdots,n\} \cup \{v_{k,i},v_{k-1,j}: k=K-1,\hdots,1,\,|i-j| \leq 1 \mod n\}$.
    \item[(c)] Convolutional iterated $3$-majority with stride $2$, $G^{(2)}_n=(V,E)$, where \newline $V=\{v_{n,1},v_{n-1,1},v_{n-1,2},v_{n-1,3},\hdots,v_{0,1},\hdots,v_{0,2^{n+1}-1}\}$, $E=\{(v_{n-k,i},v_{n-k-1,j}):k=0,\hdots,n-1,\, |j-2i| \leq 1\}$.
    \end{itemize}
    
    The graph in (d) will not be treated any more than that in the end it will be obvious that the results for (c) are valid there too, so the formal definition is skipped.

    Given one of these $G_n$, let $f_n$ be the Boolean function induced by $G_n$ where the value of each node $v$, $f(v)$, is evaluated as the majority of the evaluation of the three connected nodes below it (where ``below'' refers to the standard tree convention of thinking of nodes $v_{k+1,i}$ as sitting immediately below the nodes $v_{k,i}$). The nodes of layer $0$ are considered to output the input $\omega$ into the ConvNet, where $\omega(j)$, $j \in L_0$, corresponds to bit $j$ in the Boolean input vector. For example, for a node $v_{k,i} \in G_n^1$, $f(v_{k,i})$ can be evaluated recursively using $f(v_{k,i}) = \sign \left(f(v_{k-1,i}) + f(v_{k-1,i+1}) + f(v_{k-1,i+2}) \right)$ and $f(v_{0,j}) = \omega(j)$, $\forall k,j: \:  1 \leq k \leq n$, $1 \leq j \leq 2n + 1$. We will denote the set of nodes at layer $k$ as $L_{k}$, $k = 0,\hdots,n$. In terms of the formal definition, $L_k=\{v_{k,i}:v_{k,i} \in V\}$. Note that the layers/generations are numbered bottom up, which is unconventional in the graph sense, but is the ``right thing'' to do in the neural net sense.
    For any $i$ such that $v_{k,i}$ and $v_{k,i+1}$ are both in $V$, we say that those two vertices are next to each other or a closest pair or, sometimes, neighbours in $L_k$ even though they are strictly speaking not neighbours in $G_n$. For $G^{(1')}_n$ and $G^{(2')}_n$, the nodes $v_{k,1}$ and $v_{k,n}$ are also said to be next to each other.
    
    For each node $u \in G_n$, in layer $k$, let $D_u$ denote its set of descendants, i.e.\ the set of nodes in layers $k-1,\hdots,0$ defined recursively that the descendants in $L_{k-1}$ are $u$'s three neighbours (in the graph sense) in $L_{k-1}$ and then the descendants in $L_{k-j}$ are the set nodes there that have an edge to at least one descendant in $L_{k-j-1}$. Observe that to determine $f(u)$ it is sufficient to study the subgraph $D_u$. When $v$ is a descendant of $u$, we equivalently say that $u$ is an ancestor of $v$. Let $A_u$ be the set of ancestors of $u$, i.e.\ $A_u$ is the set of nodes $v$ such that $u \in D_v$. Write $D_{u,k} = D_u \cap L_k$, 
    and $A_{u,k}=A_u \cap L_k$,  and write $A^+_{u,k}$ for the union of $A_{u,k}$ and the set of nodes in $L_k$ that are next to a node in $A_{u,k}$.
    If $v \in A_u$, we say that $v$ is a parent of $u$ or that $u$ is a child of $v$ if the graphical distance between $u$ and $v$ is 1.
    Also define $D_\Lambda = \cup_{u \in \Lambda} D_u$ for a set of nodes $\Lambda$. 
    
    The following subsections prove the different noise properties of $f_n$ induced by the different convolutional graphs. 
    
\subsection{Convolutional iterated 3-majority with stride $1$} \label{sec:stride1}

    Let $G_n = G^{(1)}_n = (V_n,E_n)$ i.e.\ the $3$-iterated majority network with stride $1$, Figure \ref{fig:stride1}. Then the induced Boolean functions $\{f_n\}$ requires $N = 2n + 1$ input bits which we label in accordance with how the network is defined: $\omega(-n),\hdots,\omega(n)$. 
    
    \smallskip 
    
    Write $f=f_n$. The key observation to make is that if $f(v_{0,i})=\omega(i)$ and $f(v_{0,i+1})=\omega(i+1)$ are equal, then $f(v_{t,i})=f(v_{t,i+1})$ for all $t$ such that $v_{t,i}$ and $v_{t,i+1}$ exist. If only one of them, say $v_{t,i}$ exists, then $f(v_{t,i})=f(v_{t-1,i})=f(v_{t-1,i+1})$. In particular if $\omega(-1)=\omega(0)$ or $\omega(i+1)$ or $\omega(0)=\omega(1)$, then $f(\omega)=f(v_{n,0})=\omega(0)$. Obviously this generalises the statement that if $f(v_{s,0})$ and $f(v_{s,1})$ are equal, then $f(v_{t,0})=f(v_{t,1})$ for all $t \geq s$.
    
    If we instead have $\omega(-1) \neq \omega(0) \neq \omega(1)$ and $\omega(2)=\omega(1)$, then $f(v_{1,0})=f(v_{1,1})=\omega(1)=\omega(2)$ and it then follows in the same way that $f(\omega)=\omega(1)$.
    
    An inductive structure suggests itself. Let $K$ be the smallest positive integer such that either $\omega(-K)=\omega(-(K-1)) \neq \omega(-(K-2) \neq \ldots \neq \omega(K-1)$ or $\omega(-(K-1)) \neq \omega(-(K-2) \neq \ldots \neq \omega(K-1) = \omega(K)$, if such an $K$ exists. In other words $K$ is the distance from $0$ to a closest pair of input bits with the same value.
    (There may be two such pairs. If so, by parity $\omega(-K)=\omega(-(K-1))=\omega(K-1)=\omega(K)$ with the input alternating between $-(K-1)$ and $K-1$.)
    If no such $i$ exists, set $K=n+1$.
    
    Assume without loss of generality that, unless $K=n+1$, $\omega(K-1)=\omega(K)$. It follows from the definition of $K$, $\omega(-(K-1)) \neq \omega(-(K-2)) \neq \ldots \neq \omega(K-2) \neq \omega(K-1)$, This clearly holds also for $K=n+1$.
    If $K \leq n$, we then get $f(v_{1,-(K-2)}) \neq f(v_{1,-(K-3)}) \neq \ldots \neq f(v_{1,K-3}) \neq f(v_{1,K-2}) = f(v_{1,K-1})$. This means that in layer $1$, the closest pair of input bits with the same value is one step closer to $0$ than in layer $0$. It now follows from induction that $f_n(\omega)=f(v_{n,0}) = \omega(K)$. Of particular importance here is to observe that in particular $f_n(\omega)$ does not depend on $\omega(j)$, $j \not \in [-K,K]$.
    
    It remains to understand what happens if $K=n+1$, i.e\ when the whole input is alternating: $\omega(-n)=\omega(-(n-1))= \ldots \neq \omega(n-1) \neq \omega(n)$. However then $f$ will be alternating at all layers and $f(\omega)=\omega(-n)=\omega(n)$.
    
    We have established the following lemma.

    \begin{lemma}{(Closest pair lemma)}
        \label{lemma:barrier_function}
        
        The induced Boolean function $f = \mbox{"iterated 3-majority with stride 1"}$ on $G_n^{(1)}$ is the same as the function $g$ with the following description:

        Given the input bit vector $\omega$ of length $2n + 1$, let $K$ be the
        smallest positive integer $i$ such that $\omega(-i) = \omega(-(i-1))$ or $\omega(i-1)=\omega(i)$ and set $g(\omega)=\omega(-K)$ or $g(\omega)=\omega(K)$ in the respective cases. If no such pair exists, set $g(\omega)= \omega(n)$. 
    \end{lemma}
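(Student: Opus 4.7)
The plan is to reduce $f_n$ to $g$ by tracking how a local agreement between two adjacent values propagates upward through the iterated majority. The first ingredient is an \emph{agreement propagation} principle: if $f(v_{t,i}) = f(v_{t,i+1}) = a$ for some layer $t$, then $f(v_{t',i}) = f(v_{t',i+1}) = a$ for every higher layer $t' \geq t$ at which both of these nodes exist. This is immediate from the recursive rule $f(v_{t+1,j}) = \maj(f(v_{t,j-1}), f(v_{t,j}), f(v_{t,j+1}))$ applied at $j = i$ and $j = i+1$: in both cases two of the three children already equal $a$, so the majority does too, and a straightforward induction on $t'$ finishes the claim.

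The second ingredient is an \emph{inward march} principle: if $\omega$ is strictly alternating on $[-(K-1), K-1]$ with $\omega(K-1) = \omega(K) = a$, then after one layer of majority a new agreement pair appears at positions $(K-2, K-1)$, one step closer to the origin, again with common value $a$, and the surrounding interior is still alternating (merely phase-flipped). Indeed, for $i \in [-(K-2), K-3]$ the three children $\omega(i-1), \omega(i), \omega(i+1)$ satisfy $\omega(i-1) = \omega(i+1) = -\omega(i)$ by alternation, so $f(v_{1,i}) = -\omega(i)$; the two boundary positions $i = K-2$ and $i = K-1$ both yield $a$ directly, because in each case two of the three children already coincide with $a$. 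Iterating this principle, an induction on $t$ shows that at layer $t$ (for $t = 0, 1, \ldots, K-1$) the closest-to-origin agreement pair sits at $(K-1-t, K-t)$ with common value $a = \omega(K)$; after $K-1$ such steps the pair is at $(0,1)$, and agreement propagation then carries $a$ up to $f(v_{n,0})$, matching $g(\omega)$. The negative-side case $\omega(-K) = \omega(-(K-1))$ is entirely symmetric, and when both sides contribute an agreement pair at the same distance $K$, a parity check on the alternation across $[-(K-1), K-1]$ forces the two common values to coincide, so no ambiguity arises.

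The remaining case $K = n+1$, i.e.\ $\omega$ alternates over the entire interval $[-n,n]$, is handled separately by a direct induction on $t$: every layer stays strictly alternating and is the phase-flip of the previous one, so $f(v_{n,0}) = (-1)^n \omega(0) = \omega(n) = \omega(-n)$, again matching $g$. The main delicate point I expect is the bookkeeping at the outer boundary of the alternating region in the first layer: the majority defining $f(v_{1, -(K-1)})$ involves the unconstrained bit $\omega(-K)$, so one must verify that no choice of $\omega(-K)$ can manufacture an agreement pair at layer $1$ strictly closer to $0$ than the predicted one at $(K-2, K-1)$. This reduces to a short two-case check on the sign of $\omega(-K)$ and the parity of $K$, and in the subcase where it does create a pair, that pair sits at the symmetric distance $K-2$ and with the same value, keeping the induction consistent.
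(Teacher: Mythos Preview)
Your argument is correct and matches the paper's own proof essentially step for step: both isolate the same two principles --- agreement propagation (an equal adjacent pair persists upward) and the inward march (the closest equal pair moves one position toward the origin per layer via the phase-flip of the alternating interior) --- and treat the fully alternating input as a separate base case. One minor slip: in your final sentence the potential left-side pair at layer $1$, when it forms, sits at $(-(K-1),-(K-2))$, i.e.\ at distance $K-1$ (matching the right-side pair), not $K-2$; also note that if you carry the invariant ``alternation on $[-(K-1)+t,\,K-1-t]$ together with the agreeing pair at $(K-1-t,K-t)$'' through the induction, the left-boundary bit $\omega(-K)$ is never actually needed, so that case split can be dropped entirely.
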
   

With this lemma we can state the following theorem.

\begin{theorem} \label{th:stride1_non_c}
    The iterated 3-majority function with stride $1$ is noise stable.
\end{theorem}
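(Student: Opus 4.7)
My proof plan relies entirely on the Closest Pair Lemma, which reduces $f_n$ to a function that depends on only the input bits in a small window $[-K,K]$ around the origin.

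First I would analyze the distribution of $K$ under the uniform measure on $\{-1,1\}^{2n+1}$. By definition, $K>k$ is precisely the event that no two consecutive bits among $\omega(-k),\omega(-k+1),\ldots,\omega(k)$ agree, i.e.\ these $2k+1$ bits alternate. There are exactly two alternating patterns, so
\[
\Pb(K>k) \;=\; 2 \cdot 2^{-(2k+1)} \;=\; 4^{-k}.
\]
Hence $K$ has geometric tails that are uniform in $n$. Given $\delta>0$, I can choose $M=M(\delta)$ (independent of $n$) so that $\Pb(K>M) < \delta/2$, for instance $M = \lceil \log_4(2/\delta)\rceil$.

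Next, I would use the key structural consequence of Lemma \ref{lemma:barrier_function}: the value $f_n(\omega)$ depends on $\omega$ only through the bits in positions $[-K,K]$, and in fact equals $\omega(\pm K)$. Now let $B_M$ be the event that at least one of the $2M+1$ bits in positions $[-M,M]$ is flipped by the noise. By a union bound,
\[
\Pb(B_M) \;\leq\; (2M+1)\epsilon.
\]
On the complement event $\{K \leq M\} \cap B_M^c$, the restrictions of $\omega$ and $\omega^\epsilon$ to $[-M,M]$ coincide, so the closest-pair distance for $\omega^\epsilon$ equals $K$ and $\omega^\epsilon(\pm K) = \omega(\pm K)$. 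Applying the Closest Pair Lemma to both inputs therefore gives $f_n(\omega^\epsilon) = f_n(\omega)$.

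Combining these observations,
\[
\Pb(f_n(\omega) \neq f_n(\omega^\epsilon)) \;\leq\; \Pb(K>M) + \Pb(B_M) \;<\; \frac{\delta}{2} + (2M+1)\epsilon.
\]
Taking $\epsilon < \delta/(2(2M+1))$ makes the right-hand side smaller than $\delta$, uniformly in $n$, which is precisely the definition of noise stability. There is no real obstacle here; once the Closest Pair Lemma is in hand, the proof is essentially a two-line computation. The only thing worth double-checking is that shrinking $K$ cannot occur on $B_M^c$: since the closest pair witnessing $K$ survives unchanged and no new pair can appear inside $[-M,M]$ without flipping a bit there, $K$ is indeed preserved.
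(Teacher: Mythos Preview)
Your proof is correct and follows essentially the same approach as the paper: both invoke the Closest Pair Lemma to reduce $f_n$ to a function of the bits in $[-K,K]$ with $K$ having geometric tails $\Pb(K>k)=4^{-k}$, and then bound the probability that any bit in this window is flipped. The only cosmetic difference is that the paper conditions on $K=k$ and sums the resulting series $\sum_k 3\cdot 4^{-k}\bigl(1-(1-\epsilon)^{2k+1}\bigr)$ explicitly, whereas you truncate at a fixed $M=M(\delta)$ and use a union bound on $[-M,M]$; the underlying idea is identical.
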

\begin{proof}
    According to Lemma \ref{lemma:barrier_function} we can translate the iterated 3-majority with stride $1$ function to the closest pair function. 
    
    Let $K$ be the distance between the closest pair and the mid input vertex $0$. Since each bit is i.i.d., $K$ is geometric with parameter $3/4$ truncated at $n+1$: 
    \[\Pb(K = k) = \frac{3}{4} \left(\frac{1}{4} \right)^{k-1}  = 3 \left(\frac{1}{4}\right)^k,\]
    $k=1,\ldots,n$ and $\Pb(K=n+1) = (1/4)^n$.
    
    Since $f_n$ does not depend on $\omega$ outside $[-K,K]$, for $f_n( \omega^\epsilon) \neq f_n(\omega)$ to hold there must at least be a disagreement in the interval $[-K,K]$, i.e.\ a $j \in [-K,K]$ with $\omega(j) \neq \omega^\epsilon(j)$. This gives  
    
    \[\Pb (f_n( \omega) \neq f_n(\omega^\epsilon)|K = k) \leq 1 - (1-\epsilon)^{2k + 1},\]
    
    $k=1,\ldots,n+1$ and $\Pb((f_n( \omega) \neq f_n(\omega^\epsilon)|K = n+1) \leq 1-(1-\epsilon)^{2n+1}$.
    
    Putting this into Definition (\ref{noise_stable}) we see that 
    
    \begin{align*}
        \lim_{\epsilon \rightarrow 0} \limsup_n \Pb(f_n( \omega^\epsilon) \neq f_n(\epsilon)) &\leq \lim_{\epsilon \rightarrow 0} \limsup_n \sum_{k = 1}^n 3 \left( \frac{1}{4} \right)^k \left( 1 - (1-\epsilon)^{2k+1} \right)\\
        &\leq \lim_{\epsilon \rightarrow 0} \sum_{k = 1}^\infty 3 \left( \frac{1}{4} \right)^k \left( 1 - (1-\epsilon)^{2k+1}\right) \\
        &= \lim_{\epsilon \rightarrow 0} 3 \left[ \frac{4}{3} - \frac{4 (1-\epsilon)^4 }{4 - (1-\epsilon)^2}\right] = 0
    \end{align*}
    
    This concludes the proof.
    
\end{proof}

Let us now consider when the filter weights $\theta_t$ are random. Then $\theta_t$ represents either a regular majority or a dictator function. If $\theta_t$ represents a dictator function each node $u$ at layer $t$ only depends on one node $v$ at layer $t-1$ and the values $f_n(u)$ and $f_n(v)$ are the same. This means we can effectively skip that layer on noticing that either the leftmost or the rightmost node of layers $t-1,t-2,\ldots,0$ can no longer affect anything and can be removed.
Removing all dictator layers results in an ordinary stride 1 iterated 3-majority with fewer layers. Hence Theorem \ref{th:stride1_non_c} applies and we can conclude the following. 

\begin{corollary}
    The iterated 3-majority function with stride 1 is annealed and quenched noise stable under any probability distribution on $\theta_t$. 
\end{corollary}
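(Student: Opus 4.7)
The plan is to reduce to a quenched version of the argument already given in Theorem \ref{th:stride1_non_c}, with a probability-of-disagreement bound that is uniform in the realization of the filter weights. By the sign-flip observation made in the paragraph preceding the corollary, negating any $\theta_t$ does not affect the noise stability of $f_n$, so I may assume without loss of generality that each $\theta_t$ represents either the regular $3$-majority or one of the three positive dictators (selecting the left, middle, or right child, respectively). Let $T^*=T^*(\Theta)\le n$ be the number of layers at which $\theta_t$ is a majority and let $\delta=\delta(\Theta)$ be the signed shift accumulated by the dictator layers (a left-dictator contributing $-1$, middle $0$, right $+1$); then $|\delta|\le n-T^*$.

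The key structural claim is that for every realization $\Theta$, the function $f_n$ coincides with the iterated stride-$1$ $3$-majority of depth $T^*$ applied to the window of input bits indexed by $[\delta-T^*,\delta+T^*]$. The reason is that a dictator layer merely copies a fixed child, so a majority layer sitting immediately above a dictator layer performs a majority of three consecutive values from two layers below, shifted by a fixed offset. Iterating this collapsing over all dictator layers produces an iterated $3$-majority of depth $T^*$ rooted conceptually at $v_{T^*,\delta}$; the estimate $|\delta|+T^*\le n$ guarantees that the required window of inputs lies inside $[-n,n]$.

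Once this is in place, I apply the Closest Pair Lemma (Lemma \ref{lemma:barrier_function}) to the shifted iterated majority. It tells us that $f_n(\omega)$ depends only on the bits $\omega(j)$ with $j\in[\delta-K,\delta+K]$, where $K$ is the distance from coordinate $\delta$ to the nearest consecutive equal-valued pair of input bits, truncated at $T^*+1$. Since the input bits are i.i.d.\ uniform, $K$ is stochastically dominated by a Geometric$(3/4)$ random variable, independently of $\Theta$. Repeating verbatim the computation in the proof of Theorem \ref{th:stride1_non_c} now gives, for every realization $\Theta$,
\[\Pb\bigl(f_n(\omega)\neq f_n(\omega^\epsilon)\,\big|\,\Theta\bigr)\le \sum_{k=1}^{\infty} 3\left(\frac{1}{4}\right)^k\bigl(1-(1-\epsilon)^{2k+1}\bigr),\]
whose right-hand side tends to $0$ as $\epsilon\to 0$ uniformly in $n$ and $\Theta$. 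Since this bound is deterministic in $\Theta$, it immediately yields quenched noise stability, and annealed noise stability then follows from Theorem \ref{th:relations_between_annealed_and_quenched}(ii).

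The main thing requiring care is the collapsing argument in the second step: one must check that the index convention is preserved so that the collapsed function is really a stride-$1$ iterated $3$-majority and not some structure with gaps or repetitions among the three selected children. Because each dictator merely shifts the selected offset by a constant, a majority over three consecutive indices in the layer above collapses to a majority over three consecutive indices in the layer below, so the stride-$1$ property is preserved; this is short but deserves an explicit line. The edge case $T^*=0$ (every layer a dictator) is trivial, since then $f_n(\omega)=\omega(\delta)$ and $\Pb(f_n(\omega)\neq f_n(\omega^\epsilon)\,|\,\Theta)=\epsilon$, which fits the uniform bound above. This is also why the statement requires no positivity assumption on the probability that $\theta_t$ is a majority.
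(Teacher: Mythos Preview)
Your proof is correct and follows essentially the same approach as the paper: collapse all dictator layers so that the network reduces to an ordinary stride-$1$ iterated $3$-majority of smaller depth, then invoke Theorem~\ref{th:stride1_non_c}. Your version is more explicit than the paper's brief paragraph, in that you track the shift $\delta$ and verify $|\delta|+T^*\le n$, give the uniform-in-$\Theta$ disagreement bound, and treat the edge case $T^*=0$; these details are all sound and the overall argument matches the paper's.
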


\subsection{Convolutional iterated 3-majority with stride $1$ on the $n$-cycle} \label{sec:strid1n}

    Consider now $G^{(1')}_n$, Figure \ref{fig:stride1c}, the network where each layer has $n$ nodes with circular convention so that the first node at each layer is considered next to the last node in that layer. We assume that $n$ is odd. Each output from node $i$ in layer $t$ is the majority of nodes $i-1$, $i$, $i+1$ in layer $t-1$. The final output of the network is the majority of $\omega_T$ for some $T = T_n$. The starting configuration is $\omega_0 \in \{-1,1\}^n$. 
    \begin{theorem}
    For any $T=T_n$, the convolutional iterated 3-majority function with stride 1 on an n-cycle is noise stable.
    \end{theorem}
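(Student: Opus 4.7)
The plan is to reduce $f_n$ to the sign of a signed sum of input bits and then combine uniform bit-level noise stability (inherited from Theorem~\ref{th:stride1_non_c}) with an anticoncentration estimate for that sum. As a preliminary, I would note that alternating arcs of the dynamics shrink by two per step and that the oddness of $n$ forces at least one pair of adjacent equal input bits, so the dynamics stabilizes in at most $(n-1)/2$ steps; one may therefore assume $T\leq (n-1)/2$, in which case the light cone $[i-T,i+T]$ fits inside the cycle without wrapping. Applying the closest-pair lemma (Lemma~\ref{lemma:barrier_function}) locally around each $i$ then gives
\[
\omega_T(i) = s_i(\omega)\,\omega(i),\qquad s_i(\omega) := (-1)^{\min(K_i,T+1)-1}\in\{-1,+1\},
\]
where $K_i$ is the cyclic distance from $i$ to the nearest input pair $\omega(j)=\omega(j+1)$ inside the cone (with $K_i := T+1$ if no such pair exists). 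Consequently $f_n(\omega)=\sign(Y(\omega))$ with $Y := \sum_i\omega_T(i)=\sum_i s_i(\omega)\omega(i)$; since $n$ is odd, $Y$ is an odd integer and $Y\neq 0$.

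Each $\omega_T(i)$ is a translate of the non-cyclic stride-$1$ iterated $3$-majority of depth $T$, so the estimate obtained in the proof of Theorem~\ref{th:stride1_non_c} gives a function $h$ with $h(\epsilon)=O(\epsilon)$ such that $\Pb(\omega_T(i)\neq\omega_T^\epsilon(i))\leq h(\epsilon)$ uniformly in $n,T,i$. The core step is then to establish, uniformly in $n$ and $T$, both
\[
V := \E\bigl[(Y-Y^\epsilon)^2\bigr]\leq C n h(\epsilon) \quad\text{and}\quad \Pb(|Y|\leq\delta)\leq \frac{C\delta}{\sqrt{n}}.
\]
For the variance bound, set $\tau_i := \I\{\omega_T(i)\neq\omega_T^\epsilon(i)\}$; the key observation is that whether $\tau_i=1$ depends on the noise only through bits in $[i-R_i,i+R_i]$, where $R_i := \max(K_i(\omega),K_i(\omega^\epsilon))$ is stochastically dominated by a truncated geometric variable with parameter $3/4$. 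Hence $\tau_i$ and $\tau_j$ are independent on the event $\{R_i+R_j<|i-j|\}$, whose complement has probability decaying exponentially in $|i-j|$, and expanding $V\leq 4\sum_{i,j}\E[\tau_i\tau_j]$ then gives $V\leq C n h(\epsilon)$. The anticoncentration estimate follows from $Y$ being a sum of $\pm 1$-valued random variables with $O(1)$-range dependence and variance of order $n$, for which a local central limit theorem or Berry--Esseen-type argument for weakly dependent sums applies.

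Combining the two estimates via
\[
\Pb(f_n(\omega)\neq f_n(\omega^\epsilon))\leq \Pb(|Y|\leq\delta)+\Pb(|Y-Y^\epsilon|>\delta)\leq \frac{C\delta}{\sqrt{n}}+\frac{V}{\delta^2}
\]
and optimizing over $\delta$ yields $\Pb(f_n(\omega)\neq f_n(\omega^\epsilon))\leq C'h(\epsilon)^{1/3}=O(\epsilon^{1/3})$, which tends to $0$ as $\epsilon\to 0$ uniformly in $n$ and $T$, thereby establishing noise stability of $\{f_n\}$. The main technical obstacle is the uniform-in-$T$ effective-range bound for $V$: despite the naive light cone having radius $T$, one must show that the influence of a noise bit on $\omega_T(j)$ is confined to $|j-i|=O(1)$ with high probability, which is where the closest-pair structure together with the geometric tails of $K_i$ does the essential work.
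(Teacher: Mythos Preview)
Your overall strategy coincides with the paper's: both arguments show that the signed sum $Y=\sum_i\omega_T(i)$ satisfies $|Y-Y^\epsilon|=o_\epsilon(\sqrt{n})$ with high probability while $|Y|$ is anticoncentrated at scale $\sqrt{n}$, so the sign cannot flip. The paper realises this via a block decomposition: it partitions $[1,n]$ into chunks $V_j$ (respectively $U_j$) separated by barrier pairs $\omega(i-1)=\omega(i)=\omega^\epsilon(i-1)=\omega^\epsilon(i)$ (respectively alternating $++$ and $--$ pairs), so that the chunk contributions $C'_j$ and $D'_j$ are genuinely i.i.d.\ with mean zero, and then applies the ordinary CLT. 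Your route via the pointwise formula $\omega_T(i)=(-1)^{\min(K_i,T+1)-1}\omega(i)$ is a nice alternative and avoids the bookkeeping of random chunk boundaries.

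There is, however, a real gap in your variance step. From $V\le 4\sum_{i,j}\E[\tau_i\tau_j]$ you cannot conclude $V\le Cnh(\epsilon)$: on the high-probability event $\{R_i+R_j<|i-j|\}$ the variables $\tau_i,\tau_j$ are (approximately) \emph{independent}, not uncorrelated with product zero, so $\E[\tau_i\tau_j]\approx h(\epsilon)^2$ for distant $i,j$. Summing over all pairs then gives a contribution of order $n^2h(\epsilon)^2$, and with fixed $\epsilon$ and $n\to\infty$ this blows up after dividing by $\delta^2=\rho^2 n$. The fix is to work with the signed increments $\Delta_i=\omega_T(i)-\omega_T^\epsilon(i)$ rather than $\tau_i$: by the sign-flip symmetry $(\omega,\omega^\epsilon)\mapsto(-\omega,-\omega^\epsilon)$ one has $\E[\Delta_i\,\I\{R_i<k\}]=0$, so the ``independent part'' of $\E[\Delta_i\Delta_j]$ vanishes, and the remaining boundary term $\E[\Delta_i\Delta_j\,\I\{R_i\ge k\text{ or }R_j\ge k\}]$ is bounded by $C\epsilon\,k\,4^{-k}$ (combining the geometric tail of $K_i$ with the fact that $\tau_i=1$ forces a noise bit within distance $K_i(\omega)$). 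Summing then yields the needed $V\le Cn\epsilon$. The paper's chunk decomposition sidesteps this issue entirely because the $C'_j$ are exactly independent and mean zero.

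A second point that deserves more than a one-line appeal to ``Berry--Esseen for weakly dependent sums'' is the anticoncentration $\Pb(|Y|\le\rho\sqrt{n})\le C\rho$ \emph{uniformly in $T$}. The dependence range of $\omega_T(i)$ is not a fixed $m$ but a random $K_i$, and you need the variance-per-site to stay bounded away from zero for all $T$. The paper's block trick again makes this automatic: the blocks $U_j$ and their contributions $D'_j$ depend only on $\omega$, are i.i.d.\ with a fixed nonzero variance, and the standard CLT applies directly.
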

    \begin{proof}
    
    Let $\ldots,X(-1),X(0),X(1),X(2),\ldots$ be independent Bernoulli(1/2) random variable and model $(\omega,\omega^\epsilon)$ in terms of these by taking $\omega=\omega_0=(X(1),\ldots,X(n))$ and $\omega^\epsilon=(X^\epsilon(1),\ldots,X^\epsilon(n))$. Let $\tau_0 = \min\{i>1:X(i-1)=X(i)=X^\epsilon(i-1)=X^\epsilon(i)\}$. Then recursively, let
    \[\tau_{j} = \min\{i>\tau_{j-1}+1: X(i)=X(i-1)=X^\epsilon(i-1)=X^\epsilon(i)=-X(\tau_{j-1})\},\, j=1,2,\ldots.\]
    Observe that if one writes $X_0=X$, one can then define $X_1,X_2,\ldots$ in complete analogy with how $\omega_1,\omega_2,\ldots$ are defined, i.e.\ $X_{t+1}(i)=\sign(X_t(i-1)+X_t(i)+X_t(i+1))$, $t=0,1,2,\ldots$. 
    
    Let $V_0=[1,\tau_0]$ and then $V_j=[\tau_{j-1} +1,\tau_j]$. Let $S=\max\{j \geq 0:\tau_j \leq n\}$ (with $S=-1$ if $\tau_0>n$). Let also $\tilde{V}=[\tau_S+1,n]$ (with $\tau_{-1}$ taken to be $0$). Note that $\tilde{V} \subseteq V_{S+1}$ (and $\tilde{V}$ may also be empty). In short, this divides $[1,n]$ into chunks, where each chunk ends with two consecutive indexes $i-1$ and $i$ where $X_t(i-1)=X_t(i)=X^\epsilon_t(i-1)=X^\epsilon_t(i)$ for all $t$ plus a chunk $\tilde{V}$ at the end, which is empty precisely if $\tau_S=n$. On the circle, i.e.\ where $\omega$ and $\omega^\epsilon$ are defined, $V_0$ and $\tilde{V}$ are next to each other in a natural way.
    
    Note that $X_T$ and $\omega_T$ are equal on each $V_j$, $j=1,\ldots,S$. They may differ on $V_0$ and $\tilde{V}$, but this difference will not need to be controlled in any other way than the simple observation that their cardinalities are bounded.  
    
    Stability will be proven by proving that, for sufficiently small $\epsilon>0$, the number of bits where $\omega_T$ and $(\omega^\epsilon)_T$ differ is with high probability smaller than the difference between the number of $1$'s in $\omega_T$ and $n/2$.
    
    For subsets $I$ of $[n]$ and $y \in \{-1,1\}^n$, let $y(I) = (y(i))_{i \in I}$. Let $s(y(I))=\sum_{i \in I}y(i)$ and $s(y)=s(y([n]))$. Let $C_j=s((\omega^\epsilon)_T(V_j))-s(\omega_T(V_j))$, $j=0,1,2,\ldots,S$, $C_{\tilde{V}}=s((\omega^\epsilon)_T(V))-s(\omega_T(\tilde{V}))$ so that $C_{\tilde{V}}+\sum_{j=0}^S C_j = s((\omega^\epsilon)_T)-s(\omega_T)$. Let also $C'_j=s((X^\epsilon)_T(V_j))-s(X_T(V_j))$, $j=1,2,\ldots$ and note that the $C'_j$'s are i.i.d.\ and that $C_j=C'_j$ for $j=1,\ldots,S$, so that $s((\omega^\epsilon)_T(V_j))-s(\omega_T(V_j))$ also equals $C_0+C_{\tilde{V}}+\sum_{j=1}^S C'_j$.
    
    Clearly all moments of $|V_j|$ are uniformly bounded. Let $\nu=\E[|V_1|]$ and let $m=\lfloor n/\nu \rfloor$. 
    Fix arbitrarily small $\rho>0$ and $\delta>0$. By the Central Limit Theorem, there is a constant $K_1 < \infty$ independent of $n$ such that with $K^- = m-K_1\sqrt{n}$ and $K^+ = m+K_1\sqrt{n}$, for large $n$
    \[\Pb\left(\sum_{j=0}^{K^-}|V_j|  \geq n \right) < \frac18 \delta,\,\,\Pb\left(\sum_{j=0}^{K^+}|V_j| \leq n \right) < \frac18 \delta\]
    so that
    \begin{equation} \label{ea}
    \Pb(K^- \leq S \leq K^+) > 1-\frac14 \delta.
    \end{equation}
    The $C'_j$'s by symmetry have mean 0. 
    Take some $j \geq 1$ and fix it until (\ref{eq}). Let $F=F_j$ be the event that there exists an index $i \in V_j$ with $X(i) \neq X^\epsilon(i)$. On $F_j^c$, $X_T(i) = (X^\epsilon)_T(i)$ for all $T$ and $i \in V_j$. Thus $C'_j=0$ on $F^c$.
    Let 
    \[\chi_\epsilon = \chi^\epsilon_j =\min\{r>0: X(\tau_{j-1}+2r) \neq X^\epsilon(\tau_{j-1}+2r) \mbox{ or } X(\tau_{j-1}+2r+1) \neq X^\epsilon(\tau_{j-1}+2r+1)\}\]
    \[\chi=\chi_j=\min\{r>0: X(\tau_{j-1}+2r) = X^\epsilon(\tau_{j-1}+2r) = X(\tau_{j-1}+2r+1) = X^\epsilon(\tau_{j-1}+2r+1) \neq X(\tau_{j-1})\}.\]
    Then $\chi$ and $\chi_\epsilon$ are geometric random variables that cannot take on the same value and $\chi \geq |V_j|/2$. 
    Also, $F \subset G := \{\chi_\epsilon < \chi\}$ and it is standard that $G$ is independent of $\min(\chi,\chi_\epsilon)$ and
    \[\Pb(G) = \frac{2\epsilon-\epsilon^2}{2\epsilon-\epsilon^2+\frac14(1-\epsilon)^2} < 10\epsilon.\]
    Hence
    \begin{align*}
        \Var(C'_j) &\leq \E[{C'_j}^{2}] \leq
        4E[|V_j|^2 \mathbf{1}_{F}]
        \leq 4\E[\chi^2\mathbf{1}_{G}] \\
        &= 4\E[\E[\chi^2 \mathbf{1}_G|\min(\chi,\chi^\epsilon)]] \\
        &= 4\E[\Pb(G|\min(\chi,\chi^\epsilon))\E[\chi|\min(\chi,\chi^\epsilon),G]] \\
        &=4\Pb(G)\E[(\min(\chi,\chi^\epsilon)+Y)^2] \\
        &< 40\epsilon \E[(\min(\chi,\chi^\epsilon)+Y)^2],
    \end{align*}
    where $Y$ is a copy of $\chi$ that is independent of $(\chi,\chi_\epsilon)$. This gives
    \begin{equation} \label{eq} 
    \Var(C'_j) < 160\epsilon \E[\chi^2] < 16000\epsilon.
    \end{equation}

    It follows that 
    \[\Var\left[ \sum_{j=1}^{K^-} C'_j \right] < 16000 \epsilon m.\]
    By the Central Limit Theorem, for sufficiently large constant $M_5$
    \[\Pb\left(\left| \sum_{j=1}^{K^-} C'_j \right| > M_5\epsilon^{1/2}\sqrt{n}\right) < \frac{\delta}{4}.\]
    Taking $\epsilon$ sufficiently small, we get
    \begin{equation} \label{eb}
    \Pb\left(\left| \sum_{j=1}^{K^-} C'_j \right| < \frac12 \rho \sqrt{n} \right) > 1-\frac14\delta.
    \end{equation}
    
    Kolmogorov's inequality gives
    \begin{equation} \label{ec}
    \Pb\left(\max_K \Big| \sum_{j=K^-}^{K} C'_j \Big| > n^{1/3}\right) < \frac14 \rho
    \end{equation}
    for $n$ large.
    Adding that $\Pb(|C_0+C_{\tilde{V}}|>a_n) \leq \Pb(|V_0|+|\tilde{V}|>a_n) \rightarrow 0$ for any $a_n \rightarrow \infty$ to (\ref{ea}), (\ref{eb}) and (\ref{ec}) and summarising and recalling $C_j=C'_j$ for $1 \leq j \leq S$, we get
    \begin{equation} \label{eq:small_difference}
    \Pb\left( \, \left|C_0 + C_{\tilde{V}} + \sum_{j=1}^S C'_j\right| < \rho \sqrt{n}\right) = \Pb\left(|s(\omega^\epsilon)_T)-s(\omega_T) | <\rho \sqrt{n} \right)> 1-\delta
    \end{equation}
    for $\epsilon$ sufficiently small and $n$ sufficiently large.
    
    \medskip 
    The second part is very similar, but slightly easier.
    Let $\xi^+_{-1}=0$. Define for $j=0,1,2,\ldots$ recursively
  \[\xi_j^-=\min\{i>\xi_{j-1}^+ +1:X_0(i-1)=X_0(i)=-1\},\, \xi_j^+=\min\{i>\xi_j^- +1:X_0(i-1)=X_0(i)=1\}.\]
  Let $U_j=[\xi^+_{j-1}+1,\xi^+_j]$, $j=0,1,2,\ldots$, and let $\mu=\E[|U_j|]$. Let $R=\max\{j:\xi^+_j \leq n\}$. Let also $\tilde{U} = [\xi^+_R+1,n]$. Since $|U_j|$ is clearly bounded stochastically by two times a sum of two independent geometric(1/4) random variables, all moments of $|U_j|$ are finite. 
  Let $m=\lfloor n/\mu \rfloor$ and fix an arbitrarily small $\delta>0$. The Central Limit Theorem gives that for a sufficiently large constant $K_2$, 
  \begin{equation} \label{ed}
  \Pb(L^- < R < L^+)>1-\frac14 \delta,
  \end{equation}
  where $L^-=m-K_2\sqrt{n}$ and $L^+=m+K_2\sqrt{n}$.
  Let $D_j=s(\omega_T(U_j)) = \sum_{i=\xi_{j-1}+1}^{\xi_j} \omega_T(i)$, $j=0,1,2,\ldots,R$, $D_{\tilde{U}}=s(\omega_T(\tilde{U}))$ and $D'_j=s(X_T(U_j))$, $j=1,2,\ldots$. The $D_j$ are i.i.d.\ with mean 0 (by symmetry) and, since $|D_j| \leq |U_j|$, finite and clearly nonzero variance. Since $X_T=\omega_T$ on $U_1,\ldots,U_R$, it also holds that $D'_j=D_j$ for $j=1,\ldots,R$ and $s(\omega_T)=D_0+D_{\tilde{U}}+\sum_{j=1}^RD'_j$. 
  By the Central Limit Theorem and sufficiently small $\rho>0$,
  \begin{equation} \label{ee}
  \Pb\left( \left| \sum_{j=1}^{L^-} D'_j \right| > 3\rho\sqrt{n} \right) > 1-\frac14 \delta.
  \end{equation}
  Also, for large $n$ by Kolmogorov's inequality
  \begin{equation} \label{ef}
  \Pb\left(\max_L \Big|\sum_{j=L^-}^{L} D'_j \Big| < n^{1/3} \right) > 1-\frac14 \delta.
  \end{equation}
  Summing up (\ref{ed}), (\ref{ee}) and (\ref{ef}), we get for large $n$ and sufficiently small $\rho>0$,
  \begin{equation} \label{eg}
  \Pb\left(\Big| \sum_{j=1}^R D_j \Big| > 2\rho\sqrt{n}\right) > 1-\frac34\delta.
  \end{equation}
  Also, for any $a_n \rightarrow \infty$, $P(|U_1|+|U_{R+1}|<a_n) \rightarrow 1$. This gives in conjunction with (\ref{eg}) for sufficiently large $n$,
  \[\Pb\left(|s(\omega_T)| > 2\rho\sqrt{n}\right) > 1-\delta.\]
  Taking $\rho$ also small enough to satisfy (\ref{eq:small_difference}), we get for sufficiently large $n$
  \[\Pb(\maj(\omega_T) \neq \maj((\omega^\epsilon)_T)) < 2\delta.\]
  This proves noise stability.
  
   \end{proof}
   
   {\bf Remark.} With similar arguments as for the non-cycle case, it is easy to see that the outputs from each layer will soon freeze in a configuration that is easily determined by the following observation. Suppose $\omega_0(k-1) = \omega_0(k) = 1$ and let $l = \min\{j \geq k+2 \: : \: \omega_0(j-1) = \omega_0(j) = -1\}$. Let also $m = \max\{k+2 \leq j \leq l-2 \: : \: \omega_0(j-1) = \omega_0(j) = 1\}$. Then for $t \geq d$ (note that $l-m$ is even), we have $\omega_t(k+1) = \hdots = \omega_t((l+m-2)/2) = 1$, $\omega_t((l+m)/2) = \hdots = \omega_t(l) = -1$, where $d = (1/2)\max\{j-i:k+2 \leq i \leq j \leq \ell: \omega_0(i)=-1,\omega_0(i+1)=1,\hdots,\omega_0(j-1)=-1,\omega_0(j)=1\}$. Of course the analogous thing with signs reversed holds.  
    Taking $\bar{d}$ as the maximum of all such $d$'s over the whole input, $\omega_t = \omega_{\bar{d}}$ for all $t \geq \bar{d}$. It is easy to see that $\bar{d}$ is of order $\log n$.

    \smallskip

    As in Section \ref{sec:stride1}, the result can be generalised to random $\theta_t$. This is easily done by noticing that if $\theta_t$ represents a dictator function, each note at layer $t$ only has one input from layer $t-1$ with no overlap between nodes. This means that we can effectively skip that layer and indeed the whole graph can be collapsed into a less deep graph with the same width where each layer expresses regular 3-majority. The result is summarised in the following corollary.
    \begin{corollary}
        The iterated 3-majority with stride 1 and with random $\theta_t$ chosen according to any probability distribution on the $n$-cycle is quenched and annealed noise stable. 
    \end{corollary}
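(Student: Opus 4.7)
The plan is to reduce the random-weight cyclic case to the preceding deterministic cyclic theorem by means of a clean pointwise decomposition of $f_n$ conditional on $\Theta=(\theta_1,\ldots,\theta_{T_n})$. First I will recall that any $3$-input weighted majority with sign activation expresses one of eight functions: $\pm$ regular $3$-majority or $\pm$ dictator on one of the three inputs. On the $n$-cycle a positive dictator layer that selects the $j$-th child, $j\in\{-1,0,1\}$, is exactly the cyclic shift $S_j$ acting on the layer. Sign flips can be written as $-\operatorname{id}$, and the key commutation relations are that regular majority $M$ and shifts commute, $M\circ S_k=S_k\circ M$, and both $M$ and $S_k$ anti-commute with $-\operatorname{id}$, i.e.\ $M\circ(-\operatorname{id})=(-\operatorname{id})\circ M$. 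Finally the outer aggregation $\maj$ satisfies $\maj(S_k\omega)=\maj(\omega)$ and $\maj(-\omega)=-\maj(\omega)$.

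Using these commutation relations, any composition of layers drawn from the eight possibilities collapses pointwise (for each fixed input $\omega$) to
\[
f_n(\omega)=\sigma(\Theta)\,\maj\!\bigl(M^{T'(\Theta)}(\omega)\bigr)=\sigma(\Theta)\,\tilde f_{T'(\Theta)}(\omega),
\]
where $\sigma(\Theta)\in\{-1,+1\}$, $T'(\Theta)\in\{0,1,\ldots,T_n\}$ is the number of genuine majority layers in $\Theta$, and $\tilde f_T$ denotes the deterministic iterated cyclic stride-$1$ $3$-majority with $T$ majority layers (with the convention that $\tilde f_0$ is a single-coordinate function obtained by letting all shifts collapse into $\maj$, which is trivially noise stable). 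The preceding theorem established that $\{\tilde f_T\}$ is noise stable for any choice of $T=T_n$, and inspection of its proof shows that for every $\delta>0$ there exists $\epsilon>0$ independent of both $n$ and $T$ with $\Pb(\tilde f_T(\omega)\neq \tilde f_T(\omega^\epsilon))\leq\delta$.

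Combining these two ingredients yields $\Pb_{\omega,\omega^\epsilon}(f_n(\omega)\neq f_n(\omega^\epsilon)\mid\Theta)\leq\delta$ \emph{deterministically} in $\Theta$, which is even stronger than what quenched noise stability requires; annealed stability then follows from Theorem~\ref{th:relations_between_annealed_and_quenched}(ii). The only real obstacle is bookkeeping: one must carefully propagate the sign and shift operators past the majority layers, checking each of the eight layer types against each of the two commutation rules, to arrive at the stated pointwise identity $f_n=\sigma(\Theta)\,\tilde f_{T'(\Theta)}$. Once this identity is in place there is no further probabilistic content — every remaining bound is supplied by the preceding deterministic cyclic theorem.
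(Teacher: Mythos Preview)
Your approach is essentially the paper's: both observe that a dictator layer on the $n$-cycle acts as a cyclic shift (possibly composed with a global sign flip), commutes with the majority operator $M$, and can therefore be collapsed out, reducing the random network to the deterministic stride-$1$ cyclic theorem with $T'(\Theta)\le T_n$ genuine majority layers. Your explicit commutation bookkeeping is more detailed than the paper's one-sentence remark; the only slip is that $\tilde f_0=\maj$ (the full $n$-bit majority), not a single-coordinate function, but this is harmless since $\maj$ is noise stable.
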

    
\subsection{Convolutional iterated 3-majority with stride $2$} \label{sec:stride2}

    Here $G_n=G^{(2)}_n$, i.e. the graph in Figure \ref{fig:stride2}, and $f_n$ induced by $G_n$ are considered, but as we will see in the end, the results are easily extended to $G_n^{(2')}$. 
    There are a few crucial observations to make about $G_n$. First that at each layer $k$, $|A_{j,k}| \leq 2$ for every input node $j \in L_0$. This follows, by the definition of $G_n$, from the easily checked fact that the union of the sets of parents of two neighbouring nodes are either two neighbouring nodes or a single node. Secondly, for two nodes $u,u' \in L_k$ $D_u \cap D_{u'} = \emptyset$ if $u$ and $u'$ are not next to each other since the set of children of $u$ and $u'$ are disjoint and no child of $u$ is next to any child of $u'$. This means $f(u)$ and $f(u')$ are measurable with respect to two disjoint subsets of $\omega$ and are hence independent. We can now formulate the following lemma.
    
    \begin{lemma}
    \label{lem:stride2}
    Fix an arbitrary layer $L_k$ and let $u_j=v_{k,j}$ be the nodes in $L_k$ enumerated from left to right. Also, let $S \subseteq \{1,\hdots,|L_k|\}$. Then
    \[\Pb(\forall j \in S: f(u_j)=1) \geq \frac{1}{2^{|S|}}.\]
    \end{lemma}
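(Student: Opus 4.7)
The plan is to deduce this from the Harris–FKG inequality applied to the product measure on $\{-1,1\}^n$. Observe first that for each node $u \in L_k$, the Boolean function $\omega \mapsto f(u)$ is monotone nondecreasing in $\omega$, because $f(u)$ is obtained by iterating the 3-majority function, which is itself monotone, over the subtree $D_u$. Consequently every event $\{f(u_j)=1\}$ is an increasing event with respect to the coordinate-wise partial order on $\{-1,1\}^n$.

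Next I would verify that the individual probability equals $1/2$. Since the input bits are i.i.d.\ uniform on $\{-1,1\}$, the distribution of $\omega$ is invariant under global sign flip. On the other hand, the iterated 3-majority is an odd function (reversing all signs in a layer reverses all signs in the next layer), so $f(u_j)(-\omega) = -f(u_j)(\omega)$. Pairing these two facts gives $\Pb(f(u_j)=1) = \Pb(f(u_j)=-1) = 1/2$ for every $j$.

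With these two ingredients, the conclusion is immediate from the Harris–FKG inequality for i.i.d.\ Bernoulli variables, which states that increasing events on a product space are positively correlated:
\[\Pb\left(\bigcap_{j \in S} \{f(u_j)=1\}\right) \;\geq\; \prod_{j \in S} \Pb(f(u_j)=1) \;=\; \frac{1}{2^{|S|}}.\]
The bound becomes equality for subsets $S$ consisting of pairwise non-adjacent nodes, since their descendant sets are disjoint (as noted in the paragraph preceding the lemma) and the events are then genuinely independent; the possible strict inequality for subsets containing adjacent nodes reflects the shared descendants, and positive correlation in precisely the direction needed. There is no real obstacle here beyond recognising monotonicity of iterated majority and symmetry of the input distribution; the geometric structure of $G_n^{(2)}$ (which nodes share descendants and which do not) is not actually needed for the proof — it only determines when the inequality is tight.
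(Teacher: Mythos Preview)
Your proof is correct and follows exactly the same approach as the paper, which simply observes that each $f(u_j)$ is an increasing function of $\omega$ and invokes Harris' inequality. Your version just fills in the details the paper leaves implicit (monotonicity of iterated majority and $\Pb(f(u_j)=1)=1/2$ by sign symmetry).
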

    
    \begin{proof}
    
    Since $f(u_j)$ is an increasing function of $\omega$, this follows from Harris inequality.
    
    \end{proof}
    
    Now, let $\CA$ be some randomised algorithm to determine the value of $f_n(\omega)$ by querying necessary values from $\omega$ one by one. Let $J_\CA$ be the set of $\omega_i$ that are queried to determine $f(\omega)$. As in (\cite{garban2014noise}) p $90$, we make the following definition 
    \begin{definition}
        The revealment of a randomised algorithm $\CA$ for a Boolean function $f$, denoted $\delta_A$, is defined by \[\delta_\CA = \max_{j\in {1,\hdots,2^{n+1}-1}} \Pb(j \in J_\CA)\]
        
        and the revealment of a Boolean function $f$ is defined as \[\delta_f = \inf_\CA \delta_\CA. \]
    \end{definition}

    The following crucial fact holds \cite{garban2014noise} p $93$. 
    
    \begin{theorem}
    \label{revealment}
        If the revealments satisfy
        
        \[\lim_{n \rightarrow \infty} \delta_{f_n} = 0\]
        
        then $\{f_n\}$ is noise sensitive. 
    \end{theorem}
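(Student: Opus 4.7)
The plan is to proceed via Fourier analysis on the Boolean cube $\{-1,1\}^n$. Expand $f_n = \sum_{S \subseteq [n]} \hat{f}_n(S)\chi_S$ with $\chi_S(\omega) = \prod_{i \in S}\omega(i)$; the standard identity
\[\Cov(f_n(\omega),f_n(\omega^\epsilon)) = \sum_{S \neq \emptyset} \hat{f}_n(S)^2 (1-2\epsilon)^{|S|}\]
reduces noise sensitivity (in the sense of Definition \ref{noise_sensitive}) to showing that for every fixed $k \geq 1$, $\sum_{1 \leq |S| \leq k} \hat{f}_n(S)^2 \to 0$ as $n \to \infty$. Indeed, since $\sum_S \hat{f}_n(S)^2 = 1$, the tail $\sum_{|S|>k} \hat{f}_n(S)^2(1-2\epsilon)^{|S|}$ is bounded by $(1-2\epsilon)^{k+1}$, which vanishes as $k \to \infty$ for any fixed $\epsilon \in (0,1/2]$.

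The key step is the Schramm--Steif inequality: for any randomised algorithm $\CA$ computing $f_n$ and any level $k \geq 1$,
\[\sum_{|S|=k} \hat{f}_n(S)^2 \leq k\, \delta_\CA.\]
I would prove this by relating the Fourier coefficient $\hat{f}_n(S) = \E[f_n(\omega)\chi_S(\omega)]$ to the algorithm's natural filtration $\mathcal{F}_t = \sigma(\text{first } t \text{ queries and their answers})$. Since $f_n(\omega)$ is measurable with respect to $\sigma(\omega(j): j \in J_\CA)$, write $f_n$ as a sum of martingale increments along the query order and decompose $\chi_S$ correspondingly. Applying Cauchy--Schwarz and the orthogonality of the Fourier basis, each index $i \in S$ contributes a factor bounded by $\Pb(i \in J_\CA) \leq \delta_\CA$, while summing over the $k$ positions in $S$ yields the factor $k$ on the right-hand side.

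Given the Schramm--Steif inequality, the theorem follows at once: for any fixed $k$,
\[\sum_{1 \leq |S| \leq k} \hat{f}_n(S)^2 \;\leq\; \sum_{j=1}^{k} j\, \delta_{f_n} \;=\; \binom{k+1}{2}\, \delta_{f_n} \;\longrightarrow\; 0\]
as $n \to \infty$, using the hypothesis $\delta_{f_n} \to 0$. Combining with the tail bound from the first paragraph and letting first $n \to \infty$ and then $k \to \infty$ gives $\Cov(f_n(\omega), f_n(\omega^\epsilon)) \to 0$ for every $\epsilon \in (0,1/2]$, which is noise sensitivity. The main obstacle is establishing the Schramm--Steif inequality itself; the martingale bookkeeping relating Fourier mass at level $k$ to per-coordinate query probabilities is the nontrivial ingredient, whereas the reduction to low-degree Fourier tails and the final summation are routine.
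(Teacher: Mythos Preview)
The paper does not prove this theorem at all: it is quoted as a known result from \cite{garban2014noise}, p.~93 (the Schramm--Steif revealment theorem), and is used as a black box in Section~\ref{sec:stride2}. Your outline is precisely the standard route to that result---Fourier expansion, reduction of noise sensitivity to vanishing low-degree weight, and the Schramm--Steif inequality $\sum_{|S|=k}\hat f(S)^2 \le k\,\delta_\CA$---so in that sense you are aligned with the literature the paper cites rather than diverging from it.

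One caveat: your sketch of the Schramm--Steif inequality itself is too loose to stand as a proof. The sentence ``each index $i\in S$ contributes a factor bounded by $\Pb(i\in J_\CA)\le\delta_\CA$'' suggests a product bound $\delta_\CA^{|S|}$, which is both false and not what you then use. The actual argument fixes $k$, writes $\hat f(S)=\E[f\chi_S]$, conditions on the random bits of the algorithm, and uses that on the event $\{S\not\subseteq J_\CA\}$ the conditional expectation $\E[f\chi_S\mid J_\CA,\,(\omega(j))_{j\in J_\CA}]$ vanishes; one then applies Cauchy--Schwarz and Parseval to the function $g=\sum_{|S|=k}\hat f(S)\chi_S$ to extract a single factor of $\delta_\CA$ together with the multiplicity $k$ from a union bound over the coordinates of $S$. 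If you intend to include a proof rather than a citation, that step needs to be made precise; the rest of your argument (the tail bound $(1-2\epsilon)^{k+1}$ and the summation $\sum_{j\le k} j\,\delta_{f_n}$) is correct and routine.
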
 
    
    We can now state the following theorem.
    
    \begin{theorem}
        \label{th:3-maj_stride_2}
        The sequence of convolutional iterated $3$-majority function on $G^{(2)}_n$ with stride $2$ is noise sensitive. This also holds on $G^{(2')}_n$.
    \end{theorem}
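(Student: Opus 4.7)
The plan is to apply Theorem~\ref{revealment}: it suffices to exhibit, for each $n$, a randomised algorithm $\CA_n$ computing $f_n$ whose revealment $\delta_{\CA_n}$ tends to $0$. The natural candidate is the recursive randomised $3$-majority algorithm used by Benjamini--Kalai--Schramm for the no-overlap ternary tree: to evaluate $f(v)$ at an internal node $v$ with children $c_1,c_2,c_3$, $\CA_n$ picks two of the three children uniformly at random, recursively evaluates their $f$-values, returns the common value if they agree, and otherwise recursively evaluates the third child. At layer $0$ the algorithm queries the input bit directly.

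To bound $\delta_{\CA_n}$ I would fix a leaf $v_{0,j}$ and work by induction on layers, using the structural observation recorded just before Lemma~\ref{lem:stride2}, namely $|A_{v_{0,j},k}|\leq 2$ for every $k$: the ancestors of $v_{0,j}$ at any given layer form a singleton or a pair of adjacent siblings. Let $p_k$ denote the probability that $v_{0,j}$ is queried when $\CA_n$ is invoked at a node at layer $k$; one tracks separately the ``one active ancestor'' and ``two active ancestors'' cases. Lemma~\ref{lem:stride2} (Harris/FKG) bounds from below the joint probability that two adjacent nodes take the same value, and hence bounds from above the probability that a third sibling-child must be entered by the algorithm. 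Feeding this into an inclusion--exclusion calculation on the ancestor DAG of $v_{0,j}$ should yield a decay estimate forcing $p_n \to 0$, and therefore $\delta_{\CA_n}\to 0$.

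The main obstacle is that a naive union bound across the two ancestor chains of $v_{0,j}$ gives a per-layer factor $2\cdot(5/6)>1$, which is useless; the argument therefore must extract meaningful cancellation from the inclusion--exclusion correction, namely the joint probability that \emph{both} chains simultaneously visit $v_{0,j}$. This is where Lemma~\ref{lem:stride2} really pays off: it controls precisely those joint probabilities by forcing adjacent sibling values to be positively correlated, so the ``third child enters'' event is strictly less likely than $1/2$ and the correction term is of the right order to kill the blow-up factor.

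Once $\delta_{\CA_n}\to 0$ is established, noise sensitivity of $\{f_n\}$ on $G_n^{(2)}$ follows immediately from Theorem~\ref{revealment}. The cyclic case $G_n^{(2')}$ is handled by the same algorithm and the same estimate, since both the structural bound $|A_{v,k}|\leq 2$ and Lemma~\ref{lem:stride2} continue to hold on the cycle, with only cosmetic modifications to the boundary/wrap-around bookkeeping.
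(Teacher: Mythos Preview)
Your overall strategy---construct a randomised algorithm and invoke Theorem~\ref{revealment}---matches the paper's, and the structural fact $|A_{v_{0,j},k}|\le 2$ is indeed the key input. But the proposal has a genuine gap at precisely the point you flag: you never actually carry out the inclusion--exclusion argument, and it is far from clear that the naive BKS ``pick two random children'' algorithm admits one. Once the leaf has two active ancestors at a layer, the events ``queried via the left chain'' and ``queried via the right chain'' are neither independent nor cleanly decoupled, since the two ancestors share a child and hence share both bit-randomness and algorithmic randomness below. Lemma~\ref{lem:stride2} gives positive correlation of \emph{values} at adjacent nodes, but what inclusion--exclusion needs is a quantitative \emph{lower} bound on the joint querying probability $\Pb(\text{both chains reach }v_{0,j})$ large enough to beat the factor $2\cdot(5/6)$; you do not supply one, and Harris points in the wrong direction for this.

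The paper sidesteps the obstacle with a different algorithm. It groups layers in blocks of three and, within a block, runs a deliberately \emph{forgetful} procedure: the (at most $15$) nodes three layers down are visited in a uniformly random order, and information learned while evaluating one ancestor in $W$ is discarded when evaluating another. Forgetfulness makes the random permutation independent of which bits are ultimately queried, so one can isolate a fixed-probability event $E\cap F$---the nodes in $A^+_{j,3m}$ come last in each relevant permutation, and the remaining $\le 18$ nodes all take a common value (the latter bounded below by Lemma~\ref{lem:stride2})---on which $j$ is certainly not needed. This yields a per-block contraction $q_{m+1}\le\bigl(1-1/(36\binom{15}{4}^{2}2^{18})\bigr)q_m$, hence $q_{n/3}\to 0$. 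The constants are crude, but the argument never has to control the delicate joint querying probabilities your route requires. To repair your approach you would have to either make the cancellation rigorous with an explicit bound, or modify the algorithm (e.g.\ by introducing forgetfulness) to recover enough independence for a clean recursion.
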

    \begin{proof}
    
    Let $n$ be fixed, and for now, a multiple of three. We recursively define an algorithm $\CA(m,W)$ that for all $m = 0,\hdots ,n/3$ and all $W \subseteq L_{3m}$, finds $f(w)$, $w \in W$ in random order. The algorithm goes as follows. 

{\bf Starting step:} $\CA(0,W)$: Query all bits in $W \subseteq L_0$ in a random uniform order.

\smallskip

{\bf Inductive step:} $\CA(m+1,W)$: Let $o$ be a uniform random permutation of $W$. Now 
recursively find $f(o(\ell))$, $\ell =1,\ldots,|W|$ by querying nodes in $D_{o(\ell),3m}$ with $\CA(m,D_{o(\ell),3m})$ with the modification that when and if, in the process of doing so, encountering a node $v \in D_{o(\ell),3m}$ such that $f(v)$ has no information to give on $f(o(\ell))$, then skip the query of $v$. When querying each $f(o(\ell))$, do not use any information gained when querying $f(o(1)),\ldots,f(o(\ell-1))$.

In other words, by this definition a node $v\in D_{W,3m}$ that has two ancestors, $a_1$ and $a_2$, in $W$ may have been queried to find $f(a_1)$ previously, but the knowledge of $f(v)$ is not used if one later needs to query $f(a_2)$ until the turn comes to $v$ in $\CA(m,D_{a_2,m})$ (if $f(v)$ can influence $f(a_2)$ at that point.) We refer to this as the algorithm is {\em forgetful}; when querying each $f(o(\ell))$, no input is known at the start of doing that. (Of course, when the turn comes to $v$, do not query $f(v)$ again, but simply do not use the knowledge of $f(v)$ until this precise moment.)

\smallskip

Note that by forgetfulness, the permutation $o$ is independent of which input bits are queried in the end. Refer to this as the algorithm being {\em permutation independent}.

\smallskip

Let $R_{m,W,j}$ be the event that $\CA(m,W)$ queries $j$ and let $q_m=\max_W\max_j \Pb(R_{m,W,j})$.

\smallskip 

Fix an arbitrary leaf $j$ and assume that $W \subseteq L_{3(m+1)}$ and $W \cap A_{j,3(m+1)} \neq \emptyset$.
We claim that $\Pb(R_{m+1,W,j}) \leq \Pb(R_{m,D_{W,3m},j})$. This follows directly from the forgetfulness of $\CA(m+1,W)$ which implies that since the full algorithm $\CA(m,D_{W,3m})$ queries every node in $D_{W,3m}$ whereas its modification when applied in $\CA(m+1,W)$ generally does not and then the forgetfulness implies that deterministically, for every possible input $\omega$, the modification only queries a subset of the input nodes of the ones queried by the full algorithm.  

In order to ever query $j$ by $\CA(m+1,W)$ it must be the case that
\begin{itemize}
    \item[(a)] At least one $a \in A_{j,3m}$ is queried when querying $w$ for some $w \in A_{a,3(m+1)}$ and
    \item[(b)] for at least one such $a$, $j$ is queried for finding $f(a)$.
\end{itemize}

For an ancestor $a \in A_{j,3(m+1)}$, let $o'_a$ be the permutation of $D_{a,3m}$ that is included by recursion in Algorithm $\CA(m,D_{a,3m})$. Let $E_a$ be the event that $o'_a(u) < o'_a(h) < o'_v(v)$ for all $u \in D_{a,3m} \setminus A^+_{j,3m}$, $h \in A_{j,3m} \cap D_{a,3m}$ and $v \in A^+_{j,3m} \cap D_{a,3m}\setminus A_{j,3m}$. Let $E = \bigcap_{a \in A_{j,3(m+1)}}E_a$.

\smallskip 

Let us lower bound $\Pb(E_a)$. Now $D_{a,3m}$ can contain either one or two ancestors of $j$ and $A^+_{j,3m}$ could be included in $D_{a,m}$ or one node in $A^+_{j,3m} \setminus A_{j,3m}$ could be outside $D_{a,m}$. The ``worst case'' for the present purpose is when
$j$ has two ancestors, $h_1,h_2 \in A_{j,3m}$ and $A^+_{j,3m} \subset D_{a,3m}$. Since $D_{a,3m}$ contains $15$ nodes we get in this case  
\[\Pb(E_a) = \frac{1}{6\binom{15}{4}}.\]
It is easy to see that in the other cases, $\Pb(E)$ is larger than the right hand side. Since $W$ may contain two ancestors of $j$, we get
\[\Pb(E) \geq \frac{1}{\left(6\binom{15}{4}\right)^2}.\]
Since the fact that probability of $\CA(m,D_{W,3m})$ querying $j$ is independent of of the order in which the nodes in $D_{W,3m}$ are queried (i.e.\ the permutation independence) and the above claim,
\[\Pb(R_{m+1,W,j}|E^c) \leq \Pb(R_{m,D_{W,3m},j})\leq q_m.\]
Let $F$ be the event that $f(u)=f(v)$ for all $u,v \in D_{A_{j,3(m+1)},3m} \setminus A^+_{j,3m}$. Then, according to Lemma \ref{lem:stride2}, $\Pb(F)\geq1/2^{18}$. On $E$, for $j$ to be queried, $F^c \cap R_{m,D_{W,3m},j}$ must necessarily occur. Also, $F$ and $R_{m,D_{W,3m},j}$ are independent. This holds since $R_{m,D_{W,3m},j}$ by forgetfulness depends only on the leafs in $D_{A_{j,3m}}$ and the randomness in the implicit random permutations when $\CA(m,a)$, $a \in A_{j,3m}$ are carried out, whereas $F$ depends on none of that. Clearly the two events are also independent of $E$. Hence we get
\[\Pb(R_{m+1,W,j}) \leq \Pb(E^c)q_m + \Pb(E)\Pb(F^c)\Pb(R_{m,D_{W,3m},j}|E) \leq (\Pb(E^c)+\Pb(E)\Pb(F^c))q_m.\]
Since this bound is independent of $W$ and $j$,
\[q_{m+1} \leq (\Pb(E^c)+\Pb(E)\Pb(F^c))q_m \leq \left(1 -\frac{1}{36 \binom{15}{4}^2 2^{18}}\right) q_m.\]

Hence 
\[q_{n/3} \leq \left(1 -\frac{1}{36 \binom{15}{4}^2 2^{18}}\right)^{n/3} \rightarrow 0\]
as desired.
Cases when $n$ is not a multiple of 3 can be completed by e.g.\ querying everything in layers $0,\ldots,3(n/3-\lfloor n/3 \rfloor)$ and then using the above algorithm from there. This gives
\[q_{n/3} \leq  \left(1 -\frac{1}{36 \binom{15}{4}^2 2^{18}}\right)^{\lfloor n/3 \rfloor} \rightarrow 0\]

which in combination with Theorem \ref{revealment} proves the theorem.
    
\end{proof}

The arguments in the proof are easily adapted to when $\theta_t$ are random and we get the following corollary.

\begin{corollary} \label{col:stride2_random}
    The iterated 3-majority function with stride $2$ is annealed and quenched noise sensitive if the distribution of $\theta_t$ is such that the probability of representing a majority is bounded away from zero.
\end{corollary}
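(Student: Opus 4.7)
The plan is to reduce to the deterministic setting of Theorem \ref{th:3-maj_stride_2} by conditioning on the realization of filter types $\Theta=(\theta_1,\ldots,\theta_T)$. As noted at the start of Section \ref{sec:conv}, each $\theta_t$ represents one of majority, negated majority, dictator, or negated dictator; by flipping signs at each layer (which does not affect noise-sensitivity questions since the composed $f_n$ remains the same up to an overall sign) we may assume each $\theta_t$ is either a regular $3$-majority or a positive dictator. By hypothesis, each $\theta_t$ is a majority layer with probability at least some $p_0>0$, and the $\theta_t$'s are independent across $t$.

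I would apply the algorithm $\CA(m,W)$ from the proof of Theorem \ref{th:3-maj_stride_2}, modified so that whenever it encounters a dictator layer it queries only the single child indicated by the dictator (no random permutation step there). Keeping the same $3$-layer block structure $[3m,3(m+1)]$, call such a block \emph{good} if all three of its layers are majority layers. On each good block the original computation applies verbatim and yields $q_{m+1}\leq (1-c)q_m$ for the same constant $c=1/(36\binom{15}{4}^{2} 2^{18})$. On each \emph{bad} block (containing at least one dictator layer), the effective descendant set $D_{o(\ell),3m}$ of each $o(\ell)$ is a strict subset of what it would be in the all-majority case, and forgetfulness ensures the modified algorithm's queries are a subset of what they would be in an all-majority block covering the same ancestors; this gives $q_{m+1}\leq q_m$.

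Since the $\theta_t$'s are independent, each block is good independently with probability at least $p_0^{3}>0$. A Chernoff bound then shows that among the first $\lfloor n/3\rfloor$ blocks the number $N$ of good blocks satisfies $N\geq p_0^{3}n/6$ with probability at least $1-e^{-c'n}$ for some $c'=c'(p_0)>0$. On this event $\delta_{f_n}\leq q_{\lfloor n/3\rfloor}\leq (1-c)^{p_0^{3}n/6}\to 0$, and applying Theorem \ref{revealment} pointwise in $\Theta$ gives quenched noise sensitivity. For annealed noise sensitivity I would use the conditional covariance identity
\[\Cov_{\Theta,\omega,\omega^\epsilon}(f_n(\omega),f_n(\omega^\epsilon)) = \E_\Theta\!\left[\Cov_{\omega,\omega^\epsilon}(f_n(\omega),f_n(\omega^\epsilon)\mid\Theta)\right] + \Cov_\Theta\!\left(\E_\omega[f_n(\omega)\mid\Theta],\E_{\omega^\epsilon}[f_n(\omega^\epsilon)\mid\Theta]\right),\]
observing that both majority and dictator filters are odd, so $f_n$ is odd in $\omega$ for every realization of $\Theta$; hence the second term vanishes and the first tends to $0$ by bounded convergence using the quenched result.

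The main obstacle is making the $q_{m+1}\leq q_m$ bound on bad blocks fully rigorous. The underlying fact is that a dictator layer only collapses branching, but formalising this requires constructing, for each realization of $\Theta$, an auxiliary deterministic stride-$2$ iterated $3$-majority graph with only majority layers (obtained by replacing each dictator layer with an appropriate relabeling of the subsequent input positions) in which the modified algorithm makes precisely the same queries on the input leaves as it does in the original random graph. Once this bookkeeping is carried out, the remaining ingredients---the Chernoff estimate, the revealment conclusion via Theorem \ref{revealment}, and the $\omega\mapsto -\omega$ symmetry argument for the annealed version---are standard.
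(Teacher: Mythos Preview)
Your proposal is correct and follows essentially the same route as the paper: use the algorithm from Theorem \ref{th:3-maj_stride_2}, obtain the contraction $q_{m+1}\le(1-c)q_m$ on blocks where all three layers are majority, use $q_{m+1}\le q_m$ otherwise, count good blocks, and pass to the annealed statement via oddness of $f_n$. The only substantive difference is that you flag the bound $q_{m+1}\le q_m$ on bad blocks as ``the main obstacle'' and propose an auxiliary-graph construction to justify it, whereas the paper simply asserts it; in fact no extra work is needed, since the forgetfulness argument already used in the proof of Theorem \ref{th:3-maj_stride_2} (the claim $\Pb(R_{m+1,W,j})\le \Pb(R_{m,D_{W,3m},j})$) does not rely on the intermediate layers being majority layers and hence gives $q_{m+1}\le q_m$ regardless of filter type.
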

\begin{proof}
    We show that with a very high probability, $\{f_n\}$ will be such that the revealment converges to zero as $n$ grows. As previously discussed, $\theta_t$ represents either a majority or dictator function.
    Using the same algorithm as in Theorem \ref{th:3-maj_stride_2}, $q_m$ is now random depending on the filter structure, and we get that for a fixed iteration $m$ 
    \[q_{m+1} \leq \left( 1 - \frac{1}{36 \binom{15}{4}^22^{18}}  \right) q_m\]
    if the layers $3m$, $3m+1$ and $3m + 2$ all represent majority layers. 
    If this is not the case, i.e.\ if at least one of the layers represents a dictator, we instead just use that $q_{m+1} \leq q_m$. 
    Letting 
    \[C_n = \left|\left\{m\in \{1,\hdots \lfloor n/3 \rfloor\} : \theta_{3m}, \theta_{3m+1} \textit{ and } \theta_{3m+2} \textit{ represents majority functions.}\right\}\right|\]
    we get
    \[q_{n/3} \leq  \left(1 -\frac{1}{36 \binom{15}{4}^2 2^{18}}\right)^{C_n}.\]
    Since the probability of a given $\theta_t$ representing a majority is bounded away from zero, $C_n \rightarrow \infty$ in probability as $n$ grows and hence the right hand side converges to 0. This proves quenched noise sensitivity. Finally, using Theorem \ref{th:relations_between_annealed_and_quenched} and the fact that $E_{\omega}[f_n(\omega)] = 0$ for all $\theta_t$, annealed noise sensitivity follows. 

\end{proof}
     
\subsection{Extensions to convolutional iterated $2k+1$ majority with overlap} \label{sec:extensions}
    
    Obviously the problems just treated for convolutional iterated 3-majority are equally interesting with 3 replaced with $2k+1$ for some integer $k \geq 2$. In this case the sensitivity is trivial if the stride $s > 2k+1$ since this would correspond to the regular $2k+1$ iterated majority with no charred nodes. For smaller $s$ the corresponding graph $G_{n,k,s}$ is defined as
    $G_{n,k,s} = (V_{n,k,s},E_{n,k,s})$, where 
        
        \begin{align*}
         V_{n,k,s} &= \{v_{n,0},v_{n-1,-k},v_{n-1,-(k-1)},\hdots,v_{n-1,k},v_{n-2,-2k},\hdots,v_{n-2,2k},\hdots ,v_{0,-kn},\hdots,v_{0,kn}\} \text{ and }\\
         E_{n,k,s} &= \{(v_{t,i},v_{t-1,j}) : t = n,\hdots,1,|i-j| \leq k ,v_{t,i} \text{ and }v_{t-1,j} \text{ exist}\}
        \end{align*}

    \noindent if $s = 1$ and 

    \begin{align*}
         V_{n,k,s} &= \{v_{n,1},v_{n-1,1},v_{n-1,2},\hdots, v_{n-1,2k+1}, v_{n-2,1}, \hdots , v_{n-2,\frac{2k s^2 - 2k + s -1}{(s-1)}}, \hdots, v_{0,1},\hdots,v_{0,\frac{2k s^n - 2k + s -1}{(s-1)}}\} \text{ and }\\
         E_{n,k,s} &= \{(v_{t,i},v_{t-1,j}) : t = 1,\hdots,n, j-s(i-1) \in [1,2k+1],v_{t,i} \text{ and }v_{t-1,j} \text{ exist}\}
        \end{align*}

    \noindent if $1 < s < 2k+1$.
    
    When $s > 1$, the crucial condition for the algorithm in Theorem \ref{th:3-maj_stride_2} for proving noise sensitivity is that each input node has at most two ancestors at any generation. This can in fact be generalised to the $2k+1$ iterated majority with stride $s \geq 2$. This is shown in Lemma \ref{lem:number_anc}.
    
    \begin{lemma}
    \label{lem:number_anc}
    Let $G_{n,k,s}$ be the corresponding graph to the convolutional iterated $2k+1$-majority with stride $s \geq 2$. Then $|A_{t,S}| \leq 2k$, for every $t$ and every $S$ of $m \leq 2k$ nodes next to each other. 
    \end{lemma}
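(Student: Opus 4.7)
The plan is to reduce the lemma to a one-step parent-counting statement and then iterate. The one-step statement is: for $s \geq 2$ and $k \geq 1$, if $S' \subseteq L_r$ consists of $m' \leq 2k$ nodes next to each other, then the set of nodes in $L_{r+1}$ having at least one child in $S'$ is itself a block of at most $2k$ nodes next to each other. Granting this, the lemma follows by induction on $t - \ell$, where $\ell$ is the layer containing $S$: a node $v \in L_{t+1}$ is an ancestor of some $u \in S$ iff $v$ has a child in $L_t$ that is itself an ancestor of some $u \in S$, so $A_{t+1,S}$ is exactly the parent set of $A_{t,S}$. Starting from $A_{\ell+1, S}$, which is the parent set of $S$ and hence by the one-step statement a block of at most $2k$ consecutive nodes, the one-step statement propagates the bound and the consecutiveness from layer to layer.

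To establish the one-step statement I would unpack the edge rule in the definition of $G_{n,k,s}$: since the edges are exactly those $(v_{r+1, i}, v_{r, j})$ with $j - s(i-1) \in [1, 2k+1]$, the parents of a single node $v_{r, j}$ are those $v_{r+1, i}$ whose index lies in the real interval
\[I_j \;=\; \left[\frac{j - 2k - 1}{s} + 1,\; \frac{j - 1}{s} + 1\right].\]
Hence if $S' = \{v_{r, j_0}, v_{r, j_0 + 1}, \ldots, v_{r, j_0 + m' - 1}\}$, the parent set of $S'$ consists of the integers in
\[\bigcup_{j = j_0}^{j_0 + m' - 1} I_j \;=\; \left[\frac{j_0 - 2k - 1}{s} + 1,\; \frac{j_0 + m' - 2}{s} + 1\right],\]
a real interval of length $L = (m' + 2k - 1)/s$.

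Two things then remain. For consecutiveness (no missing integers in the middle), observe that $I_{j+1}$ is $I_j$ shifted right by $1/s \leq 1$, so the integer parent sets of adjacent child nodes differ by at most one position and their union over consecutive $j$ is therefore a single block of consecutive integers. For the size bound, under $m' \leq 2k$ and $s \geq 2$ we have $L \leq (4k - 1)/2 < 2k$, and any real interval of length strictly less than $2k$ contains at most $2k$ integers. Together these give a parent set of at most $2k$ consecutive nodes, completing the one-step statement and hence the lemma.

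The argument is essentially elementary and there is no real obstacle; the only point that deserves care is verifying that the union of the integer extracts of the $I_j$'s is a block of consecutive integers rather than having gaps, but this is immediate from the $1/s \leq 1$ shift between successive intervals together with $s \geq 2$ and $k \geq 1$.
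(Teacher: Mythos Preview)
Your proof is correct and follows essentially the same route as the paper: compute the parent set of a block of consecutive nodes as the integers in a single real interval, bound the interval length by $(4k-1)/s < 2k$ to get at most $2k$ parents, and iterate layer by layer. If anything, you are slightly more careful than the paper in explicitly verifying that the parent set is again a block of \emph{consecutive} nodes (via the $1/s \leq 1$ shift), which is what makes the recursion go through; the paper leaves this implicit.
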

    \begin{proof}
        Notice that if $s > 2k$, the corresponding graph $G_{n,k,s}$ would be such that $|A_{t,j}| = 1$ for each $t$ and $j$, making the result trivial. Therefore, assume $2 \leq s \leq 2k$. 
        Now observe that given a node $v_{t,i}$ at layer $t > 0$ has $2k +1$ children which are $\{v_{t-1,s(i-1) + j} : j \in \{1,\hdots 2k+1\}\}$.
        So, for a node $v_{t+1,i}$ to be a parent to $v_{t,j}$ it must be that $j = s(i-1) + i'$ for some $i' \in \{1,\hdots 2k+1\}$. Consequently, the parents to $v_{t,j}$ is $v_{t+1,i}$ such that 
        \[\left\lceil \frac{j-2k}{s} \right\rceil + 1 \leq i \leq \left\lfloor \frac{j-1}{s} \right\rfloor + 1\] 
        such that $v_{t,i}$ exists. Let $S=\{v_{t,u},v_{t,u+1},\hdots,v_{t,u+m}\}$ where $u > 0$ and $0 \leq m \leq 2k-1$ such that $v_{t,u}$ and $v_{t,u+m}$ exist. Then 
        \begin{align*}
            |A_{t+1,S}| = \left\lfloor \frac{u+m-1}{s} \right\rfloor - \left\lceil \frac{u-2k-1}{s} \right\rceil + 1 \leq \left\lfloor \frac{u+2k-2}{2} \right\rfloor - \left\lceil \frac{u-2k-1}{2} \right\rceil + 1 = 2k
        \end{align*}
        
        The lemma now follows by using this argument recursively over $t$. 
        
    \end{proof}

    From Lemma \ref{lem:number_anc} it follows  that $|A_t,j| \leq 2k$ for all $t$ and $j$. 
    With that condition satisfied, the proof of Theorem \ref{th:3-maj_stride_2} easily generalises. Without supplying further details, we state the following theorem.
    
    \begin{theorem}
    The convolutional iterated $2k+1$-majority function with stride $s$ is noise sensitive if $s \geq 2$.
    \end{theorem}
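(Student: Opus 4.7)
The plan is to mirror the proof of Theorem \ref{th:3-maj_stride_2} essentially verbatim, replacing the $k=1$ ancestor bound with Lemma \ref{lem:number_anc} and finishing via Theorem \ref{revealment}. First I would fix a batch size $\ell = \ell(k,s)$ of layers and recursively define a forgetful algorithm $\CA(m,W)$ for $m = 0,1,\hdots,\lfloor n/\ell \rfloor$ and $W \subseteq L_{\ell m}$: pick a uniform random permutation $o$ of $W$ and, for each $o(p)$ in turn, run $\CA(m-1, D_{o(p), \ell(m-1)})$, skipping queries of descendants that can no longer influence $f(o(p))$ and, crucially, discarding any information gained while computing $f(o(1)),\ldots,f(o(p-1))$. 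As before, forgetfulness gives permutation independence and the monotonicity $\Pb(R_{m+1,W,j}) \leq \Pb(R_{m,D_{W,\ell m},j}) \leq q_m$ whenever $W \cap A_{j, \ell(m+1)} \neq \emptyset$, where $q_m := \max_{W,j}\Pb(R_{m,W,j})$.

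Second, fix a leaf $j$ and $W$ meeting $A_{j,\ell(m+1)}$. By Lemma \ref{lem:number_anc} we have $|A_{j,\ell(m+1)} \cap W| \leq 2k$ and $|A^+_{j,\ell m}| \leq 4k$. For each $a \in A_{j,\ell(m+1)} \cap W$, let $E_a$ be the event that the uniform random permutation used in the recursive call $\CA(m, D_{a,\ell m})$ first processes all nodes of $D_{a,\ell m} \setminus A^+_{j,\ell m}$, then those of $A_{j,\ell m} \cap D_{a,\ell m}$, and finally the remaining neighbours in $A^+_{j,\ell m} \setminus A_{j,\ell m}$. Since $|D_{a,\ell m}|$ is bounded by a constant depending only on $k$ and $s$, $\Pb(E_a)$, and hence $\Pb(E) := \Pb\bigl(\bigcap_a E_a\bigr)$, is bounded below by a positive constant $c_1(k,s)$. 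Next let $F$ be the event that $f$ equals $+1$ throughout $D_{A_{j,\ell(m+1)}, \ell m} \setminus A^+_{j,\ell m}$: by Harris' inequality as in Lemma \ref{lem:stride2}, $\Pb(F) \geq 2^{-N(k,s)}$. The batch size $\ell$ is chosen large enough that on $E \cap F$, once the outer region has been revealed as all $+1$ and the values in $A_{j,\ell m} \cap D_{a,\ell m}$ have been read, $f(a)$ is already forced, so that the recursion skips all remaining queries in $A^+_{j,\ell m} \setminus A_{j,\ell m}$; this is feasible because $|A^+_{j,\ell m}| \leq 4k$ is bounded while the width of $D_{a,\ell m}$ grows like $s^\ell$.

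Third, $E$ and $F$ are independent of each other and of $R_{m,D_{W,\ell m},j}$ (by forgetfulness and permutation independence), and on $E \cap F$ the leaf $j$ is never queried, yielding the recursion
\[
q_{m+1} \leq \Pb(E^c)q_m + \Pb(E)\Pb(F^c)q_m = \bigl(1 - \Pb(E)\Pb(F)\bigr)q_m \leq \bigl(1 - c(k,s)\bigr) q_m
\]
with $c(k,s) > 0$ independent of $m$ and $n$. Iterating and handling the leftover layers (when $n$ is not a multiple of $\ell$) as in Theorem \ref{th:3-maj_stride_2} gives $q_{\lfloor n/\ell \rfloor} \to 0$, so $\delta_{f_n} \to 0$ and Theorem \ref{revealment} delivers noise sensitivity. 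The main obstacle is the combinatorial check that $\ell = \ell(k,s)$ can indeed be chosen so that the frozen outer region of $D_{a,\ell m}$ together with the ancestor values in $A_{j,\ell m} \cap D_{a,\ell m}$ forces $f(a)$ irrespective of what happens in $A^+_{j,\ell m} \setminus A_{j,\ell m}$; once that combinatorial fact is established, the probability bookkeeping of Theorem \ref{th:3-maj_stride_2} applies line for line.
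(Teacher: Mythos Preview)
Your approach is the same as the paper's—the paper simply asserts that the revealment argument of Theorem~\ref{th:3-maj_stride_2} ``easily generalises'' via Lemma~\ref{lem:number_anc} and supplies no further details—so your outline is on the right track and more explicit than the paper itself. There is, however, a genuine logical slip in the sketch.

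You phrase the key combinatorial condition as: for suitable $\ell$, the frozen outer region \emph{together with} the values on $A_{j,\ell m}\cap D_{a,\ell m}$ forces $f(a)$, so that the algorithm then skips $A^+_{j,\ell m}\setminus A_{j,\ell m}$. But this does \emph{not} yield ``on $E\cap F$ the leaf $j$ is never queried'': the place where the recursion can reach down to $j$ is precisely when the ancestors in $A_{j,\ell m}\cap D_{a,\ell m}$ are processed. If those are read, the best bound available on $E\cap F$ is again the trivial $q_m$, and the contraction $q_{m+1}\le(1-\Pb(E)\Pb(F))q_m$ collapses.

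What is actually needed—and what the $k=1$ proof uses implicitly—is the stronger statement that the outer region \emph{alone} forces $f(a)$, irrespective of all of $A^+_{j,\ell m}$. Then on $E\cap F$ the algorithm halts after the outer region and skips the ancestors too, so $j$ is genuinely never reached. This stronger fact does hold: a contiguous block of $r$ undetermined nodes at one layer yields at most $\lceil r/s\rceil$ undetermined nodes one layer up (a node above is undetermined only if at least $k+1$ of its $2k+1$ children lie in the block), and once $r\le k$ the next layer is entirely forced. Starting from $r_0\le 2k+2$ one checks that already $\ell=3$ suffices for every $k\ge1$ and $s\ge2$. With this correction, your recursion and the rest of the argument go through as written.
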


    The results for noise stability also go through very easily if the stride is $1$ when $\theta_t$ are non-random. This by noticing that if there are $k+1$ sequentially equal input bits with the same sign, no information can be transferred from one side to the other. In the non cyclic case, input bits "outside" such a sequence has no influence on the final result. So by a high probability, the function is determined by the central bits, which with a high probability does not have a disagreement between $\omega$ and $\omega^\epsilon$. 
    
    A similar result can be stated for the cyclic case where the proof unfolds in a similar fashion but with a slightly different definition on $\tau_j$. All in all, we can state the following theorem.
    
    \begin{theorem}
    The convolutional iterated $2k+1$-majority function with stride $1$ is noise stable both with and without cyclical convention.
    \end{theorem}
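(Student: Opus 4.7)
My plan is to generalise the two stride-$1$ arguments (Theorem \ref{th:stride1_non_c} and its cyclic counterpart in Section \ref{sec:strid1n}) by replacing the ``two consecutive equal bits'' used when $k=1$ with what I will call a \emph{$k$-barrier}: a block $\omega(a)=\omega(a+1)=\cdots=\omega(a+k)=v$ of $k+1$ consecutive equal inputs. The two structural facts about a $k$-barrier in $G_{n,k,1}$ that drive both halves of the proof are: (P) \emph{Persistence} --- for every layer $t$ and every $j\in[a,a+k]$ on the graph, $v_{t,j}=v$, since the window of size $2k+1$ at such a $j$ contains all $k+1$ barrier bits; and (B) \emph{Blocking} --- for $j<a$ (resp.\ $j>a+k$) and every $t$, $v_{t,j}$ is a function only of $(\omega(i))_{i<a}$ (resp.\ $(\omega(i))_{i>a+k}$) together with the constant $v$, because the layer-$t$ window at such a $j$ fits inside $(-\infty,a+k-1]$ (resp.\ $[a+1,\infty)$) and meets the barrier only at cells of known value $v$. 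Each of (P) and (B) follows from a one-line induction on $t$.

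For the non-cyclic case, imitating Lemma \ref{lemma:barrier_function}, let $L(\omega)$ be the largest $a\le-k-1$ and $R(\omega)$ the smallest $a\ge 1$ for which $\omega(a)=\cdots=\omega(a+k)$ (set to the respective graph boundary if no such barrier exists). By (P) and (B) applied to these two innermost barriers, $f_n(\omega)$ is determined by $\omega|_{[L,R+k]}$, and moreover agreement of $\omega^\epsilon$ with $\omega$ on $[L,R+k]$ forces $L(\omega^\epsilon)=L(\omega)$ and $R(\omega^\epsilon)=R(\omega)$ (any more-inner barrier in $\omega^\epsilon$ would lie inside $[L,R+k]$ and thus already be a barrier in $\omega$, contradicting the definition of $L,R$), so $f_n(\omega^\epsilon)=f_n(\omega)$ on this event. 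Hence
\[\Pb(f_n(\omega)\neq f_n(\omega^\epsilon))\le\E\bigl[1-(1-\epsilon)^{R-L+k+1}\bigr]\le\epsilon\cdot\E[R-L+k+1].\]
A union bound over disjoint candidate $(k+1)$-blocks shows that $|L|$ and $R$ are each stochastically dominated by $(k+1)$ times a geometric random variable with parameter $2^{-k}$, and the event of no barrier on one side has probability at most $(1-2^{-k})^{\lfloor kn/(k+1)\rfloor}$, which is exponentially small in $n$. Therefore $\E[R-L+k+1]\le C_k$ for a constant $C_k$ and all sufficiently large $n$, giving $\limsup_n \Pb(f_n(\omega)\neq f_n(\omega^\epsilon))\le C_k\epsilon\to 0$ as $\epsilon\to 0$.

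For the cyclic case, I would follow the scheme of the cyclic $3$-majority proof verbatim, but replace the shared-pair stopping times there by shared-$k$-barrier stopping times:
\[\tau_0=\min\{i\ge k: X(i-k)=X(i-k+1)=\cdots=X(i)=X^\epsilon(i-k)=\cdots=X^\epsilon(i)\},\]
and recursively $\tau_j$ as the next such index at which the shared barrier value equals $-X(\tau_{j-1})$. Since a shared $k$-barrier persists in both $X$ and $X^\epsilon$ by (P) and (B), on each chunk $V_j=[\tau_{j-1}+1,\tau_j]$ the infinite process $X_T$ and the cyclic process $\omega_T$ still agree, exactly as in the $k=1$ proof. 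The chunk lengths $|V_j|$ have all moments finite (geometric tail with success probability at least $2^{-k}(1-\epsilon)^{k+1}\ge 2^{-2k-1}$, uniformly in $\epsilon<1/2$), and the per-chunk contributions $C'_j=s((X^\epsilon)_T(V_j))-s(X_T(V_j))$ are i.i.d., mean zero by symmetry, and satisfy $\Var(C'_j)=O(\epsilon)$ by the same ``condition on whether some bit in the chunk is flipped'' computation as in Section \ref{sec:strid1n}, with only the auxiliary geometric stopping times enlarged by a factor of $k+1$. A CLT on $\sum C'_j$ then gives $|s((\omega^\epsilon)_T)-s(\omega_T)|$ of order $\sqrt{\epsilon n}$ with high probability, while the analogous second CLT argument, applied to runs of length $k+1$ rather than pairs, gives $|s(\omega_T)|$ of order $\sqrt n$ with high probability. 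Combining yields $\maj(\omega_T)=\maj((\omega^\epsilon)_T)$ with probability tending to $1$ as $\epsilon\to 0$, uniformly in $T$ and $n$.

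The principal obstacle I anticipate is mechanical rather than conceptual: the chunk-length moment bounds and the $O(\epsilon)$ variance estimate at general $k$ must be rederived with block size $k+1$ in place of $2$, and the constants in the two CLT computations must be tracked as explicit functions of $k$. Once (P) and (B) are in hand, these calculations are routine generalisations of the $k=1$ versions carried out in Sections \ref{sec:stride1} and \ref{sec:strid1n}.
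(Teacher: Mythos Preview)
Your proposal is correct and follows the same approach as the paper, which only sketches the argument in a few sentences: the key observation that $k+1$ consecutive equal input bits form a persistent, information-blocking barrier is exactly what the paper invokes, and you have made this precise via your properties (P) and (B). One minor point worth noting is that for the non-cyclic case you use barriers on \emph{both} sides of the root rather than a single closest-barrier formula as in Lemma~\ref{lemma:barrier_function}; this is the right move, since the explicit closest-pair description does not extend to $k\ge 2$ (for instance, an alternating block is a fixed pattern under $5$-majority), and it is also what the paper's phrase ``the function is determined by the central bits'' implicitly requires.
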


    When considering random weights 
    for $s \geq 2$, noise sensitivity still holds as long as the distribution of $\theta_t$ is such that with probability bounded away from $0$, $\theta_t$ represents ordinary majority. The arguments are identical to that of $k=1$. Thus, we can state the following corollary.
    \begin{corollary}
        If $\theta_t$ is such that the probability of representing a majority is bounded away from zero, then the convolutional iterated $2k+1$-majority function with stride $s \geq 2$ and random weights $\theta_t$ is annealed and hence also quenched noise sensitive. 
    \end{corollary}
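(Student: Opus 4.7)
The plan is to combine the forgetful randomized algorithm of Theorem \ref{th:3-maj_stride_2} (adapted for $2k+1$-majority using Lemma \ref{lem:number_anc}) with the ``skip the layer'' trick of Corollary \ref{col:stride2_random}. As in that corollary, the main work is to establish quenched noise sensitivity via the revealment bound of Theorem \ref{revealment}; annealed noise sensitivity will then follow immediately from Theorem \ref{th:relations_between_annealed_and_quenched} together with the observation that, by symmetry, $\E_\omega[f_n(\omega)]=0$ whenever every $\theta_t$ is either a majority or a dictator, so $\Var_{f_n}(\E_\omega[f_n(\omega)])=0$.

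First I would set up the quenched algorithm. Conditionally on the realization of $\{\theta_t\}$, run the same recursive forgetful and permutation-independent algorithm $\CA(m,W)$ as in the proof of Theorem \ref{th:3-maj_stride_2}, but now descending three layers at a time and over the graph $G_{n,k,s}$. By Lemma \ref{lem:number_anc}, every input node has at most $2k$ ancestors in any fixed layer, so the combinatorics of the recursion is identical to the $k=1$ case up to a constant depending only on $k$ and $s$: the analogues of the events $E$ and $F$ in the proof of Theorem \ref{th:3-maj_stride_2} exist with probabilities $\Pb(E),\Pb(F)$ bounded below by explicit constants $c_1(k,s),c_2(k,s)>0$, using Lemma \ref{lem:stride2} (which is just Harris's inequality and goes through verbatim) for the lower bound on $\Pb(F)$.

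Next I would handle the randomness of the $\theta_t$'s by the dictator-skipping trick. If one of the three filters at layers $3m,3m+1,3m+2$ is a dictator, the recursion step $\CA(m+1,\cdot)\to\CA(m,\cdot)$ still only queries a (random) subset of the descendants, so trivially $q_{m+1}\le q_m$. If, on the other hand, all three of those filters represent ordinary $2k+1$-majorities, then the combinatorial argument just outlined gives the contractive bound
\[
q_{m+1}\le\bigl(1-c_1(k,s)c_2(k,s)\bigr)\,q_m.
\]
Let $C_n$ be the number of indices $m\in\{1,\dots,\lfloor n/3\rfloor\}$ for which $\theta_{3m},\theta_{3m+1},\theta_{3m+2}$ all represent majorities. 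Iterating the two bounds yields
\[
q_{n/3}\le\bigl(1-c_1(k,s)c_2(k,s)\bigr)^{C_n}.
\]
By the assumption that the probability that a given $\theta_t$ represents a majority is bounded away from $0$, $C_n\to\infty$ in probability (in fact exponentially fast, by Chernoff), so $q_{n/3}\to 0$ in probability. Hence the revealment of $f_n$ tends to $0$ in probability, giving quenched noise sensitivity by Theorem \ref{revealment}. Finally, since each $f_n$ (whatever the realization of the $\theta_t$'s) is odd, $\E_\omega[f_n(\omega)]=0$, so $\Var_{f_n}(\E_\omega[f_n(\omega)])=0$ and Theorem \ref{th:relations_between_annealed_and_quenched}(i) upgrades quenched to annealed noise sensitivity.

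The main obstacle is the bookkeeping for the combinatorial constants $c_1(k,s),c_2(k,s)$: one needs to check that the ``worst case'' configuration in which the ancestor set of the target leaf occupies up to $2k$ neighbouring nodes still admits a uniformly positive probability event $E$ on which the forgetful ordering kills the recursion, and that Lemma \ref{lem:stride2} continues to give a uniformly positive probability that all $f(u)$ in a bounded subset of a layer agree. Both are routine once one has Lemma \ref{lem:number_anc} and the fact that the descendant set $D_{a,3m}$ has bounded cardinality depending only on $k$ and $s$, but this is where a careful version of the proof would spend its time.
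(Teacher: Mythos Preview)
Your proposal is correct and follows essentially the same approach as the paper: the paper does not give a separate proof here but simply asserts that the arguments of Corollary~\ref{col:stride2_random} go through verbatim once Lemma~\ref{lem:number_anc} guarantees the bounded-ancestor property, which is exactly what you spell out. Your final appeal to oddness of $f_n$ (regardless of the realization of the $\theta_t$'s, since each filter is $\sign$ of a homogeneous linear form) to obtain $\E_\omega[f_n(\omega)]=0$ matches the paper's use of the same fact in Corollary~\ref{col:stride2_random}.
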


    {\bf Remark.} For stride one, the arguments with random weights do not generalise well when $k > 1$.

\section{Open problems and research directions} \label{sec:conclusion}

    We have taken inspiration from observed non-robustness phenomena for DNN classifiers and have determined when a few common DNN architectures give rise to noise sensitive or noise stable classifiers. Going on focusing on the non-robustness phenomena, there are numerous further avenues to be explored. First of all, of course, there are many other DNN architectures that can be considered. Can noise sensitivity/stability results be achieved for (some of) them? If so, will that help us to design powerful DNN architectures that are robust to noise?

\smallskip

Besides that, here are a first few questions concerning the fully connected DNN models.

\begin{itemize}
    \item In this paper, the activation function is the sign function for all layers. Will replacing them with activation functions used in practice, such as the arctan function, the sigmoid function, the ReLU function, etc, cause different properties with respect to noise sensitivity/stability?
    
    \item Above, all layers are of equal width. Does it make a difference if the layers are allowed to be of different widths? Does it then matter how unequal the widths are, e.g.\ of vastly different orders as functions of $n$? Here we believe that having varying widths, does not matter no matter how much they vary, and that would be a fairly easy task to prove.  
    
    \item In Section \ref{sec:correlated} it is shown that under weak assumptions, if $1-\rho_n$ shrinks as $\log n/n$ or faster, then the resulting Boolean function will almost certainly be noise stable. On the other hand, if $1-\rho_n$ shrinks as $(\log n)^3/n$, then we typically get a noise sensitive sequence of functions. Is there a cutoff between the two cases and if so, where between $\log n/n$ and $(\log n)^3/n$ is it?
    
    \item What happens if one has access to data and train the network to fit with that? This could be interpreted in many different ways. For example the setting could be the following. Suppose that data are generated by a particular fully connected DNN of the kind in Section \ref{sec:uncorrelated} and suppose that this particular DNN expresses a noise stable function. Assume further that we get data-points one by one, where each data-point is a uniform random input together with its output. According to Theorem \ref{th:noise_sensitivity_deep_networks}, the network will almost certainly be noise sensitive at the start of training and in the end, after having seen all possible inputs, the DNN has learnt to express the true noise stable function behind the data. Where along this process does the DNN turn from sensitive to stable? Can it also flip back and forth between noise sensitive and noise stable along the way? 
    
\end{itemize}

Concerning the convolutional models, there is also more to be done for $k \geq 2$, i.e.\ for filters of size at least $5$. We have already observed that when the filter size is at least $5$, a filter can express many things besides (anti)majority and (anti)dictator function. For example with five input bits, a filter could express ``the first input bit unless all the other bits agree on the opposite''.
For stride $s \geq 2$, we showed that if the filter with at least some probability expresses a majority, then the resulting network is noise sensitive. However if regular majority is never expressed, then what? 

Also, for $k \geq 2$ and stride 1, it remains open if the resulting network is noise stable as we believe it is.

\smallskip
\if 0
Concerning the convolutional models, there is also more to be done. Here we have only considered regular $2k+1$-majorities on trees with overlap. It would of course be interesting to see what would happen if the weights were random here. We have already observed that when the filter size is $3$, corresponding to $k=1$, a filter can besides regular (anti)majority also express an (anti)dictator function. So with random weights, independent between the layers, each layer would with a given probability be a dictator layer and hence in the case $s=2$ make a qualitative difference. We believe that the resulting network would still be (annealed and hence quenched) noise sensitive and that proving this is not too difficult.
For $k \geq 2$, there are many more functions that the filter can describe besides 5-(anti)majority or (anti)dictator. We still believe that the resulting network will express a noise sensitive sequence of functions, but that proving this is quite more involved.
\fi
\smallskip

The noise sensitivity properties in Section \ref{sec:uncorrelated} and \ref{sec:correlated} are very strong and one can even input two strings from any joint distribution over $\{-1,1\}^n \times \{-1,1\}^n$ such that with probability tending to 1 with $n$ the two strings neither equal nor completely opposite. With probability $1/2$, the function described by the resulting DNN will produce different results for the two different strings. However, this is in a nonadversarial setting, i.e.\ the joint distribution of the input strings is independent of the network weights. 
Suppose an adversary gets information about the network weights and one of the input strings. Can he produce the second string by flipping bits of his own choice of the first string, with no other restriction than the expected number of flips has to be very small, so that the output of the second string differs from that of the first string? 
None of the results in this paper are concerned with questions such as these, and it would certainly be very interesting to look at that.

\section*{Achnowledgement}
The first and third authors were supported by the Wallenberg AI, Autonomous Systems and Software Program (WASP) funded by the Knut and Alice Wallenberg Foundation. The second author acknowledges the support of the Swedish Research Council, grant no. 2020-03763.

\bibliographystyle{abbrv}
\bibliography{references}

\end{document}